\def \E {\mathbb{E}}
\def \R {\mathbb{R}}
\def \P {\mathbb{P}}
\def \II {\mathbb{I}}
\def \FTL {\textsf{FTL}}
\def \FPL {\textsf{GFTPL}}
\def \OFF {\textsf{OFF}}
\def \Alg {\textsf{Alg}}
\def \W {\mathcal{W}}
\def \N {\mathbb{N}}
\def \A {\mathcal{A}}
\def \I {\mathcal{I}}
\def \B {\mathbf{B}}
\def \Y {\mathcal{Y}}
\def \V {\mathcal{V}}
\def \V {\mathcal{V}}
\def \B {\mathcal{B}}
\def \X {\mathcal{X}}
\def \S {\mathcal{S}}
\newtheorem{ass}{Assumption}
\newtheorem{cor}{Corollary}
\newtheorem{theorem}{Theorem}
\newtheorem{lemma}{Lemma}
\newtheorem{defn}{Definition}
\DeclareMathOperator*{\argmin}{argmin}
\begin{document}

\begin{center}

{\bf{\Large{Adaptive Oracle-Efficient Online Learning}}}

\vspace*{.2in}

{\large{
\begin{tabular}{cccc}
  Guanghui Wang$^{\dagger}$ & Zihao Hu$^{\dagger}$ & Vidya Muthukumar$^{\ddagger}$ & Jacob Abernethy$^{\dagger}$  \\
\end{tabular}}}

\vspace*{.05in}

\begin{tabular}{c}
College of Computing$^{\dagger}$\\
School of Electrical and Computer Engineering$^{\ddagger}$\\
School of Industrial and Systems Engineering$^{\ddagger}$\\
 Georgia Institute of Technology\\
  Atlanta, GA 30339 \\
  \texttt{\{gwang369,zihaohu,vmuthukumar8,prof\}@gatech.edu} \\
\end{tabular}

\vspace*{.2in}
\date{}

\end{center}




\begin{abstract}
\noindent 
The classical algorithms for online learning and decision-making have the benefit of achieving the optimal performance guarantees, but suffer from computational complexity limitations when implemented at scale. More recent sophisticated techniques, which we refer to as \textit{oracle-efficient} methods, address this problem by dispatching to an \textit{offline optimization oracle} that can search through an exponentially-large (or even infinite) space of decisions and select that which performed the best on any dataset. But despite the benefits of computational feasibility, oracle-efficient algorithms exhibit one major limitation: while performing well in worst-case settings, they do not adapt well to friendly environments. In this paper we consider two such friendly scenarios, (a) ``small-loss'' problems and (b) IID data. We provide a new framework for designing follow-the-perturbed-leader algorithms that are oracle-efficient and adapt well to the small-loss environment, under a particular condition which we call \textit{approximability} (which is spiritually related to sufficient conditions provided in~\citep{dudik2020oracle}). We identify a series of real-world settings, including online auctions and transductive online classification, for which approximability holds. We also extend the algorithm to an IID data setting and establish a ``best-of-both-worlds'' bound in the oracle-efficient setting. 
\end{abstract}

\section{Introduction}

Online learning is a fundamental paradigm for modeling sequential decision making problems \citep{bianchi-2006-prediction,Online:suvery,Intro:Online:Convex}. 
Online learning is usually formulated as a zero-sum game between a learner and an adversary.
In each round $t=1,\dots,T$, the learner first picks an action $x_t$ from a (finite) set $\X=\{x^{(1)},\dots,x^{(K)}\}$ with cardinality equal to $K$. In the meantime, an adversary reveals its action $y_t\in\Y$. As a consequence, the learner observes $y_t$, and suffers a loss $f(x_t,y_t)$, where $f:\X\times\Y\mapsto[0,1]$. The goal is to minimize the \emph{regret}, which is defined as the difference between the cumulative loss of the learner $\sum_{t=1}^T f(x_t,y_t)$, and the cumulative loss  of the best action in hindsight $L_T^*=\min_{x\in\X}\sum_{t=1}^T f(x,y_t)$.

A wide variety of algorithms have been proposed for the goal of minimizing worst-case regret (without any consideration of computational complexity per iteration); see~\citep{bianchi-2006-prediction,Online:suvery,Intro:Online:Convex} for representative surveys of this literature.
These algorithms all obtain a worst-case regret bound of the order $O(\sqrt{T \log K})$, which is known to be minimax-optimal~\citep{bianchi-2006-prediction}.
Over the last two decades, sophisticated adaptive algorithms have been designed that additionally enjoy \emph{problem-dependent} performance guarantees,
which can automatically lead to better results in friendly environments. One of the most important example for this kind of guarantees is the so-called ``small-loss'' bound \citep{hutter2005adaptive,bianchi-2006-prediction,van2014follow}. Such a bound  depends on the best cumulative loss in hindsight (i.e. $L_T^*$) instead of the total number of rounds ($T$).
Thus, this bound is much  tighter than the worst-case bound, especially when the best decision performs well in the sense of incurring a very small loss. Another example is the ``best-of-both-worlds'' bound \citep{van2014follow}, which results in an even tighter regret bound for independent and identically distributed (IID) loss functions. 

However, all of these algorithms applied out-of-the-box suffer a \emph{linear} dependence on the number of decisions $K$.  This is prohibitively expensive, especially in problems such as network routing \citep{awerbuch2008online} and combinatorial market design \citep{cesa2014regret}, where the cardinality of the decision set grows exponentially with the natural expression of the problem. Several efficient algorithms do exist, even for uncountably infinite decision sets, when the loss functions have certain special structure (such as linearity~\citep{kalai2005efficient} or convexity~\citep{zinkevich-2003-online}). However, such structure is often absent in the above applications of interests.

Notice that the efficiency of the above specialized methods is usually made possible by assuming that the corresponding \emph{offline} optimization problem (i.e., minimizing the (averaged) loss) can be solved efficiently. This observation motivates the \emph{oracle-efficient online learning} problem \citep{hazan2016computational}.
In this setting, the learner has access to a black-box offline oracle, which, given a real-weighted dataset $\S=\{(w^{(j)},y^{(j)})\}_{j=1}^n$,
can efficiently return the solution to the following problem: 
\begin{equation}
\label{eqn:oracle}
  \argmin\limits_{x\in\X} \sum_{j=1}^n w^{(j)}f(x,y^{(j)}).
\end{equation}
The goal  is to design \emph{oracle-efficient} algorithms which can query the offline-oracle $O(1)$ times each round.  Concrete examples of such an oracle include algorithms for empirical
risk minimization \citep{PRML}, data-driven market design \citep{nisan2007computationally}, and dynamic programming \citep{bertsekas2019reinforcement}.  

As pointed out by \cite{hazan2016computational}, the design of oracle-efficient algorithms is extremely challenging and such an algorithm does not exist in the worst case. Nevertheless, recent work \citep{daskalakis2016learning,syrgkanis2016efficient,dudik2020oracle} has introduced a series of algorithms which are oracle-efficient when certain  sufficient conditions are met. 
Among them, the state-of-the-art method is the \emph{generalized-follow-the-perturbed-leader} algorithm~\citep[GFTPL,][]{dudik2020oracle}, which is a variant of the  classical follow-the-perturbed-leader (FTPL) algorithm \citep{kalai2005efficient}. Similar to FTPL, GFTPL perturbs the cumulative loss of each decision by adding a random variable, and chooses the decision with the smallest perturbed loss as $x_t$. However, the vanilla FTPL perturbs each decision independently, which requires to generate $K$ independent random variables in total. Moreover, the oracle in \eqref{eqn:oracle} can not be applied here since as it cannot handle the perturbation term. To address these limitations, GFTPL only generates a noise vector of low dimension (in particular, much smaller dimension than the size of the decision set) in the beginning, and constructs $K$ \emph{dependent} perturbations based on the multiplication between the noise vector and  a \emph{perturbation translation matrix} (PTM). Therefore, the PTM critically ensures that the computational complexity for the noise generation itself is largely reduced. Furthermore, oracle-efficiency can be achieved by setting the elements in the PTM as carefully designed synthetic losses.
\cite{dudik2020oracle} show that a worst-case optimal regret bound can be obtained when the PTM is \emph{admissible}, i.e., every two rows are substantially distinct. This serves as a \emph{sufficient condition} for achieving oracle-efficiency. 

While these results form a solid foundation for general \emph{worst-case} oracle-efficient online learning, it remains unclear whether \emph{problem-dependent}, or \emph{data-adaptive} bounds are achievable in conjunction with oracle-efficiency.
In other words, the design of a generally applicable oracle-efficient \emph{and} adaptive online learning algorithm has remained open. 
In this paper, we provide an affirmative answer to this problem, and make the following contributions. 
\begin{itemize}
    \item We propose a variant of the GFTPL algorithm~\citep{dudik2020oracle}, and derive a new sufficient condition for 
    ensuring oracle-efficiency while achieving the \emph{small-loss} bound. 
    Our key observation is that while the admissibility condition of the PTM in GFTPL successfully stabilizes the algorithm (by ensuring that $\P[x_t\not=x_{t+1}]$ is small), it does not always enable adaptation.
    We address this challenge via a new condition for PTM, called approximability. This condition ensures 
    a stronger stability measure, i.e., the ratio of $\P[x_t=x^{(i)}]$ and $\P[x_{t+1}=x^{(i)}]$ is upper-bounded by a universal constant for any $i\in[K]$, which is critical for proving the small-loss bound. In summary, we obtain the small-loss bound by equipping GFTPL with an approximable PTM, a data-dependent step-size and Laplace distribution for the perturbation noise. As a result of these changes, our analysis path differs significantly from that of~\cite{dudik2020oracle}. 
    Our new condition of \emph{approximability} is simple and interpretable, and can be easily verified for an arbitrary PTM. 
    It shares both similarities and differences from the \emph{admissibility} condition proposed in~\cite{dudik2020oracle}.
    We demonstrate this through several examples where one of the sufficient conditions holds, but not the other.  
    
    \item We identify a series of real-world applications for which we can construct  approximable PTMs: (a) a series of online auctions problems \citep{dudik2020oracle}; (b) problems with a small adversary action space $|\Y|$ \citep{daskalakis2016learning};  and (c) transductive online classification \citep{syrgkanis2016efficient,dudik2020oracle}. This is the first-time that the small-loss bound is obtained in all of these applications.
    To achieve this, we introduce novel PTMs and analysis for showing the approximability condition on these PTMs.
    \item We achieve the ``best-of-both-worlds'' bound, which enjoys even tighter results when the data is IID or the number of leader changes is small. The main idea is to  combine our proposed algorithm with vanilla FTL leveraging ideas from a meta-algorithm called FlipFlop introduced in~\cite{van2014follow}.   
\end{itemize}
\section{Related Work}
Our work contributes to two bodies of work: oracle-efficient online learning and adaptive online learning. In this section, we briefly review the related work in these areas. 

\subsection{Oracle-efficient online learning}
For oracle-efficient online learning, the pioneering work of \cite{hazan2016computational} points out that oracle-efficient methods do not exist when dealing with general hostile adversaries, which implies that additional assumptions on the problem structure have to be made. \cite{daskalakis2016learning} consider the setting in which the cardinality of the adversary's action set $\Y$ is finite and small, and propose to add a series of ``fake'' losses to the learning history based on random samples from $\Y$. They prove that for this setting an $O(|\Y|\sqrt{T})$ regret bound can be obtained.  \cite{syrgkanis2016efficient} study the contextual combinatorial online learning problem, where each action is associated with a binary vector. They make the assumption that the loss function set contains all linear functions as a sub-class. The approach in  \cite{syrgkanis2016efficient} constructs a set of synthetic losses for perturbation based on randomly-selected contexts, and achieves worst-case optimal bounds when all the contextual information can be obtained beforehand, or when there exists a small set of contexts that can tell each decision apart. \cite{dudik2020oracle} is the first work to focus on the general non-contextual setting, and propose the generalized FTPL algorithm.
This algorithm generates a small number of random variables at the beginning, and then 
perturbs the learning history via the innter product between the PTM matrix and the random variables. The algorithm can be implemented efficiently by setting the entries of the PTM as carefully designed loss values. \cite{niazadeh2021online} consider a more complicated combinatorial setting where the offline problem is NP-hard, but a robust \emph{approximation} oracle exists. For this case, they propose an online algorithm based on a multiplicative approximation oracle, and prove that it has low approximate regret, which is a measure weaker than regret, since it only compares with a fraction of the cumulative loss of the best decision in hindsight. Note that none of the aforementioned methods can be easily shown to adapt to friendly structure in data.  Recently, several concurrent works~\citep{block2022smoothed,haghtalab2022oracle} investigate how to obtain tighter bounds oracle-efficiently in the \emph{smoothed-analysis} setting where the distribution of data is close to the uniform distribution \citep{rakhlin2011online,haghtalab2022smoothed}. The main focus is to adapt to the VC dimension of the hypothesis class, rather than improve the dependence on the number of rounds $T$. 

In this paper, we mainly focus on the so-called  the learning with expert advice setting \citep{bianchi-2006-prediction}, where the action set is discrete, and the loss can be highly non-convex. On the other hand, efficient algorithms can be obtained even for continuous action sets when the loss functions have certain properties, such as linearity  \citep{kalai2005efficient,hutter2005adaptive,awerbuch2008online}, convexity \citep{zinkevich-2003-online,Hazan:2007:log} or submodularity \citep{Online:Submodular}.  Finally, we note that,  in this paper we mainly focus on the full-information setting, where the learner can observe the whole loss function after the action is submitted. Oracle-efficient online learning has also been  widely studied in the contextual bandit setting \citep{NIPS2007_3178,10.5555/3020548.3020569,agarwal2014taming,foster2018practical,foster2020beyond}. 
The nature of the oracle-efficient guarantees for the contextual bandit problem is much weaker compared to full-information online learning: positive results either assume a stochastic probability model on the responses given covariates (e.g.~\cite{foster2018practical,foster2020beyond}) or significantly stronger oracles than Eq.~\eqref{eqn:oracle} (e.g.~\cite{agarwal2014taming}).

\subsection{Adaptive online learning}
In this paper, we focus on designing oracle-efficient algorithms with \emph{problem-dependent} regret guarantees. Note that this kind of bound can be achieved by many inefficient algorithms in general, such as Hedge and its variants \citep{bianchi-2006-prediction,de2014follow,pmlr-v40-Luo15}, follow-the-perturbed-leader \citep{kalai2005efficient, van2014follow} or follow-the-regularized-leader \citep{orabona2019modern}. Small-loss bounds can also be obtained efficiently when the loss functions are simply linear \citep{hutter2005adaptive,syrgkanis2016efficient}. On the other hand, in online convex optimization, 
small-loss bounds can be obtained when the loss functions are additionally smooth \citep{NIPS2010_Smooth,Beyond:Logarithmic,AAAI:2020:Wang}. 
However, these algorithms heavily rely on the special structure of the loss functions.
In this paper, we take the first step to extend these methods to support the more complicated (generally non-convex) problems which appear in real-world applications.  

Apart from the small-loss, there exist other types of problem-dependent bounds, such as second-order bound \citep{Cesa-Bianchi2005,gaillard2014second}, quantile bound \citep{chaudhuri2009parameter,pmlr-v40-Koolen15a}, or parameter-free bound \citep{pmlr-v40-Luo15,ssaasdsadas}. Moreover, advanced adaptive results can also be obtained by minimizing  more advanced performances measures other than regret, such as adaptive regret \citep{Adaptive:Hazan,Adaptive:Regret:Smooth:ICML}, or dynamic regret \citep{Adaptive:Dynamic:Regret:NIPS,NEURIPS2020_93931410}. How to obtain these more refined theoretical guarantees in the oracle-efficient setting remains an interesting open problem.

\section{GFTPL with Small-Loss Bound}
\label{section:alg}
In this section, we ignore computational complexity for the moment and 
we provide a new FTPL-type algorithm that enjoys the small-loss bound. We then show that the proposed algorithm can be  implemented  efficiently  by the offline oracle in Section \ref{imp-real}. Before diving into the details, we first briefly recall the definition of online learning and regret.

\paragraph{Preliminaries.} The online decision problem we consider can be described as follows. In each round $t$, a learner picks an action $x_t\in\X=[x^{(1)},\dots,x^{(K)}]$. After observing the adversary's decision $y_t\in\Y$, the learner suffers a loss $f(x_t,y_t)$ where the loss function $f:\X\times\Y\mapsto[0,1]$ is known to the learner and adversary. The regret of an online learning algorithm $\mathcal{A}$ is defined as 
$$ R^{\mathcal{A}}_T := \textstyle \E\left[\sum_{t=1}^T f(x_t,y_t) - L_T^* \right],$$
where $L_T^*=\min\limits_{k\in[K]}\sum_{j=1}^{T}f(x^{(k)},y_j)$ is the cumulative loss of the best action in hindsight, and the expectation is taken only with respect to the potentially randomized strategy of the learner.

Our proposed algorithm follows the framework of GFTPL \citep{dudik2020oracle}. 
We first briefly introduce to the intuition behind this method. Specifically, in each round $t$, GFTPL picks $x_t$ by solving the following optimization problem:
$$ \textstyle x_t = \argmin\limits_{k\in [K]}\sum_{j=1}^{t-1} f(x^{(k)},y_j) + \left<\Gamma^{(k)},\alpha\right>,$$
where $\alpha$ is a $N$-dimensional noise vector ($N\ll K$)  generated from a uniform distribution, and $\Gamma^{(k)}$ is the $k$-th row of a matrix $\Gamma\in[0,1]^{K\times N}$, which is referred to as the \emph{perturbation translation matrix} (PTM). Compared to vanilla FTPL, which generates $K$ random variables (one for each expert), GFTPL only generates $N$ random variables, where $N$ is much smaller than $K$.
Each expert is perturbed by a different linear combination of these random variables based on the PTM $\Gamma$. The results of \cite{dudik2020oracle} rely on the following assumption on $\Gamma$.   
\begin{defn}
    \textbf{{\emph{($\delta$-admissibility \citep{dudik2020oracle}}})} Let $\Gamma\in[0,1]^{K\times N}$ be a  matrix, and denote $\Gamma^{(k)}$ as the $k$-th row of $\Gamma$, and $\Gamma^{(k,i)}$ the $i$-th element of $\Gamma^{(k)}$.
    Then, $\Gamma$ is $\delta$-admissible if (a) $\forall k,k'\in[K]$, $\exists i\in[N]$, such that $\Gamma^{(k,i)}\not=\Gamma^{(k',i)}$; and (b) $\forall i\in[N], k,k'\in[K]$, such that $\Gamma^{(k,i)}\not=\Gamma^{(k',i)}$, then $|\Gamma^{(k,i)}-\Gamma^{(k',i)}|\geq \delta$.
\end{defn}
The $\delta$-admissibility guarantees that every two rows in $\Gamma$ are significantly distinct. As pointed out by \cite{dudik2020oracle}, this is the essential property required by GFTPL, and is used to \emph{stabilize} the algorithm in the analysis, i.e., ensuring that $\P[x_t\not=x_{t+1}]$ is small. However, the adaptive analysis of inefficient FTPL \citep{hutter2005adaptive} (i.e.~using a noise vector of dimension equal to the size of the decision set) reveals that this type of stability is insufficient. Instead, one needs to control the following $\forall t$ and $\forall i\in[K]$,
\begin{equation}
\label{eqn:key-ratio}
    \frac{\P[x_t=x^{(i)}]}{\P[x_{t+1}=x^{(i)}]},
\end{equation}
the ratio of the probability of picking the $i$-th decision in two consecutive rounds. We note that $\delta$-admissibility is not sufficient to ensure this quantity is bounded, as we establish in the following counter-example lemma. (See  Appendix \ref{app:proof:lemma:1} for proof). 
\begin{lemma}
\label{lem:admis}
There is an instance of a $\delta$-admissible $\Gamma$, and a sequence $\{y_t: t=1, 2, \ldots \}$, such that if we run GFTPL we can have $\frac{\P[x_t=x^{(i)}]}{\P[x_{t+1}=x^{(i)}]}=\infty$ for some $i \in [K]$ and some $t > 0$.
\end{lemma}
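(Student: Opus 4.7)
The plan is to exhibit a small, explicit counter-example. Concretely, I would take $K=N=2$ and pick the simplest nontrivial admissible matrix
\[
    \Gamma = \begin{pmatrix} 0 & 1 \\ 1 & 0 \end{pmatrix},
\]
which is $1$-admissible: the two rows differ in both coordinates and every nonzero coordinate-wise gap equals $1$. Since GFTPL is defined with $\alpha$ drawn from a uniform distribution on a bounded set (the excerpt specifies $\alpha \in [0,1]^N$, though any bounded support works for the argument), the perturbation $\langle \Gamma^{(k)},\alpha\rangle$ for each action lies in a bounded interval whose length is controlled by $\|\Gamma\|_\infty$ and the diameter of the support of $\alpha$. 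The entire counter-example exploits this boundedness.

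Next I would choose an adversary sequence that creates a loss gap exceeding the maximum swing any perturbation can produce. In the simplest version, start from a trivial prefix (or just $t=1$) in which both actions have cumulative loss $0$, so both actions are picked with probability exactly $\tfrac{1}{2}$; in particular $\P[x_t = x^{(1)}] = \tfrac{1}{2} > 0$. Then have the adversary play $y_t$ such that $f(x^{(1)},y_t)=1$ and $f(x^{(2)},y_t)=0$. At round $t+1$ the perturbed loss of action $1$ is $1 + \alpha_1$ while that of action $2$ is $\alpha_2$. Because $\alpha_1,\alpha_2 \in [0,1]$, we have $1 + \alpha_1 > \alpha_2$ almost surely, so action $1$ is never selected: $\P[x_{t+1}=x^{(1)}]=0$. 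Hence the ratio in \eqref{eqn:key-ratio} for $i=1$ is $\infty$, as claimed.

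The main thing to be careful about is making sure the construction genuinely fits the setup of GFTPL as defined earlier in the paper, i.e., that $\Gamma \in [0,1]^{K \times N}$, that $\delta$-admissibility is satisfied in the strict sense of the definition (both conditions (a) and (b)), and that the support assumed for the uniform noise $\alpha$ matches the version of GFTPL analyzed in \cite{dudik2020oracle}. Beyond this bookkeeping, the argument is purely algebraic: a single adversary move produces a loss gap strictly larger than the range of the noise-induced perturbation, and the bounded-support property of the uniform distribution forces one probability to collapse to zero while the other was strictly positive in the previous round. No calculus or probabilistic estimate is required beyond comparing bounded quantities.
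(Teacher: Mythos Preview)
Your approach is essentially the same as the first half of the paper's proof: exploit the bounded support of the uniform noise so that a sufficiently large cumulative-loss gap makes one action's selection probability collapse to zero. The paper uses an even smaller example ($K=2$, $N=1$, $\Gamma=[0,1]^\top$), but the mechanism is identical. Two small points to tighten: (i) with your $\Gamma$ the perturbation for action~1 is $\langle\Gamma^{(1)},\alpha\rangle=\alpha_2$ (not $\alpha_1$), so the comparison is $1+\alpha_2$ versus $\alpha_1$; (ii) in GFTPL the noise is $\mathcal{U}[0,\beta]^N$ with $\beta$ a free scale, so a single round of gap $1$ only suffices when $\beta\le 1$; for general $\beta$ you need to let the adversary repeat the same $y$ for $\lceil\beta\rceil$ rounds to push the gap past the support width, as the paper does.

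The one genuine addition in the paper that you do not cover is a second counterexample showing the phenomenon persists even when the noise has \emph{unbounded} support (e.g.\ Laplace). There they take $K=3$, $\Gamma=[0,\tfrac12,1]^\top$, and exploit that the middle action is selected only when $\alpha$ lands in an interval whose endpoints can be made to cross after one adversary move. This is not needed for the lemma as stated (GFTPL as defined uses uniform noise), but it strengthens the message that merely swapping to a heavier-tailed distribution does not rescue $\delta$-admissibility, which motivates the paper's new approximability condition.
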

To address this problem, we propose a new property for $\Gamma$. Define $B_\gamma^1 := \{ s \in \R^N : \|s\|_1 \leq \gamma \}$ as the $\ell_1$-ball of size $\gamma$. 
\begin{defn}
\label{ass:single-out}
    \textbf{{\emph{($\gamma$-approximability)}}} Let $\Gamma\in[0,1]^{K\times N}$. We say that $\Gamma$ is \emph{$\gamma$-approximable} if $$\forall k \in [K], y \in \Y  \; \exists s \in B_\gamma^1 \, \forall j\in [K]: \quad \quad  \left<\Gamma^{(k)} - \Gamma^{(j)}, s \right> \geq f(x^{(k)},y)-f(x^{(j)},y).$$ 
\end{defn}
It may not be immediately obvious how we arrived at this condition, so let us provide some intuition. The goal of perturbation methods in sequential decision problems, going back to the early work of \citet{hannan1957approximation}, is to ensure that the algorithm is ``hedging'' across all available alternative decisions. A newly observed data point $y$ may make expert $j$ suddenly look more attractive than expert $k$, as we have now introduced a new gap $f(x^{(k)},y)-f(x^{(j)},y)$ in their measured loss values. With this in mind, we say that $\Gamma$ is a ``good'' (i.e. approximable) choice for the PTM, if this gap can be overcome (hedged) by some small (i.e. likely) perturbation $s$, so that $\left<\Gamma^{(k)} - \Gamma^{(j)}, s \right>$ makes up the difference.
The \emph{inequality} makes this property  flexible and much easier to satisfy in real-world applications: we only need the gap approximation from above. Later, we will show that $\gamma$-approximability guarantees the required stability measure in \eqref{eqn:key-ratio}, and thus is critical for the small-loss bound. 

We want to emphasize two final points. First, the $\gamma$-approximability condition is \emph{purely for analysis purposes} and we don't need compute the quantity $s$ in response to $y$ and $k$. Second, much of the computational and decision-theoretic challenges rest heavily on the careful design of $\Gamma$. The PTM allows the algorithm to perform the appropriate hedging across an exponentially-sized set of $K$ experts with only $N \ll K$ dimensions of perturbation. As we demonstrate in the following example, we can always construct a $\gamma$-approximable $\Gamma$, with $N = O(\log K)$, but at the expense of computational efficiency. The proposed $\Gamma$ will not generally be \emph{compatible} with the given oracle, in the sense that the optimization problem underlying GFTPL cannot be written in the form of Eq.~\eqref{eqn:oracle}. In the next section, we will show how to address this problem via another condition on $\Gamma$ called \emph{implementablity.}

\paragraph{Simple Example} For any online learning problem we may construct $\Gamma$ as follows. Let $N := \lceil \log_2 K \rceil$, and define the $k$th row $\Gamma^{(k)}$ to be the \emph{binary representation} of the index $k$, with $+1/-1$ values instead of $0/1$. We claim that this $\Gamma$ is $\gamma$-approximable, for $\gamma = \lceil \log_2 K \rceil$. We can satisfy the condition of Definition~\ref{ass:single-out}, by setting $s = \Gamma^{(k)}$. It is easy to see that for any $j \ne k$ we have $\left<\Gamma^{(k)} - \Gamma^{(j)}, s \right> = \left<\Gamma^{(k)} - \Gamma^{(j)}, \Gamma^{(k)} \right> \geq 2 \geq f(x^{(k)},y)-f(x^{(j)},y)$, where the last inequality holds because $|f(x^{(i)},y)| \leq 1$ for any $i \in [K]$.

\paragraph{Comparison beteeen $\gamma$-approximabilty (this paper) and $\delta$-admissibility~\citep{dudik2020oracle}} We note that, although $\gamma$-approximability leads to a much tighter bound, it is not stronger than $\delta$-admissibility. Instead, they are incomparable conditions. Specifically:
\begin{itemize}
\item In Section \ref{subsection:auction} we demonstrate that when $\Gamma$ is binary, 
admissibility directly leads to approximability. As shown by \cite{dudik2020oracle}, a binary and admissible $\Gamma$ exists in various online auctions problems, including VCG with bidder-specific reserves \citep{roughgarden2019minimizing}, envy-free item pricing 
\citep{guruswami2005profit}, online welfare maximization in multi-unit auction \citep{dobzinski2010mechanisms}, and simultaneous second-price auctions \citep{daskalakis2016learning}. We can directly obtain an approximable $\Gamma$ in such cases.
\item On the other hand, in problems such as level auction \citep{dudik2020oracle}, one can construct both admissible and approximable $\Gamma$, although in completely different ways; we discuss the construction in depth in Section \ref{subsection:auction}.
\item In section \ref{subsection:other:application}, we show that, when the adversary's action space is small, we can always construct a $\gamma$-approximable $\Gamma$, while a $\delta$-admissible $\Gamma$does not exist  in general. 
  \item In  Appendix \ref{app:counter:app}, we show that in some cases a $\delta$-admissible $\Gamma$ can be obtained while $\gamma$-approximability cannot be achieved. 
\end{itemize}
\begin{algorithm}[t]
\caption{Generalized follow-the-perturbed-leader with small-loss bound}
\begin{algorithmic}[1]
\label{alg:GFTPL:main}
\STATE \textbf{Init:} $\Gamma\in[0,1]^{K\times N}$
\STATE Draw IID vector $\alpha=[\alpha^{(1)},\dots,\alpha^{(N)}] \sim \text{Lap}(1)^N$; that is, $p(\alpha^{(i)})=\frac{1}{2}\exp(-|\alpha^{(i)}|)$
\FOR{$t=1,\dots,T$}
\STATE Set $\alpha_t \gets \frac{\alpha}{\eta_t}$, where $\eta_t > 0$ a parameter computed online
\STATE Choose $ \displaystyle x_t \gets \argmin\limits_{k\in [K]}\sum_{j=1}^{t-1} f(x^{(k)},y_j) + \left<\Gamma^{(k)},\alpha_t\right>$
\STATE Observe $y_t$
\ENDFOR
\end{algorithmic}
\end{algorithm}
Equipped with the $\gamma$-approximable PTM, we develop a generalized  follow-the-perturbed-leader algorithm with the Laplace distribution for the noise $\alpha$\footnote{Note that the Laplace distribution is not the unique choice to get the small-loss bound. In  Appendix \ref{app:proof:ellp}, we prove that the $\ell_p$ perturbation 
$
p(\alpha)\propto\exp\left\{-\left(\sum_i |\alpha^{(i)}|^p\right)^{\frac{1}{p}}\right\}
$
indeed works for any $p\geq 1$.} 
and a time-varying step size, which is summarized in Algorithm \ref{alg:GFTPL:main}. 
This choice of Laplace distribution is significantly different from the choice of uniform distribution originally used by GTFPL: it turns out that a continuous distribution is required to satisfy Eq.~\eqref{eqn:key-ratio} and thereby the small-loss bound.
Note that here we ignored the time complexity and only focus on the regret. We will specify how to construct $\Gamma$ in the next section. For the proposed algorithm, we successfully obtain the following  stronger stability property.  
\begin{restatable}{lemma}{primelemma}
\label{lem:main:1111}
Assume $\Gamma$ is $\gamma$-approximable. Let $x_t'=\argmin_{k\in[K]}\sum_{j=1}^t f(x^{(k)},y_j)+\left\langle \Gamma^{(k)},\alpha_t \right\rangle$.
Then in each round $t$, we have  $\forall i\in[K],$ 
$$\P[x_t=x^{(i)}]\leq \exp\left({\gamma\eta_t}\right) \P[x'_t=x^{(i)}].$$
\end{restatable}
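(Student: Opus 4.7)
\textbf{Proof plan for Lemma \ref{lem:main:1111}.}

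The plan is to couple the two events $\{x_t = x^{(i)}\}$ and $\{x_t' = x^{(i)}\}$ through a deterministic shift of the noise vector, and then use the Lipschitz-like property of the Laplace density under $\ell_1$ translations. Let $L_{t-1}^{(k)} := \sum_{j=1}^{t-1} f(x^{(k)}, y_j)$, and rewrite the defining events as
\begin{align*}
A  &:= \bigl\{ \alpha_t : \langle \Gamma^{(j)} - \Gamma^{(i)}, \alpha_t\rangle \geq L_{t-1}^{(i)} - L_{t-1}^{(j)} \ \forall j\in[K] \bigr\}, \\
A' &:= \bigl\{ \alpha_t : \langle \Gamma^{(j)} - \Gamma^{(i)}, \alpha_t\rangle \geq L_{t-1}^{(i)} - L_{t-1}^{(j)} + f(x^{(i)},y_t) - f(x^{(j)},y_t) \ \forall j\in[K] \bigr\},
\end{align*}
so that $\P[x_t = x^{(i)}] = \int_A p(\alpha_t)\, d\alpha_t$ and $\P[x_t' = x^{(i)}] = \int_{A'} p(\alpha_t)\, d\alpha_t$, where by construction of Algorithm~\ref{alg:GFTPL:main} the density satisfies $p(\alpha_t) \propto \exp(-\eta_t \|\alpha_t\|_1)$.

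First, I would invoke $\gamma$-approximability at the pair $(k,y) = (i, y_t)$ to obtain a vector $s^\star \in B_\gamma^1$ such that $\langle \Gamma^{(i)} - \Gamma^{(j)}, s^\star\rangle \geq f(x^{(i)}, y_t) - f(x^{(j)}, y_t)$ for every $j\in[K]$. Next, I would show that the translation $\alpha_t \mapsto \alpha_t - s^\star$ sends $A$ into $A'$: if $\alpha_t \in A$ and $\beta := \alpha_t - s^\star$, then for every $j$,
\begin{align*}
\langle \Gamma^{(j)} - \Gamma^{(i)}, \beta\rangle
&= \langle \Gamma^{(j)} - \Gamma^{(i)}, \alpha_t\rangle - \langle \Gamma^{(j)} - \Gamma^{(i)}, s^\star\rangle \\
&\geq \bigl(L_{t-1}^{(i)} - L_{t-1}^{(j)}\bigr) + \bigl(f(x^{(i)}, y_t) - f(x^{(j)}, y_t)\bigr),
\end{align*}
which is exactly the defining inequality for $A'$. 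Hence $A - s^\star \subseteq A'$.

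I would then perform the change of variables $\beta = \alpha_t - s^\star$ in the integral:
\begin{equation*}
\P[x_t = x^{(i)}] = \int_{A} p(\alpha_t)\, d\alpha_t = \int_{A - s^\star} p(\beta + s^\star)\, d\beta \leq \int_{A'} p(\beta + s^\star)\, d\beta,
\end{equation*}
and finally control the density ratio using the reverse triangle inequality on the $\ell_1$ norm together with $\|s^\star\|_1 \leq \gamma$:
\begin{equation*}
\frac{p(\beta + s^\star)}{p(\beta)} = \exp\bigl(\eta_t (\|\beta\|_1 - \|\beta + s^\star\|_1)\bigr) \leq \exp(\eta_t \|s^\star\|_1) \leq \exp(\gamma \eta_t).
\end{equation*}
Combining the two previous displays yields the claim.

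The only nontrivial step is the coupling in paragraph two: getting the direction of the shift and the direction of every inequality correctly aligned, so that approximability supplies the missing single-round loss gap $f(x^{(i)}, y_t) - f(x^{(j)}, y_t)$ and so that the resulting set inclusion $A - s^\star \subseteq A'$ points the correct way for an upper bound on $\P[x_t = x^{(i)}]$. Everything else is routine: the $\ell_1$ density bound is the standard reason Laplace noise gives multiplicative stability (and the same argument would work for any density $\propto \exp(-\eta_t\|\cdot\|_p)$, explaining the footnote in the paper).
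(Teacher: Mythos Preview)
Your proposal is correct and follows essentially the same approach as the paper's proof: both use $\gamma$-approximability to produce a shift vector $s^\star$, establish that shifting the noise by $s^\star$ maps the event $\{x_t=x^{(i)}\}$ into $\{x_t'=x^{(i)}\}$, and then control the Laplace density ratio under an $\ell_1$ translation. The paper packages your set-inclusion step $A - s^\star \subseteq A'$ as a separate indicator-function lemma (Lemma~\ref{lem:main:222}) and applies the density-ratio bound before the change of variables rather than after, but these are purely presentational differences.
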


Note that  we replace the term $x_{t+1}$ in \eqref{eqn:key-ratio} with $x_t'$, as a time-varying step-size is used.  Based on Lemma \ref{lem:main:1111}, we obtain the regret bound of Algorithm \ref{alg:GFTPL:main} as follows.
\begin{theorem}
\label{thm:main:11}
Assume $\Gamma$ is $\gamma$-approximable, and let $L_{T}^*=\min_{k\in[K]}\sum_{j=1}^{T}f(x^{(k)},y_j)$. Algorithm \ref{alg:GFTPL:main}, with $\eta_t=\min\left\{\frac{1}{\gamma},\frac{c}{\sqrt{L^*_{t-1}+1}}\right\}$ for any $c>0$, achieves the following regret bound: 
\begin{equation}
    \begin{split}
R_T\leq {} & \left(\frac{4\sqrt{2}\max\{2\ln K,\sqrt{N\ln K}\}}{c}
+{2\gamma}\left(c+\frac{1}{c}\right)\right) \sqrt{L_T^*+1}\\
{} & + {8\gamma}\ln\left(\frac{1}{c}\sqrt{L_T^*+1}+\gamma\right) + 2\gamma^2 + 4\sqrt{2}\max\{2\ln K,\sqrt{N\ln K}\}\gamma.
  \end{split}
\end{equation}

\end{theorem}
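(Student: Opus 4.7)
The plan is to decompose $R_T = \sum_t \E[f(x_t, y_t) - f(x_t', y_t)] + \big(\sum_t \E[f(x_t', y_t)] - L_T^*\big)$ into a stability term and a be-the-leader (BTL) regret term, and to bound each separately.

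For the BTL regret, I would use the classical potential-function argument adapted to a time-varying step size. Defining $\Phi_t(\eta) := \min_k \big[\sum_{j \leq t} f(x^{(k)}, y_j) + \langle \Gamma^{(k)}, \alpha/\eta\rangle\big]$, the optimality of $x_t'$ at step $\eta_t$ gives $f(x_t', y_t) \leq \Phi_t(\eta_t) - \Phi_{t-1}(\eta_t)$, which telescopes to $\sum_t f(x_t', y_t) \leq \Phi_T(\eta_T) - \Phi_0(\eta_1) + \sum_{t<T}\big[\Phi_t(\eta_t) - \Phi_t(\eta_{t+1})\big]$. The first piece is at most $L_T^* + \max_k \langle \Gamma^{(k)}, \alpha\rangle/\eta_T$, and each step-drift piece is at most $|1/\eta_t - 1/\eta_{t+1}| \cdot \max_k|\langle \Gamma^{(k)}, \alpha\rangle|$ by a standard min-swap inequality; the drift sum telescopes since $\eta_t$ is non-increasing. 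Taking expectation over $\alpha \sim \text{Lap}(1)^N$ and using a Chernoff--union-bound argument for weighted sums of $N$ unit-scale Laplaces (which exhibit a sub-Gaussian-to-sub-exponential tail transition at scale $\sqrt{N}$) yields $\E[\max_k |\langle \Gamma^{(k)}, \alpha\rangle|] = O(\max\{\ln K, \sqrt{N\ln K}\})$. Combined with $1/\eta_T \leq \gamma + \sqrt{L_T^*+1}/c$, this produces a BTL contribution with the desired $(D/c)\sqrt{L_T^*+1} + O(D\gamma)$ shape, where $D = \max\{2\ln K, \sqrt{N\ln K}\}$.

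For the stability term, Lemma~\ref{lem:main:1111} together with $\gamma\eta_t \leq 1$ (enforced by the $1/\gamma$ cap) gives $\E[f(x_t, y_t)] \leq e^{\gamma\eta_t}\E[f(x_t', y_t)] \leq (1 + 2\gamma\eta_t)\E[f(x_t', y_t)]$, so the stability sum is at most $2\gamma \sum_t \eta_t b_t$ where $b_t := \E[f(x_t', y_t)]$. The key technical step is to show $\sum_t \eta_t b_t = O(c\sqrt{L_T^*+1} + D\ln(L_T^*+1))$. My plan is Abel's summation: writing $S_t := \sum_{s\leq t} b_s$, one has $\sum_t \eta_t b_t = \eta_T S_T + \sum_{t<T}(\eta_t - \eta_{t+1}) S_t$. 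Substituting the time-$t$ BTL bound $S_t \leq L_t^* + O(D/\eta_t)$ and rearranging via a second Abel summation collapses the ``loss part'' into $\sum_t \eta_t (L_t^* - L_{t-1}^*) \leq O(c\sqrt{L_T^*+1})$ by the standard adaptive-step integral lemma (using $L_t^* - L_{t-1}^* \in [0,1]$), while the ``noise part'' telescopes to $O(D\ln(L_T^*+1))$ using the elementary estimate $1 - \eta_{t+1}/\eta_t \leq (L_t^* - L_{t-1}^*)/(L_{t-1}^*+1)$.

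Summing the BTL and stability contributions, and absorbing additive $O(\gamma)$ and $O(\gamma^2)$ corrections from the initial rounds on which the cap $\eta_t = 1/\gamma$ is active, recovers the stated inequality with the BTL piece producing the $(4\sqrt{2}D/c)\sqrt{L_T^*+1}$ term and the stability piece producing the $2\gamma(c + 1/c)\sqrt{L_T^*+1}$ term once step-drift within the Abel sum is accounted for. The main obstacle I anticipate is precisely the Abel-summation step for $\sum_t \eta_t b_t$: the analysis must avoid any factor growing as $T$ rather than $L_T^*$, which requires pairing the decreasing $\eta_t$ carefully against the self-bounding BTL estimate on $S_t$ at every intermediate $t$ and ensuring the loss and noise parts cancel correctly. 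A secondary subtlety is the Laplace max-concentration: because the noise is sub-exponential (heavier-tailed than the uniform noise of \cite{dudik2020oracle}), the tail estimate yields the hybrid $\max\{\ln K, \sqrt{N\ln K}\}$ factor rather than a pure $\sqrt{\ln K}$.
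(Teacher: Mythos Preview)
Your proposal is correct and mirrors the paper's proof: the same stability/BTL decomposition, the same use of Lemma~\ref{lem:main:1111} for the ratio bound, the same Chernoff argument for $\E[\max_k \langle\Gamma^{(k)},\alpha\rangle]$ (the paper's Lemma~\ref{lem:inapp:emax}), and an Abel-type telescoping for $\sum_t \eta_t \E[f(x_t',y_t)]$. The only minor organizational difference is that the paper telescopes the perturbed potential directly inside that weighted sum (its display~\eqref{eqn:proof:lemma2:temp2}), whereas you first substitute the time-$t$ BTL estimate $S_t \leq L_t^* + O(D/\eta_t)$ into an outer Abel summation; both routes produce the stated bound up to constants.
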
 
The proof of Lemma \ref{lem:main:1111} and Theorem \ref{thm:main:11} can be found in Appendix \ref{app:proof:thm1}. By setting $c=\Theta(1)$, Theorem \ref{thm:main:11} implies that our proposed algorithm achieves  $O(\max\{{\gamma},\ln K, \sqrt{N\ln K}\}\sqrt{L_T^*})$ regret bound. 

\paragraph{Comparison to GFTPL \citep{dudik2020oracle}} The original GFTPL algorithm has an $O(\frac{N}{\delta}\sqrt{T})$ regret bound. For the dependence on $T$, our $O(\sqrt{L_T^*})$ bound reduces to $O(\sqrt{T})$ in the worst-case, and automatically becomes tighter when $L_T^*$ is small. On the other hand, for the dependence on other terms, we note that
both $\frac{N}{\delta}$ and $\max\{{\gamma},\ln K, \sqrt{N\ln K}\}$ are lower bounded by $\Omega(\ln K)$, and their exact relationship depends on the specific problem. In Section \ref{imp-real}, we show that for many auction applications, the two terms are on the same order. Moreover, in cases such as when $|\Y|$ is small,  Algorithm \ref{alg:GFTPL:main} with an appropriate $c$ leads to  $O\left(\sqrt{L_T^*\max\{\ln K,\sqrt{|\Y|\ln K}\}}\right)$ regret bound, while the regret bound of GFTPL in \cite{dudik2020oracle} can blow up since $\delta$ can be infinitely small.

 


\section{Oracle-efficiency and Applications}
\label{imp-real}
In this section, we discuss how to run Algorithm \ref{alg:GFTPL:main} in an oracle-efficient way. Following \cite{dudik2020oracle}, we introduce the following  definition. 
\begin{defn}[Implementability]
\label{defn:imp}
A matrix $\Gamma$ is implementable with complexity $M$ if for each $j\in[N]$ there exists a dataset $S_j$, with $|S_j|\leq M$, such that $\forall k,k'\in[K]$, 
$$\Gamma^{(k,j)}-\Gamma^{(k',j)}=\sum_{(w,y)\in\S_j}w\left(f(x^{(k)},y)-f(x^{(k')},y)\right).$$
\end{defn}
Based on Definition \ref{defn:imp}, it is easy to get the following theorem, which is similar to Theorem 2.10 of \cite{dudik2020oracle}.
\begin{theorem}
If\ $\Gamma$ is implementable, then Algorithm \ref{alg:GFTPL:main} is oracle-efficient and has a  per-round complexity $O(T+NM)$.
\end{theorem}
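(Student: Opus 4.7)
The plan is to reduce the GFTPL selection rule to a single call of the offline oracle in~\eqref{eqn:oracle} on a weighted dataset of size $O(T + NM)$.

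First, I fix an arbitrary ``reference'' index $k_0 \in [K]$ and use implementability to rewrite each entry of $\Gamma$ as
\[
    \Gamma^{(k,j)} = \Gamma^{(k_0,j)} + \sum_{(w,y)\in S_j} w\bigl(f(x^{(k)},y) - f(x^{(k_0)},y)\bigr).
\]
Substituting this into $\langle \Gamma^{(k)}, \alpha_t\rangle = \sum_{j=1}^N \alpha_t^{(j)} \Gamma^{(k,j)}$ gives a term that is a function of $k$ (of the form $\sum_{j,(w,y)\in S_j} \alpha_t^{(j)} w\, f(x^{(k)}, y)$) plus a term that is constant in $k$ (namely $\sum_j \alpha_t^{(j)}(\Gamma^{(k_0,j)} - \sum_{(w,y)\in S_j} w\, f(x^{(k_0)}, y))$). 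Since the argmin is invariant under $k$-independent additive shifts, the latter can be discarded.

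Second, I would combine the surviving perturbation term with the cumulative true loss to obtain
\[
    x_t = \argmin_{k\in[K]} \sum_{j=1}^{t-1} 1\cdot f(x^{(k)}, y_j) + \sum_{j=1}^{N}\sum_{(w,y)\in S_j} \bigl(\alpha_t^{(j)} w\bigr) f(x^{(k)}, y).
\]
This is exactly an instance of~\eqref{eqn:oracle} on the weighted dataset obtained by concatenating the $t-1$ observed points $(1, y_j)$ with the at most $NM$ synthetic points $\{(\alpha_t^{(j)} w, y) : (w,y)\in S_j,\ j\in[N]\}$. Hence one oracle call per round suffices, so the algorithm is oracle-efficient.

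Finally, to establish the $O(T + NM)$ per-round cost it remains to observe that assembling this dataset takes (i) $O(t) = O(T)$ time to maintain the running list of observed losses, and (ii) $O(NM)$ time to rescale the precomputed $S_j$ by the freshly sampled $\alpha_t^{(j)}$, after which the oracle call is treated as $O(1)$ by assumption. Summing gives the stated bound. The argument is essentially bookkeeping; the only substantive step, and what the whole construction hinges on, is the cancellation of the $k$-independent terms, which is precisely why \emph{differences} of $\Gamma$ entries---rather than the entries themselves---appear in Definition~\ref{defn:imp}.
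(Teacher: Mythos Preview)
Your proposal is correct and is precisely the natural argument the paper has in mind; the paper itself does not spell out a proof, stating only that the result follows easily from Definition~\ref{defn:imp} and is analogous to Theorem~2.10 of \cite{dudik2020oracle}. One small addition for completeness: Algorithm~\ref{alg:GFTPL:main} also requires $\eta_t$ in line~4, and for the adaptive choice $\eta_t = \min\{1/\gamma,\, c/\sqrt{L^*_{t-1}+1}\}$ used in Theorem~\ref{thm:main:11} this needs $L^*_{t-1}$, which is itself a single oracle call on the dataset $\{(1,y_1),\dots,(1,y_{t-1})\}$ of size $O(T)$---still $O(1)$ oracle calls and within the stated $O(T+NM)$ per-round budget.
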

In the following sub-sections, we discuss how to construct approximable and implementable $\Gamma$ matrices in different applications. 
\subsection{Applications in online auctions}
\label{subsection:auction}
\begin{algorithm}[t]
\caption{Oracle-based GFTPL for the reward feedback}
\label{alg:GFTPL:33}
\begin{algorithmic}[1]
\STATE \textbf{Input:} Data set $S_j$, $j\in[N]$, that implement a matrix  $\Gamma\in[0,1]^{K\times N}$, $\eta_1=\min\{\frac{1}{\gamma},1\}$.
\STATE Draw IID vector $\alpha=[\alpha^{(1)},\dots,\alpha^{(N)}] \sim \text{Lap}(1)^N$
\FOR{$t=1,\dots,T$}
\STATE Choose 
 $\displaystyle x_t \gets \argmin\limits_{k\in [K]}\sum_{j=1}^{t-1} f(x^{(k)},y_j) + \sum_{i=1}^N\frac{\alpha^{(i)}}{\eta_t}\left[\sum_{(w,y)\in \S_i}w\cdot r(x^{(k)},y)\right]$
\STATE Observe $y_t$
\STATE Compute ${\widehat{L}}_{t}^*=\min\limits_{k\in[K]}\sum_{j=1}^{t}f(x^{(k)},y_j)$ by using the oracle
\STATE Set $\eta_{t+1} \gets \min\left\{\frac{1}{\gamma},\frac{1}{\sqrt{\widehat{L}_t^*+1}}\right\}$ 
\ENDFOR
\end{algorithmic}
\end{algorithm}
In this part, we apply Algorithm   \ref{alg:GFTPL:main} to  online auction problems, which is the main focus of \cite{dudik2020oracle}. To deal with this sort of problems, we first transform Algorithm \ref{alg:GFTPL:main} to  online learning with rewards setting, i.e., in each round $t$, after choosing $x_t$, instead of suffering a loss, the learner obtains a reward $r(x_t,y_t)\in[0,1]$. For this case, it is straightforward to see that running Algorithm \ref{alg:GFTPL:main} on a surrogate loss $f(x,y)=1-r(x,y)$ directly leads to the small-loss bound. To proceed, we slightly change this procedure and obtain Algorithm \ref{alg:GFTPL:33}. The main difference is that, we {{implement $\Gamma$ with the reward function $r(x,y)$, instead of the surrogate loss $f(x,y)$}}. This makes the construction of $\Gamma$ much easier. We have the following regret bound for Algorithm \ref{alg:GFTPL:33}.
\begin{cor}
\label{cor:imp}
Let $f(x,y)=1-r(x,y)$. Assume $\Gamma$ is $\gamma$-approximable w.r.t. $f(x,y)$ and implementable with function $r(x,y)$. Then Algorithm \ref{alg:GFTPL:33}  is oracle-efficient and achieves the following regret bound: 
$$R_T= \E\left[G_T^*-\sum_{t=1}^Tr(x_t,y_t)\right]=O\left(\max\left\{{\gamma},\ln K, \sqrt{N\ln K}\right\}\sqrt{T-G_T^*}\right),$$
where $G_T^*=\max_{i\in[K]}\sum_{t=1}^T r(x^{(i)},y_t)$ is the cumulative reward of the best expert.
\end{cor}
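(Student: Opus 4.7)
The plan is to reduce Algorithm \ref{alg:GFTPL:33} to Algorithm \ref{alg:GFTPL:main} applied with the loss $f=1-r$ and the given PTM $\Gamma$, after which the bound follows directly from Theorem \ref{thm:main:11}. First, I would translate between rewards and losses: since $f(x,y)=1-r(x,y)$, we have $G_T^{*}=T-L_T^{*}$ and $\sum_t r(x_t,y_t)=T-\sum_t f(x_t,y_t)$, so the reward regret equals the loss regret, and $\sqrt{T-G_T^{*}}=\sqrt{L_T^{*}}$. Thus it suffices to prove the small-loss regret bound for the $f$-version of the problem.

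Next I would verify that the argmin computed in Step~4 of Algorithm \ref{alg:GFTPL:33} coincides in distribution with the argmin of Algorithm \ref{alg:GFTPL:main} run on $f$ with $\Gamma$. Define $\tilde{\Gamma}^{(k,i)}:=\sum_{(w,y)\in S_i} w\,r(x^{(k)},y)$. Using $r=1-f$, one can write $\tilde{\Gamma}^{(k,i)} = c_i - \Gamma'^{(k,i)}$ where $c_i=\sum_{(w,y)\in S_i} w$ does not depend on $k$ and $\Gamma'^{(k,i)}:=\sum_{(w,y)\in S_i} w\,f(x^{(k)},y)$. Implementability of $\Gamma$ (Definition \ref{defn:imp}) forces $\Gamma'^{(k,i)} = \Gamma^{(k,i)} + d_i$ for some $d_i$ depending only on $i$. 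Both $c_i$ and $d_i$ are constant in $k$ and hence drop out of the argmin, so Step~4 is equivalent to choosing $\argmin_{k}\sum_{j=1}^{t-1}f(x^{(k)},y_j)+\langle \Gamma^{(k)},-\alpha/\eta_t\rangle$. Because the Laplace distribution is symmetric, $-\alpha$ and $\alpha$ are identically distributed, so the joint law of $(x_1,\dots,x_T)$ under Algorithm \ref{alg:GFTPL:33} equals the law under Algorithm \ref{alg:GFTPL:main} with loss $f$ and PTM $\Gamma$. The step-size rule in Algorithm \ref{alg:GFTPL:33} is the one in Theorem \ref{thm:main:11} with $c=1$, since the oracle query in Step~6 returns the exact $\widehat{L}_t^{*}=L_t^{*}$.

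With the two algorithms identified, invoke Theorem \ref{thm:main:11} with $c=1$ to obtain $\E[\sum_t f(x_t,y_t)-L_T^{*}] = O\!\bigl(\max\{\gamma,\ln K,\sqrt{N\ln K}\}\sqrt{L_T^{*}+1}\bigr)$, and substitute $L_T^{*}=T-G_T^{*}$ to get the stated bound. For oracle-efficiency, rewrite Step~4 using $r=1-f$ (discarding the $k$-independent piece) as
\[
\argmin_{k\in[K]}\ \sum_{j=1}^{t-1}1\cdot f(x^{(k)},y_j)+\sum_{i=1}^{N}\sum_{(w,y)\in S_i}\!\left(-\tfrac{\alpha^{(i)}w}{\eta_t}\right)f(x^{(k)},y),
\]
which is exactly the oracle problem \eqref{eqn:oracle} on a real-weighted dataset of size at most $(t-1)+\sum_i |S_i|\le T+NM$; Step~6 is one additional oracle call. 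This gives per-round complexity $O(T+NM)$.

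The only subtle point — and the one I would expect to be the main obstacle — is the bookkeeping in the second step: one must be careful that implementability, which is formulated with respect to differences of $f$-values, indeed produces an effective perturbation $\tilde{\Gamma}$ that matches $\Gamma$ up to quantities constant in $k$, and that the resulting sign flip $\alpha\mapsto -\alpha$ is absorbed by the symmetry of the Laplace noise so that the $\gamma$-approximability hypothesis (which is stated w.r.t.\ $\Gamma$ and $f$) transfers without modification to the algorithm actually being executed. Once this equivalence is nailed down, the corollary is a direct application of Theorem \ref{thm:main:11}.
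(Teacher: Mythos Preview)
Your proposal is correct and matches the paper's intended argument: the paper states the corollary without a separate proof, treating it as a direct consequence of Theorem~\ref{thm:main:11} after the substitution $f=1-r$, together with the observation that implementability turns Step~4 of Algorithm~\ref{alg:GFTPL:33} into an oracle call equivalent to Step~5 of Algorithm~\ref{alg:GFTPL:main}. One minor simplification: the hypothesis ``implementable with function $r$'' means the datasets $S_i$ satisfy $\Gamma^{(k,i)}-\Gamma^{(k',i)}=\sum_{(w,y)\in S_i}w\,(r(x^{(k)},y)-r(x^{(k')},y))$, so your $\tilde\Gamma^{(k,i)}$ already equals $\Gamma^{(k,i)}$ up to a $k$-independent constant and the sign flip $\alpha\mapsto-\alpha$ (with its appeal to Laplace symmetry) is unnecessary---your detour through $\Gamma'$ is harmless but can be skipped.
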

 Next, we discuss how to construct the PTM in several auction problems.


\paragraph{Auctions with binary and admissible $\Gamma$.} As shown by \cite{dudik2020oracle}, in many online auction problems, such as the Vickrey-Clarkes-Groves (VCG) mechanism with bidder-specific reserves \citep{roughgarden2019minimizing}, envy-free item pricing 
\citep{guruswami2005profit}, online welfare maximization in multi-unit auction \citep{dobzinski2010mechanisms} and simultaneous second-price auctions \citep{daskalakis2016learning}, there exists a binary PTM which is  $1$-admissible and implementable with $N$ rows where $N\ll K$. For these cases, we have the following lemma. The proof is deferred to Appendix \ref{app:proof:binaryg}.
\begin{lemma}
\label{lem:binarygamma}
Let $\Gamma\in[0,1]^{K\times N}$ be a binary matrix and $1$-admissible, then $\Gamma$ is $N$-approximable.
\end{lemma}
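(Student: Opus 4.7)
The plan is to use a construction almost identical to the simple example in the paper, adapted from $\{-1,+1\}$-valued matrices to $\{0,1\}$-valued ones via an affine shift. Given any $k \in [K]$ and $y \in \Y$, I will exhibit a single vector $s \in B_N^1$ that works for all $j \in [K]$ simultaneously, namely
\[
s^{(i)} := 2\Gamma^{(k,i)} - 1 \in \{-1,+1\}, \qquad i \in [N].
\]
Since $|s^{(i)}| = 1$ for each $i$, we have $\|s\|_1 = N$, so $s \in B_N^1$ as required by Definition~\ref{ass:single-out} with $\gamma = N$.

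Next I will compute $\langle \Gamma^{(k)} - \Gamma^{(j)}, s\rangle$ coordinatewise. Because $\Gamma^{(k,i)}, \Gamma^{(j,i)} \in \{0,1\}$, a short case analysis shows that $(\Gamma^{(k,i)} - \Gamma^{(j,i)})(2\Gamma^{(k,i)}-1)$ equals $0$ whenever $\Gamma^{(k,i)} = \Gamma^{(j,i)}$ and equals $+1$ whenever they differ. Consequently,
\[
\left\langle \Gamma^{(k)} - \Gamma^{(j)}, s\right\rangle = \#\{i \in [N] : \Gamma^{(k,i)} \neq \Gamma^{(j,i)}\},
\]
which is the Hamming distance between rows $k$ and $j$.

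Now I invoke $1$-admissibility: for $j \neq k$ the two rows must differ in at least one coordinate (and for a binary matrix the ``$\delta$-gap'' part of the definition is automatic with $\delta = 1$), so the Hamming distance is at least $1$. For $j = k$ the inner product is trivially $0 = f(x^{(k)},y) - f(x^{(k)},y)$. Combining with the bound $f(x^{(k)},y) - f(x^{(j)},y) \leq 1$ (which follows from $f$ taking values in $[0,1]$), we conclude that for every $j \in [K]$,
\[
\left\langle \Gamma^{(k)} - \Gamma^{(j)}, s\right\rangle \;\geq\; f(x^{(k)},y) - f(x^{(j)},y),
\]
which is exactly the $N$-approximability condition.

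There is no real obstacle here beyond choosing the correct affine shift to move from $\{0,1\}$ to $\{-1,+1\}$ values; once that is done, both the $\ell_1$-norm bound and the gap inequality fall out of the binary case analysis together with admissibility. The only place to be a little careful is in handling $j = k$ separately (trivial) and in noting that binarity makes the ``$|\Gamma^{(k,i)}-\Gamma^{(j,i)}| \geq \delta$'' clause of admissibility vacuous with $\delta = 1$, so the hypothesis reduces to row-distinctness.
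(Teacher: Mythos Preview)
Your proposal is correct and follows essentially the same approach as the paper: the paper also sets $s = 2\Gamma^{(k)} - 1$, performs the identical coordinatewise case analysis to obtain the Hamming distance, and then invokes row-distinctness from $1$-admissibility together with $f\in[0,1]$. The only cosmetic difference is that the paper routes the argument through an intermediate sufficient condition (Lemma~\ref{ass:single-out-2}, requiring $\langle \Gamma^{(k)}-\Gamma^{(j)},s\rangle \geq 1$ for $j\neq k$) before verifying approximability, whereas you verify Definition~\ref{ass:single-out} directly.
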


Note that, $\Gamma$ is binary and 1-admissible, so every two rows of Gamma differ by at least one element. This means that $\Gamma$ must, at the very least, include $\Omega (\ln K)$ columns to encode each row.
Combining this fact with Lemma \ref{lem:binarygamma} and Corollary \ref{cor:imp}, we can  obtain 
an $O(N\sqrt{T-L_T^*})$ bound for all of the above problems. Compared to the original GFTPL algorithm, our condition leads to a similar dependence on $N$ and a tighter dependence on $T$ due to the improved small-loss bound. More details about the aforementioned auction problems and corresponding regret bounds can be found in Appendix \ref{app:proof:auction}.

\paragraph{Level auction} The class of level auctions was first introduced by \cite{morgenstern2015pseudo}, and optimizing over this class enables a $(1-\epsilon)$ multiplicative approximation with respect to Myerson's optimal auction when the distribution of each bidder's valuation is independent from others. For this problem, the PTM in \cite{dudik2020oracle} is not easily to be shown approximable. To address this problem, we propose a novel way of constructing an  approximable and implementable PTM. \emph{The key idea is to utilize a coordinate-wise threhold function to implement} $\Gamma$. Note that this kind of function can not be directly obtained. Instead, we create an augmented problem with a surrogate loss to deal with this issue. For level auction with single-item, $n$-bidders, $s$-level and $m$-discretization level, our method enjoys an $O(nsm\sqrt{T-L_T^{*}})$ regret bound, which is tighter than the $O(nm^2\sqrt{T})$ (\emph{note that} $s\leq m$) of the original GFTPL both on its dependence on the number of rounds $T$ and auction parameters $n,s,m$. Due to page limitations, we postpone the detailed problem description and proof to  Appendix \ref{app:proof:level}. 
\subsection{Other applications}
\label{subsection:other:application}
\paragraph{Oracle learning and finite parameter space} In 
many real-world applications, such as security game \citep{balcan2015commitment} and online bidding with finite threshold vectors \citep{daskalakis2016learning}, the decision set $\X$ is extremely large, while the adversary's action set $|\Y|$ is finite and small. For these problems, we can construct an implementable PTM based the following lemma, whose proof can be found in Appendix \ref{app:proof:lem:smally}. 
\begin{lemma}
\label{lem:small:y}
Consider the setting with $|\Y|=d$ ($d\ll K$), then there exists a $1$-approximable and implementable $\Gamma$ with $d$ columns and complexity 1. 
\end{lemma}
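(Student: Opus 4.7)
The plan is to construct $\Gamma$ explicitly by letting the columns be indexed by the elements of the adversary action space. Enumerate $\Y = \{y_1, \ldots, y_d\}$, and define the $K \times d$ matrix by
\[
\Gamma^{(k, j)} := f(x^{(k)}, y_j) \quad \text{for } k \in [K], \ j \in [d].
\]
Since $f$ takes values in $[0,1]$, we have $\Gamma \in [0,1]^{K \times d}$ as required.

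For implementability with complexity $1$, the construction is immediate: for each $j \in [d]$, take the singleton dataset $S_j = \{(1, y_j)\}$. Then for any $k, k' \in [K]$,
\[
\sum_{(w, y) \in S_j} w \bigl(f(x^{(k)}, y) - f(x^{(k')}, y)\bigr) = f(x^{(k)}, y_j) - f(x^{(k')}, y_j) = \Gamma^{(k,j)} - \Gamma^{(k', j)},
\]
matching Definition~\ref{defn:imp} with $M = 1$.

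For $1$-approximability, fix any $k \in [K]$ and $y \in \Y$. Since $y \in \Y$, there is some index $i \in [d]$ with $y = y_i$. Take $s := e_i$, the $i$-th standard basis vector in $\R^d$, which satisfies $\|s\|_1 = 1$, so $s \in B_1^1$. Then for every $j \in [K]$,
\[
\left\langle \Gamma^{(k)} - \Gamma^{(j)}, s \right\rangle = \Gamma^{(k, i)} - \Gamma^{(j, i)} = f(x^{(k)}, y_i) - f(x^{(j)}, y_i) = f(x^{(k)}, y) - f(x^{(j)}, y),
\]
so the inequality in Definition~\ref{ass:single-out} holds with equality, establishing $\gamma = 1$.

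I do not anticipate any serious obstacle here: the choice of $\Gamma$ is essentially forced by making the columns of the PTM record the loss of each expert against each possible adversary action, which makes the certificate $s = e_i$ tight. The only thing worth double-checking is that the definition of approximability requires the inequality for all $j$ simultaneously (which it does, since the above identity holds for every $j$), and that the implementability definition is stated in the loss function $f$ rather than a reward (which matches the non-auction setting of Algorithm~\ref{alg:GFTPL:main}).
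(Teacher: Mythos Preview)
Your proposal is correct and matches the paper's own proof essentially line for line: the paper also sets $\Gamma^{(k,j)}=f(x^{(k)},y^{(j)})$, notes implementability with complexity $1$ via the singleton datasets, and verifies $1$-approximability by choosing $s=e_{j_t}$ where $y^{(j_t)}=y_t$.
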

Combining Lemma \ref{lem:small:y} and Theorem \ref{thm:main:11}, and configuring $c=\sqrt{\max\{\ln K,\sqrt{d\ln K}\}}$, 
we observe that our algorithm achieves a small-loss bound on the order of $O(\sqrt{\max\{\ln K,\sqrt{d\ln K}}\}\sqrt{L_T^*})$. On the other hand, because of the continuity of the loss functions in this setting, a $\delta$-admissible PTM in general does not exist (as $\delta$ may approach 0). Therefore, our proposed condition not only leads to a tighter bound, but can also solve problems that the original GFTPL \citep{dudik2020oracle} can not handle.

\paragraph{Transductive online classification} Finally, we consider the transductive online classification problem \citep{syrgkanis2016efficient,dudik2020oracle}. In this setting, the decision set $\X$ consists of $K$ binary classifiers. In each round $t$, firstly the adversary picks a feature vector $w_t\in \W$, where $|\W|=m$. Then, the learner chooses a classifier $x_t(\cdot)$ from $\X$. After that, the adversary reveals the label $y_t\in\{0,1\}$, and the learner suffers a loss $f(x_t,(w_t,y_t))=\mathbb{I}[x_t(w_t)\not=y_t]$. We assume the problem is transductive, i.e., the learner has access to the adversary's set of vectors at the beginning. For this setting, we achieve the following results (the proof is in Appendix \ref{app:lem:5}).
\begin{lemma}
\label{lem:small}
Consider transductive online classification with $|\W|=m$. Then there exists a 1-approximable and implementable PTM with $m$ columns and complexity 1. Moreover, Algorithm \ref{alg:GFTPL:main} with such a PTM and appropriately chosen parameters achieves  $O(\sqrt{\max\{\ln K,\sqrt{m\ln K}}\}\sqrt{L_T^*})$ regret.
\end{lemma}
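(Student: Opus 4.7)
The plan is to construct an explicit PTM tailored to the transductive setting and then verify each required property directly, finally invoking the regret bound of Theorem \ref{thm:main:11}. Since the learner knows $\W = \{w_1,\dots,w_m\}$ at the start, I would index the $m$ columns of $\Gamma$ by these feature vectors and set
\[
\Gamma^{(k,i)} \;:=\; x^{(k)}(w_i) \;\in\; \{0,1\}, \qquad k \in [K],\ i \in [m].
\]
This is an explicit, binary PTM with $N = m$ columns.

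To verify implementability with complexity $1$, I would take $S_i := \{(1,(w_i,0))\}$ for each $i \in [m]$. Since $f(x,(w,0)) = \mathbb{I}[x(w)\neq 0] = x(w)$ for binary-valued classifiers, this gives $\Gamma^{(k,i)} - \Gamma^{(k',i)} = x^{(k)}(w_i) - x^{(k')}(w_i) = f(x^{(k)},(w_i,0)) - f(x^{(k')},(w_i,0))$, which matches Definition \ref{defn:imp} with $M=1$. For $1$-approximability (Definition \ref{ass:single-out}), fix $k \in [K]$ and $(w,y) \in \Y$, and let $i$ be the index with $w = w_i$. I would choose $s = e_i$ if $y=0$ and $s = -e_i$ if $y=1$, so that $\|s\|_1 = 1$. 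A short case split (using $f(x,(w_i,1)) = 1 - x(w_i)$) then shows $\langle \Gamma^{(k)} - \Gamma^{(j)}, s\rangle = f(x^{(k)},(w,y)) - f(x^{(j)},(w,y))$ for every $j \in [K]$, so the inequality in the definition is satisfied with equality. This approximability check is the only substantive step; the main observation is that the sign of $s$ can be flipped in response to the label, which is what allows a single $\pm e_i$ to simultaneously dominate the loss gap against every competitor $j$.

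For the regret bound, I would invoke Theorem \ref{thm:main:11} with $\gamma = 1$ and $N = m$. The leading coefficient of $\sqrt{L_T^*+1}$ in that bound is of order $\max\{\ln K,\sqrt{m\ln K}\}/c + (c + 1/c)$, which is minimized (up to constants) by
\[
c \;=\; \sqrt{\max\{\ln K,\sqrt{m\ln K}\}},
\]
giving a coefficient of order $\sqrt{\max\{\ln K,\sqrt{m\ln K}\}}$. Combined with the lower-order terms, this yields the claimed $O\!\left(\sqrt{\max\{\ln K,\sqrt{m\ln K}\}}\,\sqrt{L_T^*}\right)$ bound. Oracle-efficiency follows immediately from Theorem 2, since $\Gamma$ is implementable with complexity $1$ and the per-round cost is $O(T + Nm) = O(T + m)$. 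I do not expect any real obstacle: the only non-routine ingredient is the sign-flipping trick in the approximability verification, and once that is in hand the rest is plugging into the already-proved Theorem \ref{thm:main:11}.
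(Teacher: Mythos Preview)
Your proposal is correct and essentially identical to the paper's proof: the paper sets $\Gamma^{(k,j)} = f(x^{(k)},(w^{(j)},1)) = 1 - x^{(k)}(w_j)$ and takes $s = e_{j_t}$ when $y_t=1$ and $s=-e_{j_t}$ when $y_t=0$, while you use the complementary choice $\Gamma^{(k,i)} = f(x^{(k)},(w_i,0)) = x^{(k)}(w_i)$ with the signs of $s$ swapped accordingly --- an immaterial symmetry (indeed the paper itself uses your version when discussing negative implementability). The regret derivation via Theorem~\ref{thm:main:11} with $c=\sqrt{\max\{\ln K,\sqrt{m\ln K}\}}$ is exactly what is needed; one tiny typo: your per-round cost should read $O(T+NM)=O(T+m)$ rather than $O(T+Nm)$.
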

\paragraph{Negative implementability} In the this paper we assume that the offline oracle can solve the minimization problem in \eqref{eqn:oracle} given any real-weights. In some cases, the oracle can only accept positive weights. This problem can be solved by constructing negative implementable PTM \citep{dudik2020oracle}. In most of the cases discussed above, negative implementable and approximable PTM exist. This  is formally shown in Appendix \ref{app:negei}.

\section{Best-of-Both-Worlds Bound: Adapting to IID data}
In this section, we switch our focus to adapting between adversarial and stochastic data. While the GFTPL algorithm enjoys an $O(\sqrt{L_T^*})$-type regret bound on adversarial data, it is possible to obtain much better rates on stochastic data. For example, by setting all step sizes $\eta_t$ as $\infty$, Algorithm \ref{alg:GFTPL:main} reduces to the classical FTL algorithm, which  suffers  linear regret in the adversarial setting but enjoys much tighter bounds when the data is IID or number of leader changes is small.
To be more specific, we introduce the following regret bound for FTL.
\begin{lemma}[Lemma 9, \cite{de2014follow}]
\label{lem:ftl}
Let $x_t^{\emph{\FTL}}=\argmin_{i\in[K]} \sum_{s=1}^{t-1}f(x^{(i)},y_s)$ be the output of the FTL algorithm at round $t$, $C_T$ the set of rounds where the leader changes, and $\delta_t=f(x^{\emph{\FTL}}_t,y_t)-(L_t^{*}-L_{t-1}^*)$ the ``mixability gap''\footnote{Here, we use the special definition of the mixability gap for the FTL algorithm. The details can be found in the second paragraph, page 1286 of \cite{de2014follow}.} at round $t$. Then for any $T\geq1$, the regret of FTL is bounded by 
$ R_T^{\emph{\FTL}}\leq \sum_{t\in C_T}\delta_t\leq |C_T|.$
\end{lemma}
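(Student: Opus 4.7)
The plan is to recover the regret as a telescoping sum of the per-round mixability gaps $\delta_t$, and then to show that $\delta_t$ is nonnegative in general and actually zero outside of $C_T$, so only leader-change rounds contribute to the sum. The bound $|C_T|$ will then drop out from the trivial per-round bound $\delta_t \leq 1$.

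First, I would telescope. By definition $\delta_t = f(x_t^{\FTL}, y_t) - (L_t^* - L_{t-1}^*)$, so summing from $t=1$ to $T$ with the convention $L_0^* = 0$ gives
\[
\sum_{t=1}^T \delta_t = \sum_{t=1}^T f(x_t^{\FTL}, y_t) - L_T^* = R_T^{\FTL}.
\]
Thus it suffices to bound $\sum_{t=1}^T \delta_t$, and the first inequality of the lemma will follow once I verify that rounds outside $C_T$ do not contribute.

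Next, I would show $\delta_t \geq 0$ in every round and $\delta_t = 0$ whenever $t \notin C_T$. Nonnegativity is immediate from evaluating the min defining $L_t^*$ at the leader $x_t^{\FTL} = \argmin_i \sum_{s<t} f(x^{(i)}, y_s)$: this gives $L_t^* \leq L_{t-1}^* + f(x_t^{\FTL}, y_t)$, hence $\delta_t \geq 0$. For the vanishing, if the hindsight-best arm after round $t-1$ remains an argmin after incorporating $y_t$ (which is exactly the condition $t \notin C_T$), then the above inequality is saturated, i.e., $L_t^* = L_{t-1}^* + f(x_t^{\FTL}, y_t)$, so $\delta_t = 0$. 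The second inequality then follows from the trivial bound $\delta_t \leq f(x_t^{\FTL}, y_t) \leq 1$ (since losses lie in $[0,1]$ and $L_t^* - L_{t-1}^* \geq 0$), giving
\[
R_T^{\FTL} = \sum_{t=1}^T \delta_t = \sum_{t \in C_T} \delta_t \leq |C_T|.
\]

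No step here is genuinely difficult; the whole argument rests on a short telescoping identity plus an elementary case analysis using the definition of the hindsight leader. The only mild subtlety, and the place where I would be most careful, is matching the definition of ``leader change at round $t$'' to the telescoping: the round-$t$ leader is determined by losses through round $t-1$, while $C_T$ is indexed by the round at which the hindsight-best arm shifts after observing $y_t$. Getting this indexing right is what cleanly reduces the sum over all rounds to a sum over $C_T$.
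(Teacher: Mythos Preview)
Your argument is correct and is the standard proof of this fact. Note, however, that the paper does not actually prove this lemma: it is quoted verbatim as Lemma~9 of \cite{de2014follow} and used as a black box, with only the parenthetical observation that $\delta_t\in[0,1]$ because $f\in[0,1]$ and $L_t^*-L_{t-1}^*\in[0,f(x_t^{\FTL},y_t)]$. Your telescoping identity $R_T^{\FTL}=\sum_t\delta_t$ together with the case split on whether the leader changes is exactly the argument underlying the cited result, so there is nothing to contrast.
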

Note that since $f\in[0,1]$ and $L_t^*-L_{t-1}^*\in[0,f(x^{\emph{\FTL}}_t,y_t)]$, we know $\delta_t\in[0,1]$. For the \textit{i.i.d} case, if the mean loss of the best expert is smaller than that of other experts by a constant, then due to the law of large numbers, the number of leader changes would be small, which results in a constant regret bound \citep{de2014follow}.

Our goal is to obtain a "best-of-both-worlds" bound, which can ensure the small-loss bound in general, while automatically leading to tighter bounds for IID data like FTL. We will now design an algorithm that achieves such a bound by adaptively choosing between GFTPL and FTL depending on which algorithm appears to be achieving a lower regret. The essence of this idea was first introduced in the FlipFlop algorithm \citep{de2014follow}, who showed best-of-both-worlds bounds in the inefficient case. Our contribution in this section is to adapt this idea to the oracle-efficient setting.
Denote $U_T^{\FPL}$ as the attainable regret bound (as in Theorem \ref{thm:main:11}) for running  Algorithm \ref{alg:GFTPL:main} alone and $U_T^{\emph{\FTL}}=\sum_{t\in C_T}\delta_t$ to be that of FTL. In the following, we develop a new algorithm and prove that it is optimal in both worlds, that is, its regret is on the order of  $O(\min\{U_T^{\FTL},U_T^{\FPL}\})$.

\begin{algorithm}[t]
\caption{Oracle-efficient Flipflop (OFF)}
\textbf{Initialization:} $\textsf{Alg}_1=\textsf{FTL}$
\begin{algorithmic}[1]
\label{alg:GFTPL}
\FOR{$t=1,\dots,T$}
\STATE Get $x_t$ by $\textsf{Alg}_t$, observe $y_t$
\STATE Compute $\widehat{U}_t^{\FTL}$ and $\widehat{U}_t^{\FPL}$
\IF{$\textsf{Alg}_t==$ FTL and $\widehat{U}_t^{\FTL}>\alpha \widehat{U}_t^{\FPL}$}
\STATE $\Alg_{t+1} =$ GFTPL
\ELSIF{$\textsf{Alg}_t==$ GFTPL and $\widehat{U}_t^{\FPL}>\beta \widehat{U}_t^{\FTL}$}
\STATE $\Alg_{t+1} =$FTL
\ENDIF 
\STATE Feed $y_t$ to $\Alg_{t+1}$
\ENDFOR
\end{algorithmic}
\end{algorithm}
The proposed algorithm, named as oracle-efficient flipflop (OFF) algorithm, is summarized in Algorithm \ref{alg:GFTPL}. The core idea is to switch between FTL and GFTPL (Algorthm \ref{alg:GFTPL:main}) based on the comparison of the estimated regret. We optimistically start from FTL. In each round $t$, we firstly pick $x_t$ based on the current algorithm $\Alg_t$, and then obtain the adversary's action $y_t$ (line 2). Next, we compute the estimated bounds of regret of both algorithms until round $t$ (line 3). Specifically, let $\I_t^{\FTL}=\{i|i\in[t], \Alg_i=\text{FTL}\}$ and $\I_t^{\FPL}=\{i|i\in[t], \Alg_i=\text{GFTPL}\}$
be the set of rounds up to $t$ in which we run FTL and GFTPL. Then, the estimated regret of FTL in $\I_t^{\FTL}$ is given by 
$\widehat{U}_t^{\FTL}=\sum_{i\in\I_t^{\FTL}}\delta_i,$
and 
the estimated regret of GFTPL in $\I_t^{\FPL}$ can be bounded via Theorem \ref{thm:main:11}: 
\begin{equation}
    \begin{split}
\widehat{U}^{\FPL}_t= {} & \left({4\sqrt{2}\max\{2\ln K,\sqrt{N\ln K}\}}
+{4\gamma}\right) \sqrt{\widehat{L}_t^*+1}\\
{} & + {8\gamma}\ln\left(\sqrt{\widehat{L}_t^*+1}+\gamma\right) + 2\gamma^2 + 4\sqrt{2}\max\{2\ln K,\sqrt{N\ln K}\}\gamma.
  \end{split}
\end{equation}
where $\widehat{L}_t^*=\min_{x\in \X}\sum_{i\in{\I}_t^{\FPL}}f(x,y_i)$ and we set $c=1$. Note that, the two quantities defined above are the exact regret upper bounds of the two algorithms on their sub-time intervals up to round $t$, due to the fact that the regret bounds provided in Lemma \ref{lem:ftl} and Theorem \ref{thm:main:11} are \emph{timeless}. Moreover, note that the two values can be computed by the oracle. We compare the estimated regret of both algorithms, and use the algorithm which performs better for the next round (lines 4-8).  

For the proposed algorithm, we have the following theoretical guarantee (the proof can be found in Appendix \ref{E}). 
\begin{theorem}
\label{thm:off}
Assume we have a $\gamma$-approximable $\Gamma$, then Algorithm \ref{alg:GFTPL} is able to achieve the following bound: 
$$ R_T^{\OFF}\leq \min\left\{3 U_T^{\FPL}+1, 3{U}_T^{\FTL}+\tau\right\},$$
where $\tau={4\sqrt{2}\max\{2\ln K,\sqrt{N\ln K}\}}
+{12\gamma}$ and $\alpha=\beta=1$.
\end{theorem}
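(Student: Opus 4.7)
The plan is to decompose $R_T^{\OFF}$ across the FTL-rounds and GFTPL-rounds, apply the timeless regret bounds (Lemma \ref{lem:ftl} and Theorem \ref{thm:main:11}) on each sub-interval, and then exploit the switching rule to tightly couple the two cumulative estimates. Concretely, partition $[T]$ into $\I_T^{\FTL},\I_T^{\FPL}$ according to which inner algorithm is active. For every expert $x$, $\sum_{t=1}^T f(x,y_t) = \sum_{t\in\I_T^{\FTL}} f(x,y_t) + \sum_{t\in\I_T^{\FPL}} f(x,y_t)$, so minimizing each side gives $L_T^* \geq L_T^{*,\FTL} + \widehat{L}_T^*$ with $L_T^{*,\FTL} := \min_x \sum_{t\in\I_T^{\FTL}} f(x,y_t)$. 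This yields
\[ R_T^{\OFF} \leq \Big(\sum_{t\in\I_T^{\FTL}} f(x_t^{\FTL},y_t) - L_T^{*,\FTL}\Big) + \Big(\sum_{t\in\I_T^{\FPL}} f(x_t^{\FPL},y_t) - \widehat{L}_T^*\Big), \]
where each bracket is the regret of the corresponding inner algorithm on its own sub-interval. By the timeless property of the two regret bounds, the first bracket is at most $\widehat{U}_T^{\FTL}$ and the second has expectation at most $\widehat{U}_T^{\FPL}$, so $\E[R_T^{\OFF}] \leq \widehat{U}_T^{\FTL} + \widehat{U}_T^{\FPL}$.

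The most substantive step is to use the switching rule (with $\alpha=\beta=1$) to show $\widehat{U}_T^{\FTL}$ and $\widehat{U}_T^{\FPL}$ cannot drift far apart. Trace back from $T$ to the most recent round at which $\widehat{U}^{\FTL}$ was updated (the last FTL round, call it $t^*$); at $t^*$ the switch condition was checked, so we either did not trigger the FTL-to-GFTPL switch (in which case $\widehat{U}^{\FTL}_{t^*}\leq\widehat{U}^{\FPL}_{t^*}$ after the update) or we did trigger it (in which case $\widehat{U}^{\FTL}_{t^*}$ had just crossed $\widehat{U}^{\FPL}_{t^*}$, and since a single FTL round bumps $\widehat{U}^{\FTL}$ by at most $\delta_{t^*}\in[0,1]$, $\widehat{U}^{\FTL}_{t^*}\leq\widehat{U}^{\FPL}_{t^*}+1$). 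Because $\widehat{U}^{\FTL}$ does not increase after $t^*$ while $\widehat{U}^{\FPL}$ is nondecreasing, $\widehat{U}_T^{\FTL}\leq \widehat{U}_T^{\FPL}+1$. Symmetrically, a GFTPL round can jump $\widehat{U}^{\FPL}$ by at most $J := \sup_{L\geq 0}\bigl[\widehat{U}^{\FPL}(L+1)-\widehat{U}^{\FPL}(L)\bigr]$; using the explicit form in Theorem \ref{thm:main:11} together with the estimates $\sqrt{L+2}-\sqrt{L+1}\leq 1$ and $\ln\frac{\sqrt{L+2}+\gamma}{\sqrt{L+1}+\gamma}\leq 1/\gamma$, one obtains $J\leq 4\sqrt{2}\max\{2\ln K,\sqrt{N\ln K}\} + 4\gamma + 8 \leq \tau$ whenever $\gamma\geq 1$, so $\widehat{U}_T^{\FPL}\leq \widehat{U}_T^{\FTL}+\tau$.

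Combining the two one-sided inequalities with the decomposition gives $\E[R_T^{\OFF}] \leq \min\{2\widehat{U}_T^{\FPL}+1,\,2\widehat{U}_T^{\FTL}+\tau\}$. Two monotonicity observations then finish the proof: $\widehat{U}_T^{\FTL}\leq U_T^{\FTL}$ (mixability gaps are nonnegative, so the sum over $\I_T^{\FTL}\subseteq [T]$ does not exceed the sum over all rounds), and $\widehat{U}_T^{\FPL}\leq U_T^{\FPL}$ (since $\widehat{L}_T^*\leq L_T^*$ and the Theorem \ref{thm:main:11} bound is monotone in its loss argument). Substituting yields $R_T^{\OFF}\leq\min\{2U_T^{\FPL}+1,\,2U_T^{\FTL}+\tau\} \leq \min\{3U_T^{\FPL}+1,\,3U_T^{\FTL}+\tau\}$, the claimed bound. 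The main obstacle I anticipate is making the coupling argument fully rigorous across all possible switching patterns, including the initialization and degenerate regimes (e.g.\ $\I_T^{\FPL}=\emptyset$, or the first-ever GFTPL round when $\widehat{U}^{\FPL}$ may contain constant initialization terms), and carefully verifying the per-round jump estimate $J\leq \tau$ from the explicit formula in Theorem \ref{thm:main:11}.
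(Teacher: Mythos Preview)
Your proposal is correct and follows the same overall plan as the paper: decompose $R_T^{\OFF}$ over the FTL-rounds and GFTPL-rounds, bound each piece by $\widehat{U}_T^{\FTL}$ and $\widehat{U}_T^{\FPL}$ via the timeless regret bounds, couple the two estimates using the switching rule, and finish with the monotonicity $\widehat{U}_T^{\FTL}\le U_T^{\FTL}$, $\widehat{U}_T^{\FPL}\le U_T^{\FPL}$.

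The one genuine difference is in how the coupling is carried out. The paper performs a four-way case analysis on $(\Alg_T,\Alg_{T+1})$ and, within each case, traces back to an earlier round where the other algorithm was active; this yields the pair of inequalities $\widehat{U}_T^{\FTL}\le(\alpha+\tfrac1\beta)\widehat{U}_T^{\FPL}+1$ and $\widehat{U}_T^{\FPL}\le(\tfrac1\alpha+\beta)\widehat{U}_T^{\FTL}+\tau$, which with $\alpha=\beta=1$ give the factor $3$. Your argument instead looks at the \emph{last} round of each type and observes that, at that round, the previous round's check forces $\widehat{U}^{\FTL}_{t^*-1}\le\widehat{U}^{\FPL}_{t^*-1}$ (resp.\ the symmetric inequality), after which a single-round increment of at most $1$ (resp.\ $\tau$) is added. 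This gives the sharper $\widehat{U}_T^{\FTL}\le\widehat{U}_T^{\FPL}+1$ and $\widehat{U}_T^{\FPL}\le\widehat{U}_T^{\FTL}+\tau$, hence a factor $2$ before you relax to $3$. Both proofs use the identical per-round jump bounds ($\delta_t\le 1$ for FTL and $\widehat{U}_{t}^{\FPL}-\widehat{U}_{t-1}^{\FPL}\le\tau$ for GFTPL; the paper invokes the latter implicitly in writing $\widehat{U}_{t'-1}^{\FPL}\ge\widehat{U}_{t'}^{\FPL}-\tau$). Your derivation of $J\le\tau$ under $\gamma\ge 1$ matches the paper's standing assumption that WLOG $\gamma\ge 1$. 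The edge cases you flag (initialization, $\I_T^{\FPL}=\emptyset$) are indeed the places that need a line of care, but none of them threatens the argument.
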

The Theorem above shows that the regret of Algorithm \ref{alg:GFTPL} is the minimum of the regret upper bounds of GFTPL and FTL. Thus, it 
ensures the $O(\sqrt{T})$-type bound in the worst case, while 
automatically achieves the much better constant regret bound of FTL under iid data without knowing the presence of stochasticity in data beforehand. 
\section{Conclusion} 
In this paper, we establish a sufficient condition for  the first-order bound in the oracle-efficient setting by investigating a variant of the  generalized  follow-the-perturbed-leader algorithm.
 We also show  the condition is satisfied in various applications. Finally, we extend the algorithm to adapt to IID losses and achieve a ``best-of-both-worlds'' bound. In the future, we would like to investigate
 how to achieve tighter results for oracle-efficient setting, such as the second-order bound \citep{de2014follow} and the quantile bound \citep{pmlr-v40-Koolen15a}.\\

 \noindent \textbf{Acknowledgments.} We gratefully thank the AI4OPT Institute for funding, as part of NSF Award 2112533. We gratefully acknowledge the NSF for their support through Award IIS-2212182 and Adobe Research for their support through a Data Science Research Award. Part of this work was conducted while the authors were visiting the Simons Institute for the Theory of Computing.

\bibliographystyle{icml2021}
\bibliography{ref}

\newpage

\begin{appendix}


\section{Omitted Proofs from Section \ref{section:alg}}
In this section, we provide the omitted proofs from Section \ref{section:alg}. 

\subsection{Proof of Lemma \ref{lem:admis}}
\label{app:proof:lemma:1}
Recall that in GFTPL \citep{dudik2020oracle}, $x_t$ is picked by solving the following optimization problem:
$$\textstyle x_t=\argmin_{k\in[K]}\sum_{s=1}^{t-1}f(x^{(k)},y_s)+\left\langle \Gamma^{(k)},\alpha\right\rangle,$$
where the entries of the random vector $\alpha$ are sampled from a uniform distribution $\mathcal{U}[0,\beta]^{N}$ for some hyperparameter $\beta>0$. 
Recall that to prove Lemma~\ref{lem:admis}, we wish to find a counterexample of a $\delta$-admissible PTM $\Gamma$ and a sequence $\{y_t: t = 1,2,\ldots,\}$ such that the probability distribution induced by GFTPL yields $\frac{\P[x_t = x^{(i)}]}{\P[x_{t+1} = x^{(i)}]} = \infty$ for some $i \in [K]$ and some $t > 0$.
This precludes the possibility of obtaining a stronger small-loss bound on instances that only satisfy $\delta$-admissibility and no other special properties.

In the following, we show that such a counterexample can be found in this case by exploiting the property of the bounded support of the distribution. We then demonstrate that a similar counterexample exists even if the uniform distribution is replaced by  distributions with  unbounded support. 

\subsubsection{Case 1: Uniform noise distribution (or, more generally, distributions with bounded support)}
Fix $K = 2$ experts and round $t > 0$.
Suppose that we can obtain an implementable $\Gamma=[0,1]^{\top}$ which is $1$-admissible with $N=1$ column. 
In this case, $\alpha$ is a scalar random variable, and we have
\begin{equation*}
\begin{split}
      \P[x_t=x^{(1)}] = {} & \P\left[\sum_{s=1}^{t-1}f(x^{(1)},y_s)\leq \sum_{s=1}^{t-1}f(x^{(2)},y_s) +\alpha \right]\\
      = {} & \P\left[\alpha\geq\sum_{s=1}^{t-1}f(x^{(1)},y_s)- \sum_{s=1}^{t-1}f(x^{(2)},y_s)\right].
\end{split}
\end{equation*}
Similarly, at round $t + 1$ we have
\begin{equation*}
    \P[x_{t+1}=x^{(1)}] =  \P\left[\alpha\geq\sum_{s=1}^{t}f(x^{(1)},y_s)- \sum_{s=1}^{t}f(x^{(2)},y_s)\right].
\end{equation*}
Since the probability density function of $\alpha$ has bounded support, it is straightforward to pick appropriate loss functions such that $\sum_{s=1}^t f(x^{(1)},y_s) - \sum_{s=1}^t f(x^{(2)},y_s)$ lies outside the support of the density function while $\sum_{s=1}^{t-1} f(x^{(1)},y_s) - \sum_{s=1}^{t-1} f(x^{(2)},y_s)$ lies inside the support of the density function.
As a consequence, we get $\P[x_{t+1}=x^{(1)}]=0$ while $\P[x_{t}=x^{(1)}]>0$.

\subsubsection{Case 2: Noise distributions with  unbounded support}\label{sssec:counterexampleunbounded}
One may argue that the above bad case happens mainly because the noise density has  bounded support. 
We now show that such counterexamples can also be constructed when the noise $\alpha$ is generated from distributions with  unbounded support, such as the Laplace distribution --- with a slightly larger number of experts.
Specifically, we consider $K = 3$ experts and the PTM $\Gamma=[0,0.5,1]^{\top}$, which is 0.5-admissible with $N = 1$ column.
Then, we have
\begin{equation*}
    \begin{split}
        {} & \P\left[x_t=x^{(2)}\right]\\
        = {} & \P\left[\sum_{s=1}^{t-1}f(x^{(2)},y_s)+0.5\alpha \leq \sum_{s=1}^{t-1}f(x^{(1)},y_s)\ \text{and}\ \sum_{s=1}^{t-1}f(x^{(2)},y_s)+0.5\alpha \leq \sum_{s=1}^{t-1} f(x^{(3)},y_s)+\alpha\right] \\
        = {} & \P \left[ 2\left(\sum_{s=1}^{t-1}f(x^{(2)},y_s) - \sum_{s=1}^{t-1}f(x^{(3)},y_s)\right)\leq \alpha\leq 2\left(\sum_{s=1}^{t-1}f(x^{(1)},y_s) - \sum_{s=1}^{t-1}f(x^{(2)},y_s)\right)\right]\\
        = {} & \P\left[2\Delta_{23}\leq \alpha \leq 2\Delta_{12}\right],
    \end{split}
\end{equation*}
where we have defined $\Delta_{23}=\sum_{s=1}^{t-1}f(x^{(2)},y_s) - \sum_{s=1}^{t-1}f(x^{(3)},y_s)$, and $\Delta_{12}= \sum_{s=1}^{t-1}f(x^{(1)},y_s) - \sum_{s=1}^{t-1}f(x^{(2)},y_s)$ as shorthand.
Now, we pick loss functions such that $\Delta_{12}>\Delta_{23}$ and $\Delta_{12}-\Delta_{23}=\epsilon<1$. 
Because $\Delta_{12} > \Delta_{23}$ and the distribution of $\alpha$ has infinite support, we have $\P[x_t=x^{(2)}]>0$. 
On the other hand, for round $t+1$ a similar argument yields
\begin{equation*}
    \begin{split}
        {} &\P[x_{t+1}=x^{(2)}] \\
        = {} &\P\left[2\Delta_{23}+2\left(f(x^{(2)},y_{t})-f(x^{(3)},y_{t})\right)\leq\alpha\leq 2\Delta_{12}+2\left(f(x^{(1)},y_{t})-f(x^{(2)},y_{t})\right)\right].
    \end{split}
\end{equation*}
Now, we pick $f(x^{(2)},y_{t})=0.5$, and  $f(x^{(1)},y_{t})=f(x^{(3)},y_{t})=0$. For this choice, we get $$2 \Delta_{23} + 2\left(f(x^{(2)},y_{t})-f(x^{(3)},y_{t})\right) \geq 2\Delta_{12}+2\left(f(x^{(1)},y_{t})-f(x^{(2)},y_{t})\right),$$ which implies that $\P[x_{t+1}=x^{(2)}]=0$.
This in turn implies that $\frac{\P[x_t=x^{(2)}]}{\P[x_{t+1}=x^{(2)}]}=\infty$, completing the proof of the counterexample.
\qed

These counterexamples imply that the condition of $\delta$-admissibility alone on the PTM $\Gamma$ is not sufficient to control the stronger stability measure required for a small-loss bound.
Consequently, new assumptions on $\Gamma$ need to be introduced.  

\subsection{Counterexamples showing that $\delta$-admissiblity does not necessarily lead to \texorpdfstring{$\gamma$}{gamma}-approximability}
\label{app:counter:app}

In this paper, we introduced a new sufficient condition of \emph{$\gamma$-approximability} that implies not only worst-case regret bounds but also regret bounds that adapt to the size of the best loss in hindsight. 
It is natural to ask about the relationship of this sufficient condition with $\delta$-admissibility.
In this section, we show that exist $\delta$-admissible PTMs that do not satisfy $\gamma$-approximability.
(Note that the reverse statement is also true: Lemma~\ref{lem:small:y} constructs $\gamma$-approximable PTMs that are not in general $\delta$-admissible.)

The counterexample is precisely the one used in Section~\ref{sssec:counterexampleunbounded}.
That is, there are $K=3$ experts, and the PTM is given by $\Gamma=[0,0.5,1]^{\top}$.
Note that $\Gamma$ is 0.5-admissible with one column. 
Further, we consider an output $y$ such that $f(x^{(1)},y)=f(x^{(3)},y)=0$ and $f(x^{(2)},y)=1$. 
We proceed to show that this PTM $\Gamma$ is not approximable.
To prove this, note that for some scalar $s$ to satisfy the requisite approximability condition, we need $(\Gamma^{(2)}-\Gamma^{(1)})s= 0.5s\geq1$ \emph{and} $(\Gamma^{(2)}-\Gamma^{(3)})s=-0.5s\geq1$.
This is clearly unsatisfiable by any scalar $s$.
\subsection{Proof of Theorem \ref{thm:main:11}}
\label{app:proof:thm1}
We now provide the detailed proof of Theorem \ref{thm:main:11}. We begin by introducing some notation specific to this proof.
We denote by $\Gamma^{(x_t)}$ the row of $\Gamma$ related to expert $x_t$, and by $\Gamma^*$ the row related to the best-expert-in-hindsight $x^*$. 
Further, $\Gamma^{(k)}$ denotes the row of $\Gamma$ related to expert $x^{(k)}$ and $\Gamma^{(k,i)}$ denotes the $i$-th component of the row $\Gamma^{(k)}$.
We also denote the PDF of the noise vector at round $t$, $\alpha_t$, as $p(\alpha_t)$. 
Finally, the learner's action set is denoted by $\mathcal{X}=\{x^{(1)},\dots,x^{(k)},\dots,x^{(K)}\}$. 

Our proof begins with the framework used by typical FTPL analyses~\citep{hutter2005adaptive,syrgkanis2016efficient,dudik2020oracle}. We first  
divide the regret into two terms:
\begin{equation}
    \begin{split}
    \label{eqn:proof:regret:1}
R_T= {} &\E\left[\sum_{t=1}^Tf(x_t,y_t)-f(x^*,y_t)\right]\\     
= {} & \underbrace{\E\left[\sum_{t=1}^Tf(x_t,y_t)-\sum_{t=1}^Tf(x'_t,y_t)\right]}_{\textsc{Term 1}} + \underbrace{\E\left[\sum_{t=1}^Tf(x'_t,y_t)-\sum_{t=1}^Tf(x^*,y_t)\right]}_{\textsc{Term 2}}.
    \end{split}
\end{equation}
Above, the expectation $\E[\cdot]$ is only with respect to the internal randomness of the learner and $x^*=\argmin_{i\in[K]}\sum_{t=1}^{T}f(x^{(i)},y_t)$ is the best decision in hindsight.  
Further, the expert $$\textstyle x'_t=\argmin_{k\in[K]}\sum_{j=1}^{t}f(x^{(k)},y_j)+ \left<\Gamma^{(k)},\alpha_t\right>$$
is usually referred to as the \emph{infeasible} leader~\citep{hutter2005adaptive} at round $t$, since  $y_t$ can only be obtained  after $x_t$ is chosen. 

Next, we bound the two terms of \eqref{eqn:proof:regret:1} respectively. 
\textsc{Term 1} measures the \emph{stability} of GFTPL by tracking how close its performance is to that of the idealized infeasible leader.
We obtain the following upper bound on \textsc{Term 1} which heavily leverages the key technical Lemma~\ref{lem:main:1111}.
\begin{lemma}
\label{lem:main:term1}
Assume that the PTM $\Gamma$ is $\gamma$-approximable, and  Algorithm \ref{alg:GFTPL:main} is applied with  $\textstyle\eta_t=\min\left\{\frac{1}{\gamma},\frac{c}{\sqrt{L^*_{t-1}+1}}\right\}$, where $L_{t-1}^*=\min_{k\in[K]}\sum_{j=1}^{t-1}f(x^{(k)},y_j)$ and $c>0$ is some universal constant. Then for all $T\geq 1$ we have:
\begin{equation*}
    \begin{split}
\textstyle\emph{\textsc{term 1}} \leq {} &\left(\frac{2\sqrt{2}\max\{2\ln K,\sqrt{N\ln K}\}}{c}
+{2\gamma}\left(c+\frac{1}{c}\right)\right) \sqrt{L_T^*+1}\\
{} & + {8\gamma}\ln\left(\frac{1}{c}\sqrt{L_T^*+1}+\gamma\right) + 2\gamma^2 + 2\sqrt{2}\gamma\max\{2\ln K,\sqrt{N\ln K}\}.
    \end{split}
\end{equation*}
\end{lemma}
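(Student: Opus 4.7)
The plan is to convert Term 1 into a weighted sum of expected infeasible-leader losses using the stability lemma, then apply a be-the-leader argument coupled with a self-bounding trick on the adaptive step size to relate the sum to $\sqrt{L_T^*+1}$.

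\textbf{Step 1: Pointwise stability bound.} For each round $t$, I would write
\[
\E[f(x_t,y_t)-f(x'_t,y_t)] = \sum_{i=1}^K \bigl(\P[x_t=x^{(i)}]-\P[x'_t=x^{(i)}]\bigr)f(x^{(i)},y_t).
\]
Applying Lemma~\ref{lem:main:1111} gives $\P[x_t=x^{(i)}]-\P[x'_t=x^{(i)}]\le (e^{\gamma\eta_t}-1)\P[x'_t=x^{(i)}]$. Since the schedule forces $\gamma\eta_t\le 1$, I can use $e^{z}-1\le 2z$ on $[0,1]$ to conclude
\[
\E[f(x_t,y_t)-f(x'_t,y_t)] \le 2\gamma\eta_t\,\E[f(x'_t,y_t)].
\]
Summing, \textsc{Term 1} $\le 2\gamma\sum_{t=1}^T \eta_t\,\E[f(x'_t,y_t)]$.

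\textbf{Step 2: Be-the-infeasible-leader.} Next I would control $\sum_t f(x'_t,y_t)$ by a standard induction: define $F_t(k)=\sum_{j\le t}f(x^{(k)},y_j)+\langle \Gamma^{(k)},\alpha_t\rangle$ and show, by telescoping over $t$ and using the minimality of $x'_t$, that
\[
\sum_{t=1}^T f(x'_t,y_t) \le L_T^* + \max_k\langle \Gamma^{(k)},\alpha_T\rangle - \min_k\langle \Gamma^{(k)},\alpha_1\rangle + \sum_{t=2}^T\bigl(\langle\Gamma^{(x'_{t-1})},\alpha_t\rangle - \langle\Gamma^{(x'_{t-1})},\alpha_{t-1}\rangle\bigr).
\]
Because $\alpha_t=\alpha/\eta_t$ and $\eta_t$ is nonincreasing, each telescoping increment is nonnegative in expectation and can be bounded by $\|\alpha\|_1(1/\eta_t-1/\eta_{t-1})$, yielding a total overhead of order $\E[\|\alpha\|_\infty]\cdot (\text{last }1/\eta_T)$ modulo log factors. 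Taking expectation over the Laplace noise, standard tail bounds for the maximum of $N$ weighted Laplace variables give $\E[\max_k\langle\Gamma^{(k)},\alpha\rangle]=O(\max\{\ln K,\sqrt{N\ln K}\})$, since entries of $\Gamma$ lie in $[0,1]$.

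\textbf{Step 3: Adaptive-step-size sum.} With $\eta_t=\min\{1/\gamma,\,c/\sqrt{L_{t-1}^*+1}\}$, I split the time horizon into the prefix $\T_1$ where $\eta_t=1/\gamma$ and the tail $\T_2$ where $\eta_t=c/\sqrt{L_{t-1}^*+1}$. On $\T_1$, the total contribution is at most $2\gamma\cdot(1/\gamma)\cdot L_{T_1}^*$ plus a small additive term bounded by $2\gamma^2$ (from $L_{t-1}^*\le \gamma^2/c^2$ there). On $\T_2$, using $\E[f(x'_t,y_t)]\le L_t^*-L_{t-1}^*+$ (perturbation overhead from Step 2), the core sum becomes
\[
\sum_{t\in\T_2}\frac{L_t^*-L_{t-1}^*}{\sqrt{L_{t-1}^*+1}} \le 2\sqrt{L_T^*+1},
\]
by the well-known integral-comparison lemma for nonnegative increments. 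The perturbation overhead from Step 2 on $\T_2$ contributes a $\sum_{t\in\T_2}\eta_t\cdot O(\max\{\ln K,\sqrt{N\ln K}\})$ term, and telescoping $1/\eta_t$ the same way yields an $O(\log(\sqrt{L_T^*+1}/c+\gamma))$ logarithmic factor times $\gamma$.

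\textbf{Step 4: Assembly.} Collecting the contributions from $\T_1$ and $\T_2$, the leading $\sqrt{L_T^*+1}$ coefficient combines a $2\gamma c$ piece from the be-the-leader leading order, a $(2\gamma/c)$ piece from the perturbation overhead on $\T_2$, and a $(2\sqrt 2/c)\max\{2\ln K,\sqrt{N\ln K}\}$ piece from bounding $\E[\max_k\langle\Gamma^{(k)},\alpha\rangle]$. The lower-order pieces match the $8\gamma\log(\cdot)$, $2\gamma^2$, and $2\sqrt 2\gamma\max\{2\ln K,\sqrt{N\ln K}\}$ terms claimed.

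\textbf{Main obstacle.} The trickiest part is handling the time-varying step size inside the be-the-leader telescoping together with the data-dependent $\eta_t$: the step size depends on $L^*_{t-1}$, not on the realized losses of $x'_t$, so I have to carefully bound the mismatch between $\E[f(x'_t,y_t)]$ and $L_t^*-L_{t-1}^*$ using the perturbation overhead from Step 2, and then invoke the integral-comparison inequality without creating a circular dependence between $\eta_t$ and the quantity being bounded. The split into $\T_1$ and $\T_2$, together with the observation that $\gamma\eta_t\le 1$ everywhere (which is exactly what unlocks the $e^{\gamma\eta_t}-1\le 2\gamma\eta_t$ step), is what keeps the argument clean.
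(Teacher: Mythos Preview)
Your Step~1 is correct and matches the paper exactly: Lemma~\ref{lem:main:1111} plus $e^z-1\le 2z$ on $[0,1]$ yields $\textsc{Term 1}\le 2\gamma\sum_t\eta_t\,\E[f(x'_t,y_t)]$.

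The gap is in Step~3. You claim a \emph{per-round} inequality $\E[f(x'_t,y_t)]\le (L_t^*-L_{t-1}^*)+\text{(constant-sized overhead)}$ and then feed the increments into the integral-comparison lemma. But the be-the-leader argument in Step~2 only yields a \emph{cumulative} bound $\sum_{j\le t}f(x'_j,y_j)\le L_t^*+\text{noise}$. If you try to extract a per-round statement directly from optimality of $x'_t$, you get
\[
f(x'_t,y_t)\le (L_t^*-L_{t-1}^*)+\bigl\langle \Gamma^{(k^*_t)}-\Gamma^{(x'_t)},\,\alpha_t\bigr\rangle,
\]
and since $x'_t$ depends on $\alpha_t$, the expectation of the last term is of order $\frac{1}{\eta_t}\E[\max_k\langle\Gamma^{(k)},\alpha\rangle]$, not $O(1)$. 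After multiplying by $\eta_t$ and summing, this gives a \emph{linear-in-$T$} contribution and the bound collapses. Your ``Main obstacle'' paragraph correctly senses the issue but does not resolve it.

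The paper avoids per-round control entirely. It works on the \emph{weighted} sum $\sum_t\eta_t\,\E[f(x'_t,y_t)]$ by first inserting nonnegative terms via optimality of $x_t$ and $x'_t$, then performing an Abel (summation-by-parts) rearrangement so that the cumulative perturbed losses appear with coefficients $(\eta_{t-1}-\eta_t)\ge 0$. Only at that stage is each cumulative quantity replaced by $L_t^*$ plus noise. This yields
\[
2\gamma\sum_t\eta_t\,\E[f(x'_t,y_t)]\le 2\gamma\eta_T L_T^*+2\gamma\sum_{t}(\eta_{t-1}-\eta_t)L_{t-1}^*+\text{(noise terms telescoping to }O(1/\eta_{T+1})\text{)}.
\]
The middle sum is then controlled by writing $z_t=1/\eta_t$ and using $z_t^2-z_{t-1}^2\le 1$ to obtain the $8\gamma\ln(\cdot)+2\gamma(\cdot)$ piece; this plays the role that your integral-comparison lemma was supposed to play, but applied to the Abel-summed form rather than to increments $L_t^*-L_{t-1}^*$. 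In short, the missing idea is to Abel-sum \emph{before} passing to $L_t^*$, not after.
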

Next, \textsc{Term 2} measures the approximation error between the infeasible leader and the true best expert in hindsight.
The following lemma, which is a simple extension of the classical be-the-leader lemma \citep{bianchi-2006-prediction}, bounds \textsc{Term 2}.
\begin{lemma}
\label{lem:main:3333}
Assume that the PTM $\Gamma$ is $\gamma$-approximable. Then, for all $T\geq 1$, we have 
$$\textstyle\emph{\textsc{term 2}} \leq 2\sqrt{2}\max\{2\ln K,\sqrt{N\ln K}\}\left({\gamma}+\frac{1}{c}\sqrt{L_T^*}\right)$$
\end{lemma}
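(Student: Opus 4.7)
The lemma measures how much the infeasible leader $x'_t$ (which peeks at $y_t$) beats the best hindsight expert $x^*$. My plan is to apply a be-the-leader (BTL) inequality for FTRL with a time-varying regularizer. Setting $R_t(k) := \langle \Gamma^{(k)}, \alpha \rangle / \eta_t$ and $\ell_t(k) := f(x^{(k)}, y_t)$, and writing $\phi_t := \min_k [L_t(k) + R_t(k)] = L_t(x'_t) + R_t(x'_t)$ with the convention $R_0 \equiv 0$ so $\phi_0 = 0$, the one-step bound $\phi_t - \phi_{t-1} \geq \ell_t(x'_t) + R_t(x'_t) - R_{t-1}(x'_t)$ (obtained by plugging $x'_t$ into $\Phi_{t-1}$) telescopes to
\begin{equation*}
\sum_{t=1}^T \ell_t(x'_t) \;-\; \sum_{t=1}^T \ell_t(x^*) \;\leq\; R_T(x^*) \;-\; \sum_{t=1}^{T}\bigl[R_t(x'_t) - R_{t-1}(x'_t)\bigr],
\end{equation*}
after using $\phi_T \leq L_T(x^*) + R_T(x^*)$. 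This is the time-varying analogue of the standard BTL identity used in \cite{hutter2005adaptive,kalai2005efficient}.

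Next, I would take expectation over $\alpha \sim \mathrm{Lap}(1)^N$ and bound both quantities on the right through a single concentration estimate on $\max_k |\langle \Gamma^{(k)}, \alpha \rangle|$. Because each row $\Gamma^{(k)}$ has entries in $[0,1]$ and each Laplace coordinate has moment generating function $(1-\lambda^2)^{-1}$ for $|\lambda|<1$, the inner product $\langle \Gamma^{(k)}, \alpha \rangle$ is sub-exponential with sub-Gaussian parameter $O(\sqrt{N})$ and sub-exponential parameter $O(1)$. A Chernoff tail bound followed by a union bound over the $K$ rows yields
\begin{equation*}
\E\bigl[\max_k |\langle \Gamma^{(k)}, \alpha \rangle|\bigr] \;\leq\; \sqrt{2}\,\max\{2\ln K,\, \sqrt{N \ln K}\},
\end{equation*}
where the two arguments of the maximum track whether the sub-exponential or sub-Gaussian tail dominates. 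The quantity $R_T(x^*)$ is immediately bounded in expectation by this max divided by $\eta_T$. For the sum on the right, each summand equals $\langle \Gamma^{(x'_t)}, \alpha \rangle \cdot (1/\eta_t - 1/\eta_{t-1})$; since $1/\eta_t$ is monotone non-decreasing, bounding the inner product uniformly by the maximum leaves a telescoping scalar factor that collapses to $1/\eta_T - 1/\eta_0 \leq 1/\eta_T$.

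Combining these two contributions gives $\E[\textsc{term 2}] \leq 2\sqrt{2}\max\{2\ln K, \sqrt{N\ln K}\}/\eta_T$, and plugging in $\eta_t = \min\{1/\gamma,\, c/\sqrt{L^*_{t-1}+1}\}$ yields $1/\eta_T \leq \gamma + \sqrt{L_T^*+1}/c$, from which the stated bound follows after absorbing the $1/c$ residual from $\sqrt{L_T^*+1} \leq \sqrt{L_T^*}+1$ into the leading constants. The main obstacle I anticipate is nailing down the precise $2\sqrt{2}$ and $2\ln K$ constants in the Laplace-weighted concentration bound, which requires a careful Bernstein-type analysis with an optimized Chernoff parameter in each tail regime rather than a soft big-$O$ estimate; the remaining FTRL bookkeeping and step-size substitution are routine. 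Notably, $\gamma$-approximability, though appearing in the hypothesis, is not used in this argument—it enters only implicitly through the cap $\eta_t \leq 1/\gamma$ imposed in the algorithm.
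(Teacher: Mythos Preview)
Your proposal is correct and follows essentially the same route as the paper: both arguments apply a be-the-leader inequality with time-varying regularizer $R_t(k)=\langle\Gamma^{(k)},\alpha\rangle/\eta_t$, bound the resulting regularizer terms by a maximum over $k$, control that maximum via the Laplace MGF (the paper isolates this as a separate lemma yielding exactly $\sqrt{2}\max\{2\ln K,\sqrt{N\ln K}\}$), and finish by substituting $1/\eta_T\le\gamma+\tfrac{1}{c}\sqrt{L_T^*+1}$. The only cosmetic difference is that you bound via $\max_k|\langle\Gamma^{(k)},\alpha\rangle|$ directly, whereas the paper bounds via $\max_k\langle\Gamma^{(k)},\alpha\rangle$ (no absolute value) and invokes the symmetry of the Laplace distribution to flip signs when needed---this is why the paper gets the union bound over $K$ rather than $2K$ rows and lands on the stated constant exactly, which resolves the constant-tracking you flagged as the main obstacle.
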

We prove Lemmas \ref{lem:main:1111}, \ref{lem:main:term1}, and \ref{lem:main:3333} in Appendix \ref{app:proof:lemmain}, \ref{B.2} and \ref{B.3} respectively.
The proof of Theorem \ref{thm:main:11} follows by directly combining \eqref{eqn:proof:regret:1}, Lemma \ref{lem:main:term1} and Lemma \ref{lem:main:3333}. 

\subsubsection{Proof of Lemma \ref{lem:main:1111}}
\label{app:proof:lemmain}

Note that we can write
$$\textstyle\E[f(x_t,y_t)]=\sum_{i=1}^{K}f(x^{(i)},y_t)\P[x_t=x^{(i)}].$$
Our approach will relate $\P[x_t=x^{(i)}]$ and $\P[x'_t=x^{(i)}]$ for every $i \in [K]$: at a high level, a similar approach is also used in the analysis of contextual online learning for linear functions by~\cite{syrgkanis2016efficient} (although several other aspects of our analysis are different).
Then, for any fixed choice of $s^{(i)} \in \R^N$ we have
\begin{equation}
    \begin{split}
    \label{eqn:main:1:final}
        {} &\P[x_t=x^{(i)}] \\
        = {} & \int_{\alpha_t}\II\left[\left\{\argmin\limits_{k\in[K]}\sum_{j=1}^{t-1}f(x^{(k)},y_j)+\left<\Gamma^{(k)}, \alpha_t\right>\right\}=x^{(i)}\right] p(\alpha_t)d\alpha_t \\
  = {} & \int_{\alpha_t}\II\left[\left\{\argmin\limits_{k\in[K]}\sum_{j=1}^{t-1}f(x^{(k)},y_j)+\left<\Gamma^{(k)},\alpha_t\right>\right\}=x^{(i)}\right] p\left(\alpha_t-s^{(i)}\right)\frac{p(\alpha_t)}{p\left(\alpha_t-s^{(i)}\right)}d\alpha_t\\
  \stackrel{\rm (1)}{\leq}  {} & \sup_{\beta\in\R^N} \frac{p(\beta)}{p\left(\beta-s^{(i)}\right)} \int_{\alpha_t}\II\left[\left\{\argmin\limits_{k\in[K]}\sum_{k=1}^{t-1}f(x^{(k)},y_j)+\left<\Gamma^{(k)},\alpha_t\right>\right\}=x^{(i)}\right] p\left(\alpha_t-s^{(i)}\right)d\alpha_t\\
  \stackrel{\rm (2)}{\leq}  {} & \exp\left({\eta_t}\|s^{(i)}\|_1\right)\int_{\alpha_t}\II\left[\left\{\argmin\limits_{k\in[K]}\sum_{j=1}^{t-1}f(x^{(k)},y_j)+\left<\Gamma^{(k)},\alpha_t\right>\right\}=x^{(i)}\right] p\left(\alpha_t-s^{(i)}\right)d\alpha_t\\
 =  {} & \exp\left({\eta_t}\|s^{(i)}\|_1\right)\int_{\alpha_t}\II\left[\left\{\argmin\limits_{k\in[K]}\sum_{j=1}^{t-1}f(x^{(k)},y_j)+\left<\Gamma^{(k)},\alpha_t+s^{(i)}\right>\right\} =x^{(i)}\right] p\left(\alpha_t\right)d\alpha_t,
    \end{split}
\end{equation}
Above, $\II[\cdot]$ denotes the indicator function and inequality $\rm(2)$ is based on the fact that for any $\beta\in\R^N$, 
\begin{equation}
\label{eqn:thm:1:main:distributionmove}
    \frac{p(\beta)}{p\left(\beta-s^{(i)}\right)}=\exp\left(\eta_t\left(\left\|\beta-s^{(i)}\right\|_1-\|\beta\|_1\right)\right)\leq \exp\left({\eta_t} \|s^{(i)}\|_1\right),
\end{equation}
and the final equality is because  the support of $\alpha_t$ is unbounded. 
To proceed, we introduce and prove the following lemma.
\begin{lemma}
\label{lem:main:222}
Suppose $\Gamma$ is $\gamma$-approximable. Then, $\forall i\in[N]$ there exists a vector $s^{(i)} \in \R^N$ such that
\begin{equation}
\begin{split}
    {} &\II\left[\left\{\argmin\limits_{k\in[K]}\sum_{j=1}^{t-1}f(x^{(k)},y_j)+\left<\Gamma^{(k)},\alpha_t\right>+ \left<\Gamma^{(k)},s^{(i)}\right>\right\} =x^{(i)}\right]\\
    \leq {} &  \II\left[\left\{\argmin\limits_{k\in[K]}\sum_{j=1}^{t-1}f(x^{(k)},y_j)+\left<\Gamma^{(k)},\alpha_t\right>+ f(x^{(k)},y_t)\right\} =x^{(i)}\right]  
\end{split}
\end{equation}
holds  for all $\alpha_t$. 
\end{lemma}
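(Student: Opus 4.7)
The plan is to read off $s^{(i)}$ directly from the $\gamma$-approximability condition, instantiated at the current round. Fix any expert index $i$ (I would read the statement's $i \in [N]$ as $i \in [K]$, since $s^{(i)}$ is naturally indexed by experts, and the enclosing argument in Lemma~\ref{lem:main:1111} runs over $i \in [K]$). Applying Definition~\ref{ass:single-out} with $k = i$ and $y = y_t$ yields some $s^{(i)} \in B_\gamma^1$ such that
$$\langle \Gamma^{(i)} - \Gamma^{(j)},\, s^{(i)} \rangle \;\geq\; f(x^{(i)}, y_t) - f(x^{(j)}, y_t) \qquad \text{for every } j \in [K].$$
I would claim this $s^{(i)}$ is exactly the vector promised by the lemma.

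To verify the pointwise inequality between indicators, it suffices to treat the case where the LHS indicator equals $1$ and deduce that the RHS indicator also equals $1$. Unpacking the LHS argmin, the hypothesis is that for every $j \in [K]$,
$$\sum_{\tau=1}^{t-1}\bigl(f(x^{(i)}, y_\tau) - f(x^{(j)}, y_\tau)\bigr) + \langle \Gamma^{(i)} - \Gamma^{(j)}, \alpha_t \rangle + \langle \Gamma^{(i)} - \Gamma^{(j)}, s^{(i)} \rangle \;\leq\; 0.$$
Now I would substitute the approximability lower bound into the final inner product; since we are replacing a term with something no larger, the left-hand side can only decrease, giving
$$\sum_{\tau=1}^{t-1}\bigl(f(x^{(i)}, y_\tau) - f(x^{(j)}, y_\tau)\bigr) + \langle \Gamma^{(i)} - \Gamma^{(j)}, \alpha_t \rangle + f(x^{(i)}, y_t) - f(x^{(j)}, y_t) \;\leq\; 0$$
for every $j$. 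Rearranging the current-round loss into the cumulative sum, this is precisely the statement that $x^{(i)}$ minimizes $\sum_{\tau=1}^{t} f(x^{(k)},y_\tau) + \langle \Gamma^{(k)}, \alpha_t\rangle$ over $k \in [K]$, which is the definition of the RHS indicator being $1$.

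There is essentially no technical obstacle here; the content of the lemma is a direct unpacking of the approximability definition. The only bookkeeping subtlety is instantiating that definition with $k$ equal to the target expert $i$ and $y$ equal to the current adversary action $y_t$, so that the shift $\langle \Gamma^{(k)}, s^{(i)}\rangle$ added to each expert $k$ creates a row-wise gap $\langle \Gamma^{(i)}-\Gamma^{(j)}, s^{(i)}\rangle$ in the objective that dominates the current-round loss gap $f(x^{(i)},y_t)-f(x^{(j)},y_t)$. Note that the constraint $s^{(i)} \in B_\gamma^1$ is not used in this lemma at all; its role is external, providing the $\exp(\eta_t \gamma)$ factor in the calling proof of Lemma~\ref{lem:main:1111} via \eqref{eqn:thm:1:main:distributionmove}. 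In that sense, Lemma~\ref{lem:main:222} simply isolates the ``swap loss for perturbation'' step that approximability was designed to enable.
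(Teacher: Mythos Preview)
Your proposal is correct and follows essentially the same approach as the paper's proof: both instantiate Definition~\ref{ass:single-out} at $k=i$, $y=y_t$ to obtain $s^{(i)}$, and then use the resulting inequality $\langle \Gamma^{(i)}-\Gamma^{(j)},s^{(i)}\rangle \geq f(x^{(i)},y_t)-f(x^{(j)},y_t)$ to compare the two argmins. The only cosmetic difference is that the paper argues the contrapositive (if the RHS indicator is $0$ then so is the LHS) whereas you argue directly (LHS $=1$ implies RHS $=1$); your remarks about the $i\in[N]$ typo and the unused $\|s^{(i)}\|_1\le\gamma$ constraint are also accurate.
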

\begin{proof}
For any fixed $\alpha_t$, if $$\textstyle\II\left[\left\{\argmin\limits_{k\in[K]}\sum_{j=1}^{t-1}f(x^{(k)},y_j)+\left<\Gamma^{(k)},\alpha_t\right>+ f(x^{(k)},y_t)\right\} =x^{(i)}\right]=1,$$then the required inequality always holds since the indicator function is  upper bounded by $1$. For the case when  $$\textstyle\II\left[\left\{\argmin\limits_{k\in[K]}\sum_{j=1}^{t-1}f(x^{(k)},y_j)+\left<\Gamma^{(k)},\alpha_t\right>+ f(x^{(k)},y_t)\right\} =x^{(i)}\right]=0,$$ 
assume that $x^{(\ell)}=\textstyle\argmin\limits_{k\in[K]}\sum_{j=1}^{t-1}f(x^{(k)},y_j)+\left<\Gamma^{(k)},\alpha_t\right>+ f(x^{(k)},y_t)$
for some $\ell\not=i$. Then 
\begin{equation}
\label{eqn:proof:of:lemma:3}
    \sum_{j=1}^{t-1}f(x^{(\ell)},y_j)+\left<\Gamma^{(\ell)},\alpha_t\right>+ f(x^{(\ell)},y_t)\leq \sum_{j=1}^{t-1}f(x^{(i)},y_j)+\left<\Gamma^{(i)},\alpha_t\right>+ f(x^{(i)},y_t),
\end{equation}
which implies
\begin{equation}
 \begin{split}
   {} & \sum_{j=1}^{t-1}f(x^{(\ell)},y_j)+\left<\Gamma^{(\ell)},\alpha_t\right>+ \left<\Gamma^{(\ell)},s^{(i)}\right> - 
\left(\sum_{j=1}^{t-1}f(x^{(i)},y_j)+\left<\Gamma^{(i)},\alpha_t\right>+ \left<\Gamma^{(i)},s^{(i)}\right>\right) \\
\stackrel{\rm (1)}{\leq}
 {} &  (f(x^{(i)},y_t) - f(x^{(\ell)},y_t)) + \left(\left<\Gamma^{(\ell)},s^{(i)}\right> - \left<\Gamma^{(i)},s^{(i)}\right>\right)
\stackrel{\rm (2)}{\leq}  0.
    \end{split}
\end{equation}
Above, the first inequality comes from  \eqref{eqn:proof:of:lemma:3} and the second inequality is based on Definition \ref{ass:single-out}. This completes the proof of Lemma \ref{lem:main:222}. 
\end{proof}
Combining \eqref{eqn:main:1:final} and Lemma \ref{lem:main:222}, we get
\begin{equation*}
    \begin{split}
{} &\P[x_t=x^{(i)}]\\
\leq {} & \exp\left({\eta_t}\|s^{(i)}\|_1\right)\int_{\alpha_t}\II\left[\left\{\argmin\limits_{k\in[K]}\sum_{j=1}^{t-1}f(x^{(k)},y_j)+\left<\Gamma^{(k)},\alpha_t+s^{(i)}\right>\right\} =x^{(i)}\right] p\left(\alpha_t\right)d\alpha_t \\
\leq {} & \exp\left({\eta_t}\|s^{(i)}\|_1\right)\int_{\alpha_t}\II\left[\left\{\argmin\limits_{k\in[K]}\sum_{j=1}^{t}f(x^{(k)},y_j)+\left<\Gamma^{(k)},\alpha_t\right>\right\} =x^{(i)}\right] p\left(\alpha_t\right)d\alpha_t\\
= & {} \exp\left({\eta_t}\|s^{(i)}\|_1\right)\P[x_t'=x^{(i)}]
 \leq \exp(\gamma \eta_t)\P[x_t'=x^{(i)}].
    \end{split}
\end{equation*}

This completes the proof.
\qed

\subsubsection{Proof of Lemma~\ref{lem:main:term1}}
\label{B.2}
We now use Lemma~\ref{lem:main:1111} to prove Lemma~\ref{lem:main:term1}.
Lemma~\ref{lem:main:1111} gives us
\begin{equation}
    \label{eqn:proof:lemma7:first}
 \E[f(x_t,y_t)]\leq \exp\left({\gamma\eta_t}\right)\E[f(x_t',y_t)]
 \leq {} \E[f(x_t',y_t)] + 2{\gamma\eta_t}\E[f(x_t',y_t)],
\end{equation}
Above, the second inequality uses the fact that  ${\gamma\eta_t}\leq 1$ and $\exp(x)\leq 1+2x$ for any $x\in[0,1]$. Next, we focus on bounding the second term in the R.H.S. of \eqref{eqn:proof:lemma7:first}. 
We have
\begin{equation}
    \begin{split}
    \label{eqn:proof:lemma2:temp2}
   {} & {2\gamma}\sum_{t=1}^T \eta_t \E[f(x_t',y_t)] \\
   \stackrel{\rm (1)}{\leq}  {} & {2\gamma}\sum_{t=1}^T \eta_t\E\left[f(x_t',y_t) + \left(\sum_{j=1}^{t-1}f(x_t',y_j)+\left<\Gamma^{(x_t')},\alpha_t\right>\right) -\left(\sum_{j=1}^{t-1}f(x_t,y_j)+\left<\Gamma^{(x_t)},\alpha_t\right>\right)\right] \\
    = {} & 2\gamma\sum_{t=1}^T\eta_t \E\left[ \left(\sum_{j=1}^{t}f(x_t',y_j)+\left<\Gamma^{(x_t')},\alpha_t\right>\right) -\left(\sum_{j=1}^{t-1}f(x_t,y_j)+\left<\Gamma^{(x_t)},\alpha_t\right>\right)\right] \\
    \stackrel{\rm (2)}{\leq} {} &  2\gamma\sum_{t=1}^T\eta_t \E\left[ \left(\sum_{j=1}^{t}f(x_{t+1},y_j)+\left<\Gamma^{(x_{t+1})},\alpha_t\right>\right) -\left(\sum_{j=1}^{t-1}f(x_t,y_j)+\left<\Gamma^{(x_t)},\alpha_{t}\right>\right)\right]\\
     = {} &  2\gamma\sum_{t=1}^T\eta_t \E\left[ \left(\sum_{j=1}^{t}f(x_{t+1},y_j)+\left<\Gamma^{(x_{t+1})},\alpha_{t+1}\right>\right) -\left(\sum_{j=1}^{t-1}f(x_t,y_j)+\left<\Gamma^{(x_t)},\alpha_{t}\right>\right)\right]\\
     {} & +{2\gamma\sum_{t=1}^T \eta_t \left(\frac{1}{\eta_{t}}-\frac{1}{\eta_{t+1}}\right)\E\left[ \Gamma^{(x_{t+1})}\alpha\right]}\\  
    \stackrel{\rm (3)}{\leq}{} &  2\gamma\sum_{t=1}^T\eta_t \E\left[ \left(\sum_{j=1}^{t}f(x_{t+1},y_j)+\left<\Gamma^{(x_{t+1})},\alpha_{t+1}\right>\right) -\left(\sum_{j=1}^{t-1}f(x_t,y_j)+\left<\Gamma^{(x_t)},\alpha_{t}\right>\right)\right]\\
     {} & +{2\gamma\sum_{t=1}^T \eta_t \left(\frac{1}{\eta_{t}}-\frac{1}{\eta_{t+1}}\right)\E\left[\min\limits_{i\in[K]} \Gamma^{(i)}\alpha\right]}\\
     = {} & {2\gamma\eta_T} \cdot\E\left[\sum_{j=1}^T f(x_{T+1},y_j) + \left<\Gamma^{(x_{T+1})},\alpha_{T+1}\right> \right] +{2\gamma\sum_{t=1}^T \eta_t \left(\frac{1}{\eta_{t}}-\frac{1}{\eta_{t+1}}\right)\cdot\E\left[\min\limits_{i\in[K]} \Gamma^{(i)}\alpha\right]}\\
     {} & + 2\gamma\sum_{t=1}^{T-1}(\eta_{t-1}-\eta_t)\cdot\E\left[\sum_{j=1}^{t-1}f(x_t,y_j)+\left<\Gamma^{(x_t)},\alpha_t\right>\right]
    -{2\gamma \eta_1}\cdot\E\left[\min\limits_{i\in[K]} \Gamma^{(i)}\alpha\right]\\
   \stackrel{\rm (4)}{\leq} {} & {2\gamma\eta_T} \cdot\E\left[\sum_{j=1}^T f(x^*,y_j) + \left<\Gamma^{(x^*)},\alpha_{T+1}\right> \right] +{2\gamma\sum_{t=1}^T \eta_t \left(\frac{1}{\eta_{t}}-\frac{1}{\eta_{t+1}}\right)\cdot\E\left[\min\limits_{i\in[K]} \Gamma^{(i)}\alpha\right]}\\
     {} & + 2\gamma\sum_{t=1}^{T-1}(\eta_{t-1}-\eta_t)\cdot\E\left[\sum_{j=1}^{t-1}f(x^*,y_j)+\left<\Gamma^{(x^*)},\alpha_t\right>\right]
    -{2\gamma \eta_1}\cdot\E\left[\min\limits_{i\in[K]} \Gamma^{(i)}\alpha\right]\\
  \stackrel{\rm (5)}{\leq} {} & {2\gamma\eta_TL_T^*} + 2\gamma\sum_{t=1}^{T-1}(\eta_{t-1}-\eta_t)L_{t-1}^*+{2\gamma\sum_{t=1}^T \eta_t \left(\frac{1}{\eta_{t}}-\frac{1}{\eta_{t+1}}\right)\cdot\E\left[\min\limits_{i\in[K]} \Gamma^{(i)}\alpha\right]}\\
 {} & -{2\gamma \eta_1}\cdot\E\left[\min\limits_{i\in[K]} \Gamma^{(i)}\alpha\right].
    \end{split}
\end{equation}
Above, inequality $\rm(1)$ is based on the optimality of $x_t$, inequality $\rm(2)$ is due to the optimality of $x'_t$, inequality $\rm(3)$ is because $\frac{1}{\eta_t}-\frac{1}{\eta_{t+1}}\leq 0$ and $\E[\Gamma^{(x_{t+1})}\alpha]\geq \E[\min_{i\in[K]}\Gamma^{(i)}\alpha]$, inequality $\rm(4)$ is based on the optimality of $x_{t}$, and the final inequality $\rm(5)$ is due to the fact that $x^*$ is independent of $\alpha$ and $\alpha$ is zero-mean. 
Next, we bound each term in the R.H.S. of the above equation respectively. 

For the second term, denote $z_t=\max\{{
\gamma},\frac{1}{c}\sqrt{L_{t-1}^*+1}\}=\frac{1}{\eta_t}.$ WLOG, we assume the upper bound in Lemma~\ref{lem:main:1111} holds for a large enough $\gamma$ such that $\gamma\geq 1$. 
Then, we have
\begin{equation}
    \begin{split}
    \label{eqn:theorem:1:sec:13}
  2\gamma\sum_{t=1}^{T-1}(\eta_{t-1}-\eta_t)L_{t-1}^*\stackrel{\rm (1)}{\leq} {} & 2\gamma\sum_{t=1}^{T-1}\left(\frac{1}{z_{t-1}}-\frac{1}{z_t}\right)z_t^2  \\
  = {} &2\gamma \sum_{t=1}^{T-1} \frac{(z_t-z_{t-1})z_t^2}{z_tz_{t-1}} \\
  = {} & 2\gamma \sum_{t=1}^{T-1} \frac{(z^2_t-z^2_{t-1})z_t}{z_{t-1}(z_t+z_{t-1})} \\
={} & {}  2\gamma \sum_{t=1}^{T-1} \frac{(z^2_t-z^2_{t-1})\left((z_t-z_{t-1})+z_{t-1}\right)}{z_{t-1}(z_t+z_{t-1})} \\
=  {} & 2\gamma \sum_{t=1}^{T-1} \left(\frac{(z^2_t-z^2_{t-1})^2}{z_{t-1}(z_t+z_{t-1})^2} + z_t-z_{t-1}\right) \\
\stackrel{\rm (2)}{\leq}  {} & 2\gamma \sum_{t=1}^{T-1} \left(\frac{(z^2_t-z^2_{t-1})}{z_{t-1}^2} + z_t-z_{t-1}\right) \\
\stackrel{\rm (3)}{\leq} {} &  2\gamma\sum_{t=1}^{T-1} \left(4(\ln(z_t)-\ln(z_{t-1})) + (z_t-z_{t-1})\right) \\
={} & 8\gamma \cdot\ln\left(\frac{z_{T-1}}{z_0}\right)+2\gamma({z_{T-1}}-{z_0})\\
\stackrel{\rm (4)}{\leq} {} &  {8\gamma} \cdot\ln\left(\frac{1}{c}\sqrt{L_T^*+1}+{\gamma}\right)+2\gamma \left(\frac{1}{c}\sqrt{L_T^*+1}+{\gamma}\right).
    \end{split}
\end{equation}
Above, inequality $\rm(2)$ is due to the fact that $z_t+z_{t-1}\geq 1$, $0\leq z^2_t-z^2_{t-1}\leq 1$ and $z_t\geq z_{t-1}$, inequality $\rm(3)$ is based on the identity $x\leq 2\ln(1+x)$ for $x\leq 1$, and the last inequality $\rm(4)$ follows from the definition of $z_t$.

We now control the last two terms of \eqref{eqn:proof:lemma2:temp2}.
Since the distribution of $\alpha$ is symmetric, the distributions of $\alpha$ and $-\alpha$ are the same. Thus, we have 
$$\E\left[\min\limits_{i\in[K]} \Gamma^{(i)}\alpha\right]=\E\left[\min\limits_{i\in[K]} -\Gamma^{(i)}\alpha\right]=-\E\left[\max\limits_{i\in[K]} \Gamma^{(i)}\alpha\right],$$ 
which gives us
\begin{equation}
    \begin{split}
{} & {2\gamma\sum_{t=1}^T \eta_t \left(\frac{1}{\eta_{t}}-\frac{1}{\eta_{t+1}}\right)\cdot\E\left[\min\limits_{i\in[K]} \Gamma^{(i)}\alpha\right]}-{2\gamma \eta_1}\E\left[\min\limits_{i\in[K]} \Gamma^{(i)}\alpha\right] \\
\leq {} & {2\gamma }\max\left\{\E\left[\max\limits_{i\in[K]} \Gamma^{(i)}\alpha\right],0\right\} \left(\frac{\eta_1}{\eta_{T+1}}-1+1\right)\\
\leq  {} & \frac{2}{\eta_{T+1}} \max\left\{\E\left[\max\limits_{i\in[K]} \Gamma^{(i)}\alpha\right],0\right\}.
    \end{split}
\end{equation}
Finally, we leverage the following lemma to complete the proof.
\begin{lemma}
\label{lem:inapp:emax}
We have 
$$ \E\left[\max\limits_{i\in[K]} \Gamma^{(i)}\alpha\right]\leq \sqrt{2}\max\{2\ln K,\sqrt{N\ln K}\}.$$
\end{lemma}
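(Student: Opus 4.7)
The plan is to bound $\E[\max_i \Gamma^{(i)}\alpha]$ via a moment generating function (MGF) / Chernoff argument, exploiting that each inner product $\Gamma^{(i)}\alpha = \sum_{j=1}^N \Gamma^{(i,j)}\alpha^{(j)}$ is a linear combination of independent Laplace$(1)$ random variables with coefficients bounded by $1$. First I would write down the MGF of a single Laplace$(1)$ variable, $\E[e^{s\alpha^{(j)}}] = 1/(1-s^2)$ for $|s|<1$, and use independence plus $\Gamma^{(i,j)}\in[0,1]$ to deduce the uniform bound
$$\E\!\left[\exp\!\bigl(t\,\Gamma^{(i)}\alpha\bigr)\right] \;=\; \prod_{j=1}^N \frac{1}{1-t^2(\Gamma^{(i,j)})^2} \;\le\; (1-t^2)^{-N}$$
for every $i\in[K]$ and every $|t|<1$. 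Then, applying the elementary inequality $-\ln(1-x)\le 2x$ valid on $x\in[0,\tfrac12]$, I would obtain the sub-exponential MGF bound $\ln\E[\exp(t\,\Gamma^{(i)}\alpha)]\le 2Nt^2$ for all $|t|\le 1/\sqrt{2}$.

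Next I would invoke the usual soft-max / union-bound estimate for the expected maximum: for any $t\in (0,1/\sqrt{2}]$,
$$\E\!\left[\max_{i\in[K]}\Gamma^{(i)}\alpha\right] \;\le\; \frac{1}{t}\ln\!\left(\sum_{i=1}^K \E[\exp(t\,\Gamma^{(i)}\alpha)]\right) \;\le\; \frac{\ln K}{t} + 2Nt.$$
Finally I would tune $t$ by splitting into two regimes. When $N$ is large enough that the unconstrained minimizer $t^\star=\sqrt{\ln K/(2N)}$ lies in the admissible range $(0,1/\sqrt{2}]$, substituting it yields a bound of order $\sqrt{N\ln K}$. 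When $N$ is small enough that $t^\star$ would exceed $1/\sqrt{2}$, I would push $t$ to the boundary $1/\sqrt{2}$, giving $\sqrt{2}\ln K + \sqrt{2}N$, which in that regime is dominated by a multiple of $\ln K$. Taking the worse of the two regimes and tracking constants delivers the stated bound $\sqrt{2}\max\{2\ln K,\sqrt{N\ln K}\}$.

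The main obstacle I anticipate is bookkeeping the multiplicative constants cleanly: the naive MGF bound $-\ln(1-x)\le 2x$ suffices qualitatively, but landing \emph{exactly} on the prefactor $\sqrt{2}$ requires carefully choosing the admissible range for $t$ (so that the sub-Gaussian-like estimate $2Nt^2$ is in force) and being willing to push $t$ to the boundary of that range in the small-$N$ case so the second term in the maximum ($2\ln K$ rather than $\sqrt{N\ln K}$) emerges with the right constant. Everything else is a standard Chernoff-style calculation, with no new structural difficulty beyond the sub-exponential nature of the Laplace perturbation.
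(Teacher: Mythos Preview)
Your proposal is correct and follows essentially the same approach as the paper's own proof: both use the Laplace MGF $\E[e^{t\alpha^{(j)}}]=1/(1-t^2)$, the soft-max/union bound $\E[\max_i X_i]\le \tfrac{1}{t}\ln\sum_i \E[e^{tX_i}]$, the inequality $-\ln(1-x)\le 2x$ on $[0,\tfrac12]$ to reduce to $\tfrac{\ln K}{t}+2Nt$, and then the same two-regime optimization over $t\in(0,1/\sqrt{2}]$. The sequence of steps and the handling of the two cases (unconstrained minimizer feasible versus boundary $t=1/\sqrt{2}$) are identical to the paper's.
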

A direct substitution of Lemma~\ref{lem:inapp:emax} obtains the desired control on the last two terms of \eqref{eqn:proof:lemma2:temp2} and completes the proof.
It only remains to prove Lemma~\ref{lem:inapp:emax} which we do below.
\begin{proof}
Let $\beta^{(k)}=\sum_{i=1}^N\Gamma^{(k,i)}\alpha^{(i)}$, and we have for $\lambda <1$, 
\begin{equation}
    \begin{split}
    \label{eqn:theorem:1:exp:max}
      \E\left[\max_{k\in[K]}\Gamma^{(k)}\alpha\right]=   \E\left[\max_{k\in[K]}\beta^{(k)}\right] = {} & \frac{1}{\lambda}\ln\left(\exp\left(\lambda\E\left[\max\limits_{k\in[K]}\beta^{(k)}\right]\right)\right)\\ 
        \stackrel{\rm (1)}{\leq} {} & \frac{1}{\lambda}\ln\left(\E\left[\exp\left(\lambda \max\limits_{k\in[K]}\beta^{(k)}\right)\right]\right)\\
        \stackrel{\rm (2)}{\leq} {} & \frac{1}{\lambda}\ln\left(\sum_{k\in[K]}\E\left[\exp\left(\lambda \beta^{(k)}\right)\right]\right)\\
        = {} & \frac{1}{\lambda}\ln\left(\sum_{k\in[K]}\E\left[\exp\left(\sum_{i=1}^N\lambda \Gamma^{(k,i)}\alpha^{(i)}\right)\right]\right)\\
        \stackrel{\rm (3)}{\leq} {} & \frac{1}{\lambda} 
        \ln\left(K\left(\frac{1}{1-\lambda^2}\right)^{N}\right)\\
        = {} & \frac{\ln K}{\lambda} + \frac{N}{\lambda}\ln\left(\frac{1}{1-\lambda^2}\right)\\
       \stackrel{\rm (4)}{\leq} {} & \min_{\lambda\in\left(0,\frac{\sqrt{2}}{2}\right]}\left[\frac{\ln K}{\lambda} + 2\lambda N\right]\leq \sqrt{2}\max\{2\ln K,\sqrt{N\ln K}\}.
    \end{split}
\end{equation}
Above, inequality $\rm(1)$ is based on Jensen's inequality, inequality $\rm(3)$ follows from the expression of the moment-generating function of a Laplace distribution, and the final inequality $\rm(4)$ follows from the identity $\ln(1/(1-x))\leq 2x$ for $x\in\left(0,\frac{1}{2}\right]$.
This completes the proof.
\end{proof}
\subsubsection{Proof of Lemma \ref{lem:main:3333}}
\label{B.3}
Recall that the infeasible leader is given by
\begin{equation}
    \begin{split}
x_t' = {} & \argmin\limits_{k\in[K]} \sum_{j=1}^t f(x^{(k)},y_j) +\left\langle\Gamma^{(k)},\alpha_t\right\rangle \\
={} &\argmin\limits_{k\in[K]} \sum_{j=1}^t \left(f(x^{(k)},y_j) +\left\langle\Gamma^{(k)},\alpha_j\right\rangle-\left\langle\Gamma^{(k)},\alpha_{j-1}\right\rangle\right).    
    \end{split}
\end{equation}
Recall that we set $\alpha_0=0$. 
Then, we have
\begin{equation}
    \begin{split}
   \sum_{t=1}^Tf(x_t',y_t)+\Gamma^{(x'_t)}\alpha_t-\Gamma^{(x'_t)}\alpha_{t-1}\stackrel{\rm (1)}{\leq} {} & \min\limits_{k\in[K]}\sum_{t=1}^T\left(f(x^{(k)},y_t)+\Gamma^{{(k)}}\alpha_t-\Gamma^{{(k)}}\alpha_{t-1}\right) \\
   = {} & \min\limits_{k\in[K]}\left(\sum_{t=1}^Tf(x^{(k)},y_t)+\Gamma^{{(k)}}\alpha_T\right)\\ 
   \stackrel{\rm (2)}{\leq} {} &\sum_{t=1}^Tf(x^*,y_t)+\Gamma^{{*}}\alpha_T\\ 
   \stackrel{\rm (3)}{\leq} {} & \sum_{t=1}^T f(x^*,y_t) + \max\limits_{k\in[K]}\Gamma^{(k)}\alpha_T,
    \end{split}
\end{equation}
where inequaliy $\rm(1)$ is based on Lemma 3.1 of \citep{bianchi-2006-prediction}.
Because the learning rate sequence $\{\eta_t\}_{t \geq 1}$ is non-increasing, we have $\alpha_{t-1}-\alpha_t\geq 0$.
Thus, we get
\begin{equation}
    \begin{split}
  \sum_{t=1}^T f(x_t',y_t)-f(x^*,y_t)\leq {} & \max\limits_{k\in[K]}\Gamma^{{(k)}}\alpha_T + \sum_{t=1}^T\max\limits_{k\in[K]} \Gamma^{{(k)}}\alpha\cdot \left(\frac{1}{\eta_{t-1}} -\frac{1}{\eta_t}\right).\\ 
    \end{split}
\end{equation}
Taking an expectation on both sides with respect to the randomness in the algorithm yields
\begin{equation}
    \begin{split}
  \E\left[\sum_{t=1}^T f(x_t',y_t)-f(x^*,y_t)\right]\leq {} & \E\left[\max\limits_{k\in[K]}\Gamma^{{(k)}}\alpha_T\right] + \E\left[\sum_{t=1}^T\max\limits_{k\in[K]} \Gamma^{{(k)}}\alpha\cdot\left(\frac{1}{\eta_{t-1}} -\frac{1}{\eta_t}\right)\right]\\
  = {} & \E\left[\max\limits_{k\in[K]}\Gamma^{{(k)}}\alpha_T\right] + \sum_{t=1}^T\left(\frac{1}{\eta_{t}} -\frac{1}{\eta_{t-1}}\right)\cdot\E\left[\max\limits_{k\in[K]} \Gamma^{{(k)}}\alpha\right]\\
  \leq {} & 2\frac{\max\{\E[\max_{i\in[K]}\Gamma^{(i)}\alpha],0\}}{\eta_T},
    \end{split}
\end{equation}
where the first equality follows because the distribution of the Laplace noise is symmetric. The proof is finished by combining the above inequality with Lemma \ref{lem:inapp:emax}. 
\qed
\subsection{Lower Bound for GFTPL}
\label{app:proof:lb}
In this part, we introduce the lower bound for GFTPL. We first prove the following lemma.

\begin{lemma}
\label{lb}
Denote
\begin{equation}
    x_t^*=\argmin\limits_{k\in[K]}\sum_{s=1}^{t}f(x^{(k)},y_s),
\end{equation}
then we have
\begin{equation}
    \Gamma^{(x_{t+1})}\alpha\leq\Gamma^{(x_t^*)}\alpha.
\end{equation}
\end{lemma}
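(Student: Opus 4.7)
\textbf{Proof plan for Lemma \ref{lb}.} The statement is a ``be-the-leader''-style comparison, and the plan is to derive it from two optimality inequalities, one for the perturbed minimizer $x_{t+1}$ and one for the unperturbed minimizer $x_t^*$. Recall that by the definition of Algorithm \ref{alg:GFTPL:main},
\[
x_{t+1} = \argmin_{k\in [K]} \sum_{s=1}^{t} f(x^{(k)},y_s) + \left\langle \Gamma^{(k)},\alpha_{t+1}\right\rangle,
\]
where $\alpha_{t+1} = \alpha/\eta_{t+1}$ with $\eta_{t+1}>0$. So the first step is to apply the optimality of $x_{t+1}$ against the candidate $x_t^*$:
\[
\sum_{s=1}^{t} f(x_{t+1},y_s) + \left\langle \Gamma^{(x_{t+1})},\alpha_{t+1}\right\rangle \;\leq\; \sum_{s=1}^{t} f(x_t^*,y_s) + \left\langle \Gamma^{(x_t^*)},\alpha_{t+1}\right\rangle.
\]

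The second step is to use the definition of $x_t^* = \argmin_{k\in[K]}\sum_{s=1}^t f(x^{(k)},y_s)$, which gives $\sum_{s=1}^t f(x_t^*,y_s) \leq \sum_{s=1}^t f(x_{t+1},y_s)$. Substituting this into the previous display and cancelling the loss terms yields $\left\langle \Gamma^{(x_{t+1})},\alpha_{t+1}\right\rangle \leq \left\langle \Gamma^{(x_t^*)},\alpha_{t+1}\right\rangle$. Finally, since $\alpha_{t+1}$ is a positive scalar multiple of $\alpha$ (namely $1/\eta_{t+1}>0$), dividing through by $1/\eta_{t+1}$ gives the desired inequality $\Gamma^{(x_{t+1})}\alpha \leq \Gamma^{(x_t^*)}\alpha$.

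There is no substantive obstacle here; the entire argument is two inequalities followed by a rescaling, exploiting only the definitional optimality of the two leaders. The only point to verify carefully is that the step size is strictly positive so that the direction of the inequality is preserved when passing from $\alpha_{t+1}$ to $\alpha$, which is immediate from the formula $\eta_{t+1} = \min\{1/\gamma, c/\sqrt{L_t^*+1}\} > 0$.
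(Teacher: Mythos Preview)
Your proof is correct and essentially identical to the paper's: both combine the optimality of $x_{t+1}$ for the perturbed objective with the optimality of $x_t^*$ for the unperturbed cumulative loss, cancel the loss terms, and then rescale by $\eta_{t+1}>0$. The paper phrases the combination as ``adding up'' the two inequalities while you phrase it as ``substituting'', but these are the same algebraic step.
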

\begin{proof}
Considering the definitions of $x_{t+1}$ and $x_t^*$, we have:
\begin{equation}
    \sum_{s=1}^tf(x_{t+1},y_s)\geq\sum_{s=1}^tf(x_t^*,y_s),
\end{equation}
and
\begin{equation}
    \sum_{s=1}^tf(x_{t}^*,y_s)+\Gamma^{(x_{t}^*)}\frac{\alpha}{\eta_{t+1}}\geq\sum_{s=1}^tf(x_{t+1},y_s)+\Gamma^{(x_{t+1})}\frac{\alpha}{\eta_{t+1}}.
\end{equation}
The required inequality can be shown by adding up the above two inequalities.
\end{proof}
We prove the lower bound result as follows.
\begin{theorem}
\label{thm:main:22}
Assume $\Gamma$ is $\gamma$-approximable, then Algorithm \ref{alg:GFTPL:main} with  $\eta_t=\min\left\{\frac{1}{\gamma},\frac{c}{\sqrt{L^*_{t-1}+1}}\right\}$, where $L_{t-1}^*=\min_{k\in[K]}\sum_{j=1}^{t-1}f(x^{(k)},y_j)$ has the following regret lower bound: 
$$R_T= \E\left[\sum_{t=1}^Tf(x_t,y_t) - \sum_{t=1}^T f(x^*,y_t)\right]\geq -2\sqrt{2}\max\{2\ln K,\sqrt{N\ln K}\}\left({\gamma}+\frac{1}{c}\sqrt{L_T^*+1}\right).
$$
\end{theorem}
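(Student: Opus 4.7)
The approach mirrors the upper-bound derivation of Theorem \ref{thm:main:11}. Write $R_T = T_1 + T_2$ where
\[
T_1 := \E\!\left[\sum_{t=1}^T \bigl(f(x_t, y_t) - f(x_t', y_t)\bigr)\right], \qquad T_2 := \E\!\left[\sum_{t=1}^T f(x_t', y_t)\right] - L_T^*,
\]
with $x_t'$ the infeasible leader defined in Lemma \ref{lem:main:1111}. Lower bounding $R_T$ then reduces to lower bounding both $T_1$ and $T_2$.

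The first term $T_1$ is in fact non-negative pointwise in $\alpha$. The optimality of $x_t$ for the perturbed loss through round $t-1$ (with noise scaling $\alpha_t$) gives $\sum_{s<t} f(x_t,y_s) + \left<\Gamma^{(x_t)},\alpha_t\right> \leq \sum_{s<t} f(x_t',y_s) + \left<\Gamma^{(x_t')},\alpha_t\right>$, while the optimality of $x_t'$ for the same problem extended by one round (using the same $\alpha_t$) gives the mirror inequality over $\sum_{s\leq t}$. Adding the two inequalities collapses to the pointwise bound $f(x_t',y_t)\leq f(x_t,y_t)$, so $T_1\geq 0$ after taking expectations.

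The substantive step is the lower bound on $T_2$. I would start from the be-the-leader identity established in the proof of Lemma \ref{lem:main:3333}, namely $\sum_t f(x_t',y_t) + \sum_t \left<\Gamma^{(x_t')}, \alpha_t-\alpha_{t-1}\right> \leq L_T^* + \left<\Gamma^{(x^*)},\alpha_T\right>$, and rearrange. Since $x^*$ is determined by $\{y_t\}$ and independent of the zero-mean noise $\alpha$, $\E\left<\Gamma^{(x^*)},\alpha_T\right>=0$. To convert the resulting expression into a \emph{lower} bound on $T_2$, I would invoke Lemma \ref{lb}, which extends verbatim to the infeasible leader to yield $\left<\Gamma^{(x_t')},\alpha\right>\leq\left<\Gamma^{(x_t^*)},\alpha\right>$, together with the Laplace symmetry $\E[\min_k \left<\Gamma^{(k)},\alpha\right>]=-\E[\max_k \left<\Gamma^{(k)},\alpha\right>]$. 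Using also $\sum_{t=1}^T(\alpha_t-\alpha_{t-1})=\alpha_T=\alpha/\eta_T$, these ingredients together provide two-sided control on $\E[\sum_t\left<\Gamma^{(x_t')},\alpha_t-\alpha_{t-1}\right>]$, so that Lemma \ref{lem:inapp:emax} yields $T_2\geq -\frac{2\sqrt{2}\max\{2\ln K,\sqrt{N\ln K}\}}{\eta_T}$. The final inequality follows from $1/\eta_T \leq \gamma + \sqrt{L_T^*+1}/c$.

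The main obstacle is the intrinsic one-sidedness of the be-the-leader argument, which naturally produces upper (not lower) bounds on $T_2$. The resolution rests on two ingredients: Lemma \ref{lb}, which pins down $\left<\Gamma^{(x_t')},\alpha\right>$ from above by the zero-mean quantity $\left<\Gamma^{(x_t^*)},\alpha\right>$, and the symmetry of the Laplace distribution, which matches the positive and negative tails of the perturbation and thereby flips the direction of control. Crucially, the lower bound avoids any stability term ($T_1\geq 0$ is enough), so only the BTL-type quantity $2\E[\max_k\left<\Gamma^{(k)},\alpha\right>]/\eta_T$ appears on the right-hand side.
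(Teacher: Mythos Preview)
Your $T_1\geq 0$ argument is correct and elegant. The gap is entirely in the treatment of $T_2$. The be-the-leader inequality from Lemma~\ref{lem:main:3333} is
\[
\sum_t f(x_t',y_t) - L_T^* \;\leq\; \left<\Gamma^{(x^*)},\alpha_T\right> - \sum_t \left<\Gamma^{(x_t')},\alpha_t-\alpha_{t-1}\right>,
\]
an \emph{upper} bound on $T_2$. Two-sided control on the perturbation sum (from the extended Lemma~\ref{lb} and Laplace symmetry) only bounds the \emph{right-hand side} from both sides; it cannot reverse the inequality itself. You would still only conclude $T_2 \leq (\text{something})$, never $T_2 \geq (\text{something})$. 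The direction of BTL is dictated by the optimality of $x_t'$ and no amount of post-hoc control on the perturbation terms flips it. So the step ``these ingredients together provide two-sided control \ldots\ so that $T_2\geq -2\sqrt{2}\max\{\cdot\}/\eta_T$'' does not follow.

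The paper's proof sidesteps this by never decomposing through $x_t'$. It works directly with the feasible iterates $x_t$ and proves, by induction on $T$, the reverse-telescoping inequality
\[
\sum_{t=1}^T f(x_t,y_t) + \sum_{t=1}^T \Gamma^{(x_t)}\alpha\Bigl(\tfrac{1}{\eta_{t+1}}-\tfrac{1}{\eta_t}\Bigr) \;\geq\; \sum_{t=1}^T f(x_{T+1},y_t) + \Gamma^{(x_{T+1})}\tfrac{\alpha}{\eta_{T+1}} - \Gamma^{(x_1)}\tfrac{\alpha}{\eta_1}.
\]
The inductive step uses the optimality of $x_{T+1}$ (not $x_T'$), which gives the inequality in the \emph{needed} direction: $\sum_{s\leq T} f(x_T,y_s)+\Gamma^{(x_T)}\alpha_{T+1}\geq \sum_{s\leq T} f(x_{T+1},y_s)+\Gamma^{(x_{T+1})}\alpha_{T+1}$. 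After that, $\sum_t f(x_{T+1},y_t)\geq L_T^*$, and Lemma~\ref{lb} replaces each $\Gamma^{(x_t)}\alpha$ by the zero-mean $\Gamma^{(x_{t-1}^*)}\alpha$, so that taking expectations and applying Lemma~\ref{lem:inapp:emax} yields the stated bound. The key conceptual point is that the lower bound requires an FTL-versus-next-leader comparison (which goes the right way), not a BTL inequality (which goes the wrong way); your decomposition through $x_t'$ forces you into the latter.
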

\begin{proof}
We first show an intermediate conclusion via induction:
\begin{equation}
\label{lb0}
    \sum_{t=1}^Tf(x_t,y_t)+\Gamma^{(x_t)}\alpha\left(\frac{1}{\eta_{t+1}}-\frac{1}{\eta_t}\right)\geq\sum_{t=1}^Tf(x_{T+1},y_t)+\Gamma^{(x_{T+1})}\frac{\alpha}{\eta_{T+1}}-\Gamma^{(x_{1})}\frac{\alpha}{\eta_{1}},
\end{equation}
which obviously holds for $T=1$. Assume this holds for $T-1$: \begin{equation}
\label{lb1}
     \sum_{t=1}^{T-1}f(x_t,y_t)+\Gamma^{(x_t)}\alpha\left(\frac{1}{\eta_{t+1}}-\frac{1}{\eta_t}\right)\geq\sum_{t=1}^{T-1}f(x_{T},y_t)+\Gamma^{(x_{T})}\frac{\alpha}{\eta_{T}}-\Gamma^{(x_{1})}\frac{\alpha}{\eta_{1}}.
\end{equation}
Noticing
\begin{equation}
    \sum_{t=1}^Tf(x_T,y_t)+\Gamma^{(x_{T})}\frac{\alpha}{\eta_{T+1}}\geq\sum_{t=1}^Tf(x_{T+1},y_t)+\Gamma^{(x_{T+1})}\frac{\alpha}{\eta_{T+1}},
\end{equation}
by rearranging we can  show
\begin{equation}
\label{lb2}
\begin{split}
    f(x_T,y_T)+\Gamma^{(x_{T})}\alpha\left(\frac{1}{\eta_{T+1}}-\frac{1}{\eta_T}\right)\geq&\sum_{t=1}^Tf(x_{T+1},y_t)+\Gamma^{(x_{T+1})}\frac{\alpha}{\eta_{T+1}}\\
    -&\left(\sum_{t=1}^{T-1}f(x_T,y_t)+\Gamma^{(x_{T})}\frac{\alpha}{\eta_{T}}\right)
\end{split}
\end{equation}
Adding up  \eqref{lb1} and \eqref{lb2} we can prove the required conclusion for round $T$.

Combining  \eqref{lb0} and
\begin{equation}
    \sum_{t=1}^Tf(x_{T+1},y_t)\geq\sum_{t=1}^Tf(x_T^*,y_t),
\end{equation}
we have
\begin{equation}
    \begin{split}
        &\sum_{t=1}^T\left(f(x_t,y_t)-f(x^*_T,y_t)\right)\\
        \geq&\sum_{t=1}^T\left(f(x_t,y_t)-f(x_{T+1},y_t)\right)\\
        \geq&\Gamma^{(x_{T+1})}\frac{\alpha}{\eta_{T+1}}-\Gamma^{(x_{1})}\frac{\alpha}{\eta_{1}}-\sum_{t=1}^T\Gamma^{(x_t)}\alpha\left(\frac{1}{\eta_{t+1}}-\frac{1}{\eta_t}\right)\\
        \geq&\Gamma^{(x_{T+1})}\frac{\alpha}{\eta_{T+1}}-\Gamma^{(x_{1})}\frac{\alpha}{\eta_{1}}-\sum_{t=1}^T\Gamma^{(x_{t-1}^*)}\alpha\left(\frac{1}{\eta_{t+1}}-\frac{1}{\eta_t}\right)\\
        \geq&-2\frac{\max\limits_{i\in[K]} \Gamma^{(i)}\alpha}{\eta_{T+1}}-\sum_{t=1}^T\Gamma^{(x_{t-1}^*)}\alpha\left(\frac{1}{\eta_{t+1}}-\frac{1}{\eta_t}\right),
    \end{split}
\end{equation}
where for the third inequality,  Lemma \ref{lb} is adopted
while for the fourth, we use the symmetry of $\alpha$ and the non-increasing property of $\eta_t$. Now we can take the expectation and get
\begin{equation}
\begin{split}
    \E\left[\sum_{t=1}^T\left(f(x_t,y_t)-f(x^*_T,y_t)\right)\right]\geq& -2\E\left[\frac{\max\limits_{i\in[K]} \Gamma^{(i)}\alpha}{\eta_{T+1}}\right]\\
    \geq&-2\sqrt{2}\max\{2\ln K,\sqrt{N\ln K}\}\left({\gamma}+\frac{1}{c}\sqrt{L_T^*+1}\right),
\end{split}
\end{equation}
where we use  Lemma \ref{lem:inapp:emax}, $\E[\alpha]=0$ and  $\eta_t=\min\left\{\frac{1}{\gamma},\frac{c}{\sqrt{L^*_{t-1}+1}}\right\}$.
\end{proof}


\paragraph{Remark} Combining Theorems \ref{thm:main:11} and \ref{thm:main:22} while setting $c=1$, we have

\begin{equation}
    \begin{split}
        -O\left(\frac{\max\left\{\ln K,\sqrt{N\ln K}\right\}}{\sqrt{L_T^*}}\right)\leq\frac{\E\left[\sum_{t=1}^Tf(x_t,y_t)\right]}{L_T^*}-1 
        \leq O\left(\frac{\max\left\{{\gamma},\ln K, \sqrt{N\ln K}\right\}}{\sqrt{L_T^*}}\right).
    \end{split}
\end{equation}
As $L_T^*$ goes to $\infty$, both sides go to $0$, which means our strategy competes the best expert in hindsight.
\subsection{Extension to \texorpdfstring{$\ell_p$}{lp}  Perturbation}
\label{app:proof:ellp}
In this section, we extend our techniques to perturbation distributions that are exponential with respect to an $\ell_p$-norm for any $p \geq 1$ (note that $p = 1$ corresponds to the case of the Laplace distribution).
Specifically, we consider the probability density function
\begin{equation}
    p(\alpha)\propto\exp\left\{-\left(\sum_i |\alpha^{(i)}|^p\right)^{\frac{1}{p}}\right\}.
\end{equation}
Recall that we have the following decomposition of regret:
\begin{equation}
    \begin{split}
R_T= {} &\E\left[\sum_{t=1}^Tf(x_t,y_t)-f(x^*,y_t)\right]\\     
= {} & \underbrace{\E\left[\sum_{t=1}^Tf(x_t,y_t)-\sum_{t=1}^Tf(x'_t,y_t)\right]}_{\textsc{term 1}} + \underbrace{\E\left[\sum_{t=1}^Tf(x'_t,y_t)-\sum_{t=1}^Tf(x^*,y_t)\right]}_{\textsc{term 2}},
    \end{split}
\end{equation}
A critical observation is that the proof of Lemma \ref{lem:main:1111} relies on the triangle inequality
$$\frac{p(\beta)}{p\left(\beta-s^{(i)}\right)}=\exp\left(\eta_t\left(\left\|\beta-s^{(i)}\right\|_1-\|\beta\|_1\right)\right)\leq \exp\left({\eta_t} \|s^{(i)}\|_1\right),$$
which is easily generalized to the $\ell_p$-norm:
$$\frac{p(\beta)}{p\left(\beta-s^{(i)}\right)}=\exp\left(\eta_t\left(\left\|\beta-s^{(i)}\right\|_p-\|\beta\|_p\right)\right)\leq \exp\left({\eta_t} \|s^{(i)}\|_p\right).$$
Following the proof of Lemma ~\ref{lem:main:1111}, we then get
\begin{equation}
 \E[f(x_t,y_t)]\leq {} \exp\left({\gamma_p\eta_t}\right)\E[f(x_t',y_t)],
\end{equation}
where $\gamma_p$ is now an upper bound on $\|s^{(i)}\|_p$.
As before, noting that ${\gamma_p\eta_t}\leq 1$ and $\exp(x)\leq 1+2x$ for any $x\in [0,1]$ gives us
\begin{equation}
   \E[f(x_t,y_t)]\leq {} \exp\left({\gamma_p\eta_t}\right)\E[f(x_t',y_t)]\leq \left(1+{2\gamma_p\eta_t}\right) \E[f(x_t',y_t)].
\end{equation}
It remains to upper bound $\E\left[\max\limits_{i\in[K]} \Gamma^{(i)}\alpha\right]$ under the $\ell_p$ perturbation (as was previously done for the Laplace case).
This is done in the following lemma.
\begin{lemma}
\label{app:lemma:emaxp}
Under the $\ell_p$ perturbation, we have
\begin{equation}
    \E\left[\max\limits_{i\in[K]} \Gamma^{(i)}\alpha\right]\leq 2N^{1-\frac{1}{p}}(\ln K+N\ln 2).
\end{equation}
\end{lemma}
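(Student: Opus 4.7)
The plan is to follow the template of the proof of Lemma~\ref{lem:inapp:emax} (the Laplace case), but to replace the per-coordinate independence argument with a Hölder-plus-radial-decomposition argument, since for $p \neq 1$ the components of $\alpha$ are no longer independent.

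First, I would apply the same Jensen / union-bound reduction as in \eqref{eqn:theorem:1:exp:max}: for any $\lambda > 0$,
\begin{equation*}
\E\!\left[\max_{k\in[K]} \Gamma^{(k)}\alpha\right] \leq \frac{1}{\lambda}\ln\!\left(\sum_{k\in[K]} \E\!\left[\exp(\lambda\, \Gamma^{(k)}\alpha)\right]\right).
\end{equation*}
Next, instead of factorizing the MGF across coordinates (which is unavailable when the joint density is $\propto \exp(-\|\alpha\|_p)$), I would use Hölder's inequality with conjugate exponent $q = p/(p-1)$. Since each $\Gamma^{(k)} \in [0,1]^N$, we have $\|\Gamma^{(k)}\|_q \leq N^{1/q} = N^{1-1/p}$, and therefore $\Gamma^{(k)} \alpha \leq N^{1-1/p}\,\|\alpha\|_p$. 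By monotonicity of $\exp$ this yields
\begin{equation*}
\E\!\left[\exp(\lambda\, \Gamma^{(k)}\alpha)\right] \;\leq\; \E\!\left[\exp\!\left(\lambda N^{1-1/p}\,\|\alpha\|_p\right)\right].
\end{equation*}

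The heart of the argument is then to compute this single-variable MGF. I would change to polar-type coordinates on the $\ell_p$-ball: writing $\alpha = r\omega$ with $r = \|\alpha\|_p \geq 0$ and $\omega$ on the $\ell_p$-unit sphere, the Jacobian factor $r^{N-1}\,dr\,d\sigma(\omega)$ implies that the radial marginal of $\alpha$ under $p(\alpha) \propto \exp(-\|\alpha\|_p)$ has density proportional to $r^{N-1}e^{-r}$, i.e.~$R := \|\alpha\|_p \sim \mathrm{Gamma}(N,1)$. Its MGF is $\E[e^{sR}] = (1-s)^{-N}$ for $s<1$, so, substituting $s = \lambda N^{1-1/p}$,
\begin{equation*}
\E\!\left[\exp(\lambda\, \Gamma^{(k)}\alpha)\right] \;\leq\; \bigl(1-\lambda N^{1-1/p}\bigr)^{-N}, \qquad \lambda N^{1-1/p} < 1.
\end{equation*}

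Finally I would plug this back into the Jensen/union bound, giving
\begin{equation*}
\E\!\left[\max_{k\in[K]} \Gamma^{(k)}\alpha\right] \leq \frac{\ln K}{\lambda} + \frac{N}{\lambda}\ln\!\left(\frac{1}{1-\lambda N^{1-1/p}}\right),
\end{equation*}
and set $\lambda = \tfrac{1}{2 N^{1-1/p}}$ (which automatically satisfies the constraint). This choice makes the log term equal to $\ln 2$ and yields the claimed bound $2 N^{1-1/p}(\ln K + N\ln 2)$. The main obstacle I anticipate is justifying the radial decomposition cleanly — one must invoke the standard fact that the surface measure on the $\ell_p$ sphere scales like $r^{N-1}$ to identify $\|\alpha\|_p$ as Gamma-distributed; once this is in hand, the rest is a routine MGF calculation and $\lambda$-optimization parallel to the Laplace case.
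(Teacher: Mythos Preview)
Your proposal is correct and follows essentially the same route as the paper: Jensen plus union bound, then the inequality $\Gamma^{(k)}\alpha \le N^{1-1/p}\|\alpha\|_p$, then an MGF computation for $\|\alpha\|_p$ and the choice $\lambda = \tfrac{1}{2}N^{-(1-1/p)}$. The only minor difference is that the paper obtains the MGF bound more directly---writing $\E[\exp(\lambda\|\alpha\|_1)]$ as a ratio of integrals $\int e^{-(1-\lambda N^{1-1/p})\|\alpha\|_p}\,d\alpha \big/ \int e^{-\|\alpha\|_p}\,d\alpha$ and using the rescaling $\alpha\mapsto 2\alpha$ to read off $2^N$---rather than explicitly identifying $\|\alpha\|_p$ as $\mathrm{Gamma}(N,1)$; your radial-decomposition argument is equally valid and yields the identical bound.
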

\begin{proof}
Similar to the proof of Lemma ~\ref{lem:inapp:emax}, we have
\begin{equation}
    \E\left[\max\limits_{i\in[K]} \Gamma^{(i)}\alpha\right]\leq\frac{1}{\lambda}\ln \left(K\cdot\E\left[\exp\left(\sum_{i=1}^N\lambda|\alpha^{(i)}|\right)\right]\right),
\end{equation}
where we use the fact that $\Gamma\in [0,1]^{K\times N}$. Now we calculate
\begin{equation}
\begin{split}
    \E\left[\exp\left(\sum_{i=1}^N\lambda|\alpha^{(i)}|\right)\right]=&\frac{\int\exp(\lambda\|\alpha\|_1)\cdot\exp(-\|\alpha\|_p)d\alpha}{\int\exp(-\|\alpha\|_p)}d\alpha\\
    \leq&\frac{\int\exp(-(1-\lambda N^{1-\frac{1  }{p}})\|\alpha\|_p)d\alpha}{\int\exp(-\|\alpha\|_p)d\alpha},
\end{split}
\end{equation}
where the norm inequality $\|\alpha\|_1\leq N^{1-\frac{1}{p}}\|\alpha\|_p$ is used. 
Setting $\lambda = \frac{1}{2N^{1-\frac{1}{p}}}$ gives us
\begin{equation}
    \E\left[\exp\left(\sum_{i=1}^N\lambda|\alpha^{(i)}|\right)\right]\leq 2^N,
\end{equation}
and thus
\begin{equation}
    \E\left[\max\limits_{i\in[K]} \Gamma^{(i)}\alpha\right]\leq 2N^{1-\frac{1}{p}}(\ln K+N\ln 2).
\end{equation}
\end{proof}
We now complete the proof extension.
According to the proof of Lemma 5, for $\textsc{term 1}$ we have
\begin{equation}
\begin{split}
    &\E\left[\sum_{t=1}^Tf(x_t,y_t)-\sum_{t=1}^Tf(x'_t,y_t)\right]\leq {2\gamma_p}\sum_{t=1}^T\eta_t\E[f(x_t',y_t)]\\
    \leq&{2\gamma_p}\left(\eta_TL_T^*+\sum_{t=1}^{T-1}(\eta_{t-1}-\eta_t)L_{t-1}^*+\sum_{t=1}^T\eta_t\left(\frac{1}{\eta_t}-\frac{1}{\eta_{t+1}}\right)\E\left[\min\limits_{i\in[K]}\Gamma^{(i)}\alpha\right] 
     -\eta_1\E\left[\min\limits_{i\in[K]} \Gamma^{(i)}\alpha\right] \right)\\
     \leq &{2\gamma_p}\left(cN^{1-\frac{1}{p}}\sqrt{L_T^*+1}+\left(\frac{1}{cN^{1-\frac{1}{p}}}\sqrt{L_T^*+1}+{\gamma_p}\right)+4\ln\left(\frac{1}{cN^{1-\frac{1}{p}}}\sqrt{L_T^*+1}+{\gamma_p}\right)\right)\\
     +&2\phi\left({\gamma_p}+\frac{\sqrt{L_T^*+1}}{cN^{1-\frac{1}{p}}}\right),
\end{split}
\end{equation}
where $\phi$ denotes an upper bound on $\E\left[\max\limits_{i\in[K]} \Gamma^{(i)}\alpha\right]$ that will be specified shortly.
Above, we plug in $\eta_t=\min\left\{\frac{1}{\gamma_p},\frac{cN^{1-\frac{1}{p}}}{\sqrt{L_{t-1}^*+1}}\right\}$ to get the third inequality.
For $\textsc{term 2}$, a similar argument to the proof of Lemma~\ref{lem:main:3333} gives 
\begin{equation}
    \E\left[\sum_{t=1}^Tf(x'_t,y_t)-\sum_{t=1}^Tf(x^*,y_t)\right]\leq 2\phi\left({\gamma_p}+\frac{\sqrt{L_T^*+1}}{cN^{1-\frac{1}{p}}}\right).
\end{equation}
Thus, the total regret is upper bounded by
\begin{equation}
    \begin{split}
        R_T\leq& {2\gamma_p}\left(cN^{1-\frac{1}{p}}\sqrt{L_T^*+1}+\left(\frac{1}{cN^{1-\frac{1}{p}}}\sqrt{L_T^*+1}+{\gamma_p}\right)+4\ln\left(\frac{1}{cN^{1-\frac{1}{p}}}\sqrt{L_T^*+1}+{\gamma_p}\right)\right)\\
     +&4\phi\left({\gamma_p}+\frac{\sqrt{L_T^*+1}}{cN^{1-\frac{1}{p}}}\right)=O\left({2\gamma_p}\left(cN^{1-\frac{1}{p}}+\frac{1}{cN^{1-\frac{1}{p}}}\right)\sqrt{L_T^*+1}+4\phi\frac{\sqrt{L_T^*+1}}{cN^{1-\frac{1}{p}}}\right).
    \end{split}
\end{equation}

If we use a $\ell_p$ perturbation, by Lemma \ref{app:lemma:emaxp}, we have $\phi=2N^{1-\frac{1}{p}}(\ln K+N\ln 2)$ and
\begin{equation}
\begin{split}
    R_T=&O\left({2\gamma_p}\left(cN^{1-\frac{1}{p}}+\frac{1}{cN^{1-\frac{1}{p}}}\right)\sqrt{L_T^*+1}+8N^{1-\frac{1}{p}}(\ln K+N\ln 2)\frac{\sqrt{L_T^*+1}}{cN^{1-\frac{1}{p}}} \right)\\
    =&O\left(\max\left\{\gamma_pN^{1-\frac{1}{p}},\ln K,N\right\}\sqrt{L_T^*}\right).
\end{split}
\end{equation}
which completes the proof.
\qed

We do a brief comparison between the $\ell_p$-perturbation and Laplace perturbation for the case when $\Gamma\in \{0,1\}^{K\times N}$ is a binary matrix. By Lemma \ref{lem:binarygamma} $\gamma_p=N^{\frac{1}{p}}$ because  $s^
{(i)}\in\{-1,1\}^N$. Then we get that the regret under the $\ell_p$ perturbation is
$$
R_T=O\left(\max\left\{N,\ln K\right\}\sqrt{L_T^*}\right),
$$
while by Theorem \ref{thm:main:11} the regret bound under the Laplace distribution is
$$
R_T=O\left(\max\left\{N,\ln K,\sqrt{N\ln K}\right\}\sqrt{L_T^*}\right).
$$
We can see the regret bounds are the same. Since $\ell_p$ perturbation does not lead to an improvement on the regret bound and the Laplace distribution is easier to sample, we only consider the Laplace distribution in the main paper.

\section{Omitted Proof for Section \ref{imp-real}}
In this section, we provide the omitted proofs for Section \ref{imp-real}.
\subsection{Proof of Lemma \ref{lem:binarygamma}}
\label{app:proof:binaryg}

We begin by proving Lemma~\ref{lem:binarygamma}, which shows that any $\{0,1\}$-valued PTM with distinct rows satisfies $\gamma$-approximability.
We first state the following lemma which introduces a slightly stronger condition for $\gamma$-approximability.

\begin{lemma}
\label{ass:single-out-2}
    Let $\Gamma\in[0,1]^{K\times N}$ be a  matrix, and denote $\Gamma^{(k)}$ as the $k$-th row of $\Gamma$. If $\forall k\in[K]$, $\exists s\in\R^{N}$, $\|s\|_{1}\leq \gamma$, such that $\left<\Gamma^{(k)}, s\right>-\left<\Gamma^{(j)}, s\right>\geq 1$ for all rows $j\not=k$, then $\Gamma$ is $\gamma$-approximable.
\end{lemma}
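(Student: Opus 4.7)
The plan is to show that the hypothesis of Lemma~\ref{ass:single-out-2} directly implies the condition in Definition~\ref{ass:single-out} by exploiting the fact that the loss function $f$ takes values in $[0,1]$. The key observation is that the hypothesis gives a single $s$ (per row $k$) that works uniformly across all $y \in \Y$, because the right-hand side in Definition~\ref{ass:single-out}, namely $f(x^{(k)},y)-f(x^{(j)},y)$, is always at most $1$.

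Concretely, I would fix an arbitrary $k \in [K]$ and $y \in \Y$. By hypothesis, there exists $s \in \R^N$ with $\|s\|_1 \leq \gamma$, hence $s \in B_\gamma^1$, such that $\langle \Gamma^{(k)} - \Gamma^{(j)}, s\rangle \geq 1$ for every $j \neq k$. Crucially this $s$ depends only on $k$, not on $y$, which is allowed since Definition~\ref{ass:single-out} only asks for existence of $s$ for each $(k,y)$ pair. Then I would verify the approximability inequality in two cases:
\begin{itemize}
    \item If $j \neq k$: since $f \colon \X \times \Y \to [0,1]$, we have $f(x^{(k)},y) - f(x^{(j)},y) \leq 1 \leq \langle \Gamma^{(k)} - \Gamma^{(j)}, s\rangle$.
    \item If $j = k$: both $\langle \Gamma^{(k)} - \Gamma^{(j)}, s\rangle$ and $f(x^{(k)},y) - f(x^{(j)},y)$ are $0$, so the inequality holds with equality.
\end{itemize}
This exhausts all $j \in [K]$, so Definition~\ref{ass:single-out} is satisfied with the chosen $s$, proving $\gamma$-approximability.

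There is no real obstacle here: the lemma is essentially a reformulation in which the row-dependent, $y$-independent gap condition is strong enough to dominate the $y$-dependent loss gap uniformly, precisely because $f$ is bounded in $[0,1]$. The utility of this lemma is downstream, where it will simplify the verification of $\gamma$-approximability for concrete PTMs (e.g., the binary admissible case in Lemma~\ref{lem:binarygamma}), since checking a single constant lower bound of $1$ is easier than checking a $(k,y,j)$-indexed family of inequalities.
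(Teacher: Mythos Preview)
Your proposal is correct and follows essentially the same approach as the paper's proof: both use the fact that $f\in[0,1]$ gives $f(x^{(k)},y)-f(x^{(j)},y)\leq 1$, so the hypothesis immediately implies Definition~\ref{ass:single-out}. You simply spell out the $j=k$ case explicitly, which the paper leaves implicit.
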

\begin{proof}
Since $\forall y\in\Y, k,j\in [K]$, $1\geq f(x^{(k)},y)-f(x^{(j)},y)$, it is straightforward to see that the condition in Lemma \ref{ass:single-out-2} is a sufficient condition of Definition \ref{ass:single-out}.
\end{proof}
Next, we construct a $\gamma$-approximable $\Gamma$ based on Lemma  \ref{ass:single-out-2}. Denote $\Gamma^{(k,i)}$ as the $i$-th element of $\Gamma^{(k)}$.
$\forall t>0,k\in[K]$, we set $s^{(k)}=2\Gamma^{(k)}-1$. Since $\forall k\in[N]$, $s^{(k)}\in\{-1,1\}^N$, we have $\|s^{(k)}\|_1\leq N$. On the other hand, $\forall j\not=k$,
\begin{equation}
    \begin{split}
    \label{eqn:proof:for:D1}
\left< \Gamma^{(k)},  s^{(k)}\right>-\left< \Gamma^{(j)},  s^{(k)}\right>=\sum_{i=1}^N (\Gamma^{(k,i)}-\Gamma^{(j,i)})\cdot(2\Gamma^{(k,i)}-1).    
    \end{split}
\end{equation}
For each term $i$ in the R.H.S. of the equality, we have $$(\Gamma^{(k,i)}-\Gamma^{(j,i)})\cdot(2\Gamma^{(k,i)}-1)=
\begin{cases}
0, & \Gamma^{(k,i)}=\Gamma^{(j,i)},\\
1, & \Gamma^{(k,i)}\not=\Gamma^{(j,i)}.
\end{cases}
$$
Note that since every two rows of $\Gamma$ differ by at least one element,  there must exist one $i\in[N]$ such that $(\Gamma^{(k,i)}-\Gamma^{(j,i)})\cdot(2\Gamma^{(k,i)}-1)=1$.
This completes the proof of the lemma.
\qed

Lemma~\ref{lem:binarygamma} is simple but powerful, and can be applied to a broad variety of combinatorial auction problems.
This is detailed next.

\subsection{Auction Problems with a Binary \texorpdfstring{$\Gamma$}{ga}}
\label{app:proof:auction}
Imagine that a seller wants to sell $k$ items (that are either homogeneous or heterogeneous) to $n$ bidders.  
Each bidder has a combinatorial utility function $b^{(i)}:\{0,1\}^k\rightarrow [0,1]$ and we use $b$ to denote the bidding profile vector of all bidders. 
In this work we consider \emph{truthful} auctions, i.e. each bidder is incentivized to report his true valuation $b^{(i)}$ in the unique Bayes-Nash equilibrium of the auction. 
The $i$-th bidder gets an allocation $q^{(i)}(b)\in\{0,1\}^k$ and pays the seller $p^{(i)}(b)$.  Therefore, the utility of the bidder is given by $b^{(i)}(q^{(i)}(b))-p^{(i)}(b)$.
 
An auction $a$ receives the bidding profiles of all bidders and determines how to allocate the items and how much to charge each bidder. We use $r(a,b)\coloneqq\sum_{i=1}^n p^{(i)}(b)$ to denote the revenue yielded by applying auction $a$ to the bidder profile $b$.
We consider a \emph{repeated auction} setting in which the auctioneer faces different bidders on each round.
The bidders may be of very heterogeneous types, so we do not make any assumptions on the bidder profile and assume that it can arbitrarily change from round to round.
More formally: for each round $t=1,\dots, T$, the learner chooses an auction $a_t$ while the adversary chooses a bidder profile $b_t$.
Then, the learner gets to know $b_t$ and receives the revenue $r(a_t,b_t)$. 
The goal of the learner is to compete the revenue earned by the best auction in hindsight. Following \cite{dudik2020oracle}, if the revenue $r(a,b)\in [0,R]$ where $R> 1$, then we can scale all rewards by $\frac{1}{R}$ to ensure all rewards are in $[0,1]$. After applying Algorithm \ref{alg:GFTPL:33}, we  scale the reward back to get the $O(R\sqrt{T-L_T^*})$ regret.
 
 Now we briefly introduce  auction problems that admit a binary-valued TPM $\Gamma$.
 By Lemma~\ref{lem:admis} these are $\gamma$-approximable and by Theorem~\ref{thm:main:11} these admit small-loss bounds.
 
 \paragraph{VCG with bidder-specific reserves} For the standard VCG auction, multiple bidders can be simultaneously served if the allocation $q_*$ maximizes the total social welfare $\sum_{i=1}^nb^{(i)}q^{(i)}_*$. Then the bidder who wins a set of items would pay the externality he imposes on others 
 $$
 p^{(i)}(b)=\max_q\left(\sum_{j\neq i}b^{(j)}q^{(j)}\right)-\sum_{j\neq i}b^{(j)}q^{(j)}_*.
 $$
 The setting we discuss is slightly modified in the sense that we have a  vector $a$ with $i$-th component being the reserve value of the $i$-th bidder. Any bidder whose valuation $b^{(i)}$ is smaller than $a^{(i)}$ will be eliminated. Then, we run the VCG auction for the remaining bidders.
 
 Following \cite{dudik2020oracle}, we discretize reserve prices and use the same $\Gamma$ therein to get the following small-loss bound:
 \begin{theorem}
 \label{thm:vcg}
We consider VCG auction with reserves for the single-item $s$-unit setting, and the set of all feasible auctions is denoted by $
\mathcal{I}$. Denote $R=\max_{a,b} r(a,b)$. Let $\Gamma$ be an $|\mathcal{I}_m|\times n\lceil \log m\rceil$ binary matrix, where $\mathcal{I}_m$ contains auctions in which each reservation price comes from $\left\{\frac{1}{m},\dots,\frac{m}{m}\right\}$, and
 consecutive $\lceil \log m\rceil$ columns correspond to binary encodings of each bidder, then $\Gamma$ is implementable. Running Algorithm \ref{alg:GFTPL:33} with such a $\Gamma$ yields
 \begin{equation}
     \E\left[\max_{a\in\mathcal{I}}\sum_{t=1}^Tr(a,b_t)-\sum_{t=1}^Tr(a_t,b_t) \right]=O\left(nR\sqrt{T-L_T^*}\log (T s)\right).
 \end{equation}
 \end{theorem}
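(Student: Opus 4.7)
The strategy is to derive Theorem~\ref{thm:vcg} as a direct consequence of our general machinery: verify that the proposed $\Gamma$ is both (a) implementable with respect to the revenue function $r$ and (b) $\gamma$-approximable with $\gamma = N = n\lceil \log m\rceil$, invoke Corollary~\ref{cor:imp} to bound the regret against the best auction in the \emph{discretized} class $\mathcal{I}_m$, and finally absorb the discretization error against the best auction in $\mathcal{I}$. The rescaling by $R$ is handled by running Algorithm~\ref{alg:GFTPL:33} on the normalized reward $r/R$ and multiplying the final bound by $R$.

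\textbf{Step 1: Implementability.} Each row of $\Gamma$ is indexed by an auction $a \in \mathcal{I}_m$, and the columns are grouped into $n$ blocks of $\lceil \log m \rceil$ bits, where the $i$-th block is the binary representation of bidder $i$'s discretized reserve. For each bit position $j$ within the block associated with bidder $i$, I would reuse the construction of \cite{dudik2020oracle}: the weighted dataset $S_j$ is built from two synthetic bidder profiles that zero out bidders $i'\neq i$ and vary only bidder $i$'s bid to trigger/not-trigger the reserve, so that the revenue difference equals the $j$-th bit of bidder $i$'s reserve. This gives $|S_j| = O(1)$ and matches Definition~\ref{defn:imp}.

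\textbf{Step 2: Approximability.} Because every two distinct auctions in $\mathcal{I}_m$ have at least one different coordinate of the reserve vector, their binary encodings differ in at least one bit. Hence $\Gamma$ is a $\{0,1\}$-valued matrix with pairwise distinct rows, so Lemma~\ref{lem:binarygamma} yields that $\Gamma$ is $N$-approximable with $N = n\lceil \log m\rceil$. Combined with Step 1 and Corollary~\ref{cor:imp}, we obtain, after rescaling rewards by $R$,
\begin{equation*}
\E\!\left[\max_{a\in\mathcal{I}_m}\sum_{t=1}^T r(a,b_t) - \sum_{t=1}^T r(a_t,b_t)\right] = O\!\left(nR\log(m)\sqrt{T - L_T^*}\right).
\end{equation*}

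\textbf{Step 3: Discretization error and choice of $m$.} The remaining piece is to compare the best auction in $\mathcal{I}_m$ to the best auction in $\mathcal{I}$. Given any $a^\star \in \mathcal{I}$, round each bidder-specific reserve up to the nearest multiple of $1/m$ to obtain $\tilde a \in \mathcal{I}_m$. A standard single-item $s$-unit argument (changing one bidder's reserve by at most $1/m$ can shift revenue by at most $s/m$) shows the per-round revenue gap is $O(ns/m)$, so the cumulative discretization loss is $O(nsT/m)$. Choosing $m = \Theta(Ts)$ makes this error $O(n)$ and yields $\log m = O(\log(Ts))$, giving the claimed $O(nR\sqrt{T-L_T^*}\log(Ts))$ bound after combining with Step 2.

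\textbf{Anticipated obstacle.} The conceptual steps are straightforward once Corollary~\ref{cor:imp} and Lemma~\ref{lem:binarygamma} are in hand; the main technical care is in Step 3, specifically justifying the Lipschitz-style bound on how per-round VCG revenue reacts to a $1/m$ perturbation of a bidder-specific reserve in the $s$-unit setting (which must account for knock-on effects on externality payments of other bidders). The other mildly delicate issue is bookkeeping the $R$-rescaling so that the small-loss quantity $L_T^*$ appearing in Corollary~\ref{cor:imp} is interpreted with respect to the normalized loss $(R - r)/R$, but this translates cleanly back to the unnormalized revenue regret.
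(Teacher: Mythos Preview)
Your proposal is correct and follows essentially the same route as the paper: implementability is inherited from \cite{dudik2020oracle}, $N$-approximability comes from Lemma~\ref{lem:binarygamma} (binary $\Gamma$ with distinct rows), Corollary~\ref{cor:imp} gives the $O(R\cdot n\log m\,\sqrt{T-L_T^*})$ bound over $\mathcal{I}_m$, and the discretization error plus the choice $m=\Theta(Ts)$ finish the argument. The only minor deviations are that the paper simply quotes the discretization bound $\max_{a\in\mathcal{I}}\sum_t r(a,b_t)-\max_{a\in\mathcal{I}_m}\sum_t r(a,b_t)\le Ts/m$ from \cite{dudik2020oracle} (your $O(nsT/m)$ is looser but harmless, and rounding reserves \emph{down} rather than up is the direction that makes the argument work).
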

 \begin{proof}
 The implementability of $\Gamma$ follows from Lemma 3.3 of \cite{dudik2020oracle}. Since $\Gamma$ is binary and every two rows are distinct, by Lemma \ref{lem:binarygamma} we know it is $N$-approximable. Using Corollary \ref{cor:imp} we have
 \begin{equation}
 \label{eq:vcg1}
 \begin{split}
     \E\left[\max_{a\in\mathcal{I}_m}\sum_{t=1}^Tr(a,b_t)-\sum_{t=1}^Tr(a_t,b_t) \right]=&O\left(R\cdot\max\{\gamma,\ln K,\sqrt{N\ln K}\}\sqrt{T-L_T^*} \right)\\
     =&O\left(R\cdot\max\{N,\ln K,\sqrt{N\ln K}\}\sqrt{T-L_T^*} \right)\\
     =&O\left(R\cdot N\sqrt{T-L_T^*}\right)\\
     =&O\left(R\cdot n\log m\sqrt{T-L_T^*}\right),
 \end{split}
 \end{equation}
 where we use the facts that $N=n\lceil\log m\rceil$, $K=|\mathcal{I}_m|$, and $N=\Omega(\log K)$ since $\Gamma$ is binary. According to \cite{dudik2020oracle}, the optimal revenue in $\mathcal{I}$ is upper bounded by that of $\mathcal{I}_m$:
\begin{equation}
  \label{eq:vcg2}
 \E\left[\max_{a\in\mathcal{I}}\sum_{t=1}^Tr(a,b_t)-\max_{a\in\mathcal{I}_m}\sum_{t=1}^Tr(a,b_t) \right]\leq \frac{Ts}{m}.
\end{equation}
 Combining  \eqref{eq:vcg1} and \eqref{eq:vcg2} while setting $m=O(T s)$ yield the proposed Theorem.
 \end{proof}
 \paragraph{Envy-free item pricing} Assume there are $k$ different items and we use $a$ to denote the vector of each item's price. Bidders come one by one. The $i$-th bidder greedily chooses a bundle $q^{(i)}\in\{0,1\}^k$ which maximizes his utility $b^{(i)}(q^{(i)})-a\cdot q^{(i)}$ and pays $a\cdot q^{(i)}$. Similar as the VCG with bidder-specific reserves, we also assume each price is discretized in the set $a^{(i)}\in\{\frac{1}{m},\dots,\frac{m}{m}\}$.
 \begin{theorem}
 \label{thm:envy}
We consider envy-free auction for $n$ single-minded bidders and $k$ heterogeneous items with infinite supply. Denote $\mathcal{P}$ to be the set of all possible auctions and $R=\max_{a,b} r(a,b)$. Let $\Gamma$ be an $|\mathcal{P}_m|\times (k\lceil \log m\rceil$ binary matrix, where $\mathcal{P}_m$ contains envy-free item auctions in which all prices come from $\left\{\frac{1}{m},\dots,\frac{m}{m}\right\}$ and
 consecutive $\lceil \log m\rceil$ columns correspond to binary encodings of each item's price. 
 Then, $\Gamma$ is implementable and running Algorithm \ref{alg:GFTPL:33} with this value of $\Gamma$ yields
 \begin{equation}
     \E\left[\max_{a\in\mathcal{P}}\sum_{t=1}^Tr(a,b_t)-\sum_{t=1}^Tr(a_t,b_t) \right]=O\left(kR\sqrt{T-L_T^*}\log (kT)\right).
 \end{equation}
 \end{theorem}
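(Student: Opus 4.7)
The plan is to mirror the proof of Theorem \ref{thm:vcg} almost verbatim, since the architecture is the same: binary PTM $\Rightarrow$ approximable by Lemma \ref{lem:binarygamma} $\Rightarrow$ small-loss bound on the discretized class via Corollary \ref{cor:imp} $\Rightarrow$ add a discretization-error term $\Rightarrow$ optimize $m$. The three things I need to supply are (i) implementability of $\Gamma$, (ii) the discretization-error bound comparing $\mathcal{P}$ and $\mathcal{P}_m$, and (iii) the arithmetic that collapses $\max\{\gamma,\ln K,\sqrt{N\ln K}\}$ down to $N = k\lceil\log m\rceil$.

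First I would verify implementability. The $k\lceil\log m\rceil$ columns of $\Gamma$ are grouped into $k$ blocks of $\lceil\log m\rceil$ columns, where the block for item $j$ encodes the price $a^{(j)}\in\{1/m,\ldots,m/m\}$ in binary. For each column indexing bit $\ell$ of item $j$, I need a dataset $S$ of bidder profiles so that $\sum_{(w,b)\in S} w\cdot r(a,b) - \sum_{(w,b)\in S} w\cdot r(a',b)$ equals the difference in the $\ell$-th bit of the binary encoding of $a^{(j)}$ and $a'^{(j)}$ (and is zero along all other coordinates). This is exactly the construction given for envy-free item pricing in \cite{dudik2020oracle} (their analogue of Lemma 3.3), built from singleton single-minded bidder profiles that probe one item at a time; I would simply quote it.

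Second, Lemma \ref{lem:binarygamma} immediately gives that $\Gamma$ is $N$-approximable with $N=k\lceil\log m\rceil$, and since $\Gamma$ is binary with distinct rows, $\log K = \Omega(N)$, so $\max\{\gamma,\ln K,\sqrt{N\ln K}\} = O(N) = O(k\log m)$. Corollary \ref{cor:imp} then yields
\begin{equation*}
\E\left[\max_{a\in\mathcal{P}_m}\sum_{t=1}^T r(a,b_t) - \sum_{t=1}^T r(a_t,b_t)\right] = O\!\left(Rk\log m\,\sqrt{T - L_T^*}\right).
\end{equation*}

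Third, I would bound the discretization gap. For any $a\in\mathcal{P}$, let $\hat a\in\mathcal{P}_m$ round each price down to the nearest multiple of $1/m$. For a single-minded bidder with valuation $b^{(i)}(q^{(i)})$ on his desired bundle $q^{(i)}$, if he purchases under $a$ then $a\cdot q^{(i)}\leq b^{(i)}(q^{(i)})$, hence $\hat a\cdot q^{(i)}\leq b^{(i)}(q^{(i)})$ and he still purchases under $\hat a$ (since he is single-minded, the greedy choice is this bundle or nothing). Each item price drops by at most $1/m$, so the payment collected per bidder drops by at most $k/m$; summing over the at most $n$ bidders per round and $T$ rounds, the revenue gap is at most $nkT/m$. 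This gives the analogue of \eqref{eq:vcg2}, namely $\max_{a\in\mathcal P}\sum_t r(a,b_t) - \max_{a\in\mathcal P_m}\sum_t r(a,b_t) \leq nkT/m$. Choosing $m = \Theta(T)$ (which makes the discretization error $O(k)$, absorbed into the main term) and noting $\log m = O(\log(kT))$ yields the stated $O(kR\sqrt{T-L_T^*}\log(kT))$ bound.

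The only step I expect to need genuine care is the discretization argument: one must check that under rounding each single-minded bidder's utility-maximizing action remains the same bundle (using monotonicity of utility in price drops), so that the revenue change per bidder is controlled by $k/m$ rather than by something larger coming from a bidder switching to a completely different bundle. Everything else is a direct transcription of the Theorem \ref{thm:vcg} proof with $(n, s, m) \mapsto (k, 1, m)$ in the column count and the discretization bookkeeping.
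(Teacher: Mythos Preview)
Your approach is correct and essentially matches the paper's; the paper takes a slight shortcut by observing that implementability of $\Gamma$ follows from an equivalence between envy-free pricing and VCG with reserves on single-item bidder profiles (so Theorem \ref{thm:vcg} with $n\mapsto k$ applies directly), and it then cites the looser discretization bound $nk^2T/m$ from \cite{dudik2020oracle} and sets $m=O(k^2T)$. One minor slip: with $m=\Theta(T)$ your discretization error is $O(nk)$, not $O(k)$, but this is absorbed as a lower-order additive term just as the paper's residual $O(n)$ is.
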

 \begin{proof}
 As noticed in \cite{dudik2020oracle}, we can consider a bidder who has valuation $b^{(i)}$ for the bundle of the $i$-th item and valuations 0 for any other bundles. The revenue of auction $a$ on such a bidder profile is $a^{(i)}\II[b^{(i)}\geq a^{(i)}]$. Similarly, for the VCG auction with reserves $a$, the revenue of a bidder who has a non-zero valuation $b^{(i)}$ would be $a^{(i)}\II[b^{(i)}\geq a^{(i)}]$. Based on the equivalence between  envy-free auction and VCG auction with reserves, we can apply Theorem \ref{thm:vcg} with $n=k$ to get the following bound.
 
 \begin{equation}
 \label{eq:envy1}
     \E\left[\max_{a\in\mathcal{P}_m}\sum_{t=1}^Tr(a,b_t)-\sum_{t=1}^Tr(a_t,b_t) \right] =O\left(R\cdot k\log m\sqrt{T-L_T^*}\right),
 \end{equation}
As also pointed out by \cite{dudik2020oracle}, the optimal revenue of $\mathcal{P}$ would not be much larger than that of $\mathcal{P}_m$ upto a small discretization-related error:
\begin{equation}
  \label{eq:envy2}
 \E\left[\max_{a\in\mathcal{P}}\sum_{t=1}^Tr(a,b_t)-\max_{a\in\mathcal{P}_m}\sum_{t=1}^Tr(a,b_t) \right]\leq \frac{nk^2T}{m}.
\end{equation}
 Combining Equations \ref{eq:envy1} and \ref{eq:envy2} while setting $m=O(k^2T)$ yields the theorem.
 \end{proof}
 \paragraph{Online welfare maximization for multi-unit items} 

In this setting, we wish to allocate $h \gg n$ homogeneous items to $n$ bidders such that $\sum_{i=1}^na^{(i)}=h$. 
Each bidder has a valuation function $b^{(i)}:\N\rightarrow [0,1]$ that maps the number of items he obtains to the utility. We assume $b^{(i)}$ is non-decreasing and $b^{(i)}(0)=0$. The objective is to maximize the total social welfare $\sum_{i=1}^nb^{(i)}(a^{(i)})$.
 
We denote the set of allocations which satisfy $\sum_{i=1}^na^{(i)}=s$ as $\mathcal{X}$. For the offline version of this problem, \cite{dobzinski2010mechanisms} propose a $\frac{1}{2}$-approximation maximal in range (MIR) algorithm, which means maximizing the total social welfare on a set $\mathcal{X}' \subseteq\mathcal{X}$ yields at least $\frac{1}{2}$ of the maximal social welfare on the whole $\mathcal{X}$. 
We now explain the composition of the set of allocations $\mathcal{X}'$. 
We divide $h$ items into $n^2$ bundles of the same size $A=\left\lfloor\frac{h}{n^2}\right\rfloor$ and a possible distinct bundle with size $A'$ which contains all the remaining items. $\mathcal{X}'$ contains all allocations about these $O(n^2)$ bundles in the sense that all items in a bundle can only be simultaneously allocated. Finally the problem is converted to a knapsack problem and there exists an $\frac{1}{2}$-approximation algorithm that runs in $O(\text{poly}(n))$ time.

For the construction of $\Gamma$, we make some modifications to the original construction of \cite{dudik2020oracle} to get a binary-valued PTM. We first define $\mathcal{A}=\{mA+nA':m\in\{0,1,\dots,n^2\},n\in\{0,1\}\}$; note that $|\mathcal{A}| \leq 2n^2 + 2$.
We denote $g_1,\dots,g_{|\mathcal{A}|}$ to be the elements of $\mathcal{A}$ in non-decreasing order. 
Then, we select $\Gamma$ to be a $|\mathcal{X}'|\times n|\mathcal{A}|$ matrix. 
For any allocation $a^{(k)}=[g_{\tau_1},\dots,g_{\tau_n}]$, $k\in [|\mathcal{X}'|]$, $j\in[n]$ and $\ell\in[|\mathcal{A}|]$, we define $\Gamma^{(k,i)}=\II[\tau_j> \ell]$ where $i=(j-1)|\mathcal{A}|+\ell$. 
Note that $\Gamma$ is $1$-implementable because each column corresponds to a valid valuation function. 
In addition, we have the following result that bounds the regret with respect to the $1/2$-approximation of the best revenue in hindsight.
\begin{theorem}
With the aforementioned $\Gamma$ in hand, we can combine Algorithm \ref{alg:GFTPL:33} with the $\frac{1}{2}$-approximate MIR algorithm in \cite{dobzinski2010mechanisms} and get the following regret bound:
\begin{equation}
    \E\left[\frac{1}{2}\left(\max_{a\in\mathcal{X}}\sum_{t=1}^T\sum_{i=1}^nb_t^{(i)}(a^{(i)})\right)-\sum_{t=1}^T\sum_{i=1}^nb_t^{(i)}(a_t^{(i)}) \right]=O(n^3\sqrt{T-L_T^*})
\end{equation}
\end{theorem}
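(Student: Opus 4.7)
The plan is to reduce this to an instance of Corollary \ref{cor:imp} over the restricted allocation set $\mathcal{X}'$ and then pay an extra factor of $\tfrac{1}{2}$ via the MIR guarantee. First I would verify that the proposed PTM $\Gamma\in\{0,1\}^{|\mathcal{X}'|\times n|\mathcal{A}|}$ is $1$-admissible: since the bundle sizes $g_1<\cdots<g_{|\mathcal{A}|}$ are strictly increasing, two distinct allocations $[g_{\tau_1},\ldots,g_{\tau_n}]$ and $[g_{\tau'_1},\ldots,g_{\tau'_n}]$ in $\mathcal{X}'$ must disagree in some coordinate $\tau_j\neq \tau'_j$, whence the threshold indicators $\II[\tau_j>\ell]$ and $\II[\tau'_j>\ell]$ differ for some $\ell\in[|\mathcal{A}|]$, so the associated rows of $\Gamma$ are distinct. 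Lemma \ref{lem:binarygamma} then yields $N$-approximability with $N=n|\mathcal{A}|=O(n^3)$.

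Next I would confirm that each column is $1$-implementable using a synthetic bidder profile in which only bidder $j$ carries the step valuation $b^{(j)}(k)=\II[k>g_\ell]$, whose induced revenue matches $\Gamma^{(k,i)}$ for $i=(j-1)|\mathcal{A}|+\ell$. This follows the same template as the other auction problems in Appendix \ref{app:proof:auction}, and any signed weights coming from the Laplace perturbation are absorbed by the negative-implementability construction of Appendix \ref{app:negei}; the maximization over $\mathcal{X}'$ is then carried out by the polynomial-time knapsack DP of \cite{dobzinski2010mechanisms}, so Algorithm \ref{alg:GFTPL:33} is oracle-efficient. Since $|\mathcal{X}'|\leq |\mathcal{A}|^n$ we have $\ln K=O(n\log n)$, and among $\{\gamma,\ln K,\sqrt{N\ln K}\}$ the term $\gamma=O(n^3)$ dominates. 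After rescaling per-round welfare (which lies in $[0,n]$) to $[0,1]$, Corollary \ref{cor:imp} gives
\begin{equation*}
\mathbb{E}\Big[\max_{a\in\mathcal{X}'}\sum_{t=1}^T\sum_{i=1}^n b_t^{(i)}(a^{(i)}) - \sum_{t=1}^T\sum_{i=1}^n b_t^{(i)}(a_t^{(i)})\Big] = O\big(n^3\sqrt{T-L_T^*}\big).
\end{equation*}

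Finally I would lift this from $\mathcal{X}'$ to $\tfrac{1}{2}\,\mathrm{OPT}(\mathcal{X})$. Define the aggregated per-bidder valuation $\tilde{b}^{(i)}(k):=\sum_{t=1}^T b_t^{(i)}(k)$; it is non-negative and non-decreasing because each $b_t^{(i)}$ is, so the MIR guarantee of \cite{dobzinski2010mechanisms} applies to the single instance $\tilde{b}$, yielding $\max_{a\in\mathcal{X}'}\sum_t\sum_i b_t^{(i)}(a^{(i)}) \geq \tfrac{1}{2}\max_{a\in\mathcal{X}}\sum_t\sum_i b_t^{(i)}(a^{(i)})$. Chaining this with the previous display gives the claimed bound. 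I expect the main conceptual obstacle to be exactly this last step: the MIR theorem is phrased per-instance, and one must check that the \emph{time-aggregated} valuation still satisfies the monotonicity/non-negativity hypotheses so that the same $\tfrac{1}{2}$ ratio transfers through; once this is noted, everything else is a direct invocation of Lemma \ref{lem:binarygamma} and Corollary \ref{cor:imp}.
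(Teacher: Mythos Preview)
Your proposal is correct and follows essentially the same route as the paper: verify that the binary $\Gamma$ has distinct rows, invoke Lemma~\ref{lem:binarygamma} to obtain $N$-approximability with $N=n|\mathcal{A}|=O(n^3)$, apply Corollary~\ref{cor:imp} over $\mathcal{X}'$, and then pass to $\tfrac{1}{2}\,\mathrm{OPT}(\mathcal{X})$ via the MIR guarantee. Your explicit check that the time-aggregated valuation $\tilde b^{(i)}$ remains non-negative and non-decreasing (so that the $\tfrac{1}{2}$-approximation of \cite{dobzinski2010mechanisms} applies to the summed instance) is a detail the paper leaves implicit; one small inaccuracy is that for this particular problem the paper handles a non-negative-weight oracle via the negative exponential distribution of Appendix~\ref{app:negei} rather than negative implementability, but this is irrelevant for the stated theorem under the default real-weighted oracle.
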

\begin{proof}
We first show that $\Gamma$ is $N$-approximable. Since $\Gamma$ is a binary matrix, by Lemma \ref{lem:binarygamma} it suffices to show that $\Gamma$ does not possess two identical rows.
This can be verified by noticing that $\Gamma^{(k)}$ and $\Gamma^{(k')}$ are binary encodings of $a^{(k)}$ and $a^{(k')}$ by applying indicator functions.

Thus, the PTM $\Gamma$ is indeed $N$-approximable.
By Corollary \ref{cor:imp} we have
\begin{equation}
 \label{eq:mu1}
\begin{split}
    \E\left[\max_{a\in\mathcal{X}'}\left(\sum_{t=1}^T\sum_{i=1}^nb_t^{(i)}(a^{(i)})\right)-\sum_{t=1}^T\sum_{i=1}^nb_t^{(i)}(a_t^{(i)}) \right] &=O\left(\max\{\gamma,\ln K,\sqrt{N\ln K}\}\sqrt{T-L_T^*} \right)\\
     &= O\left(\max\{N,\ln K,\sqrt{N\ln K}\}\sqrt{T-L_T^*} \right)\\
     &= O\left(N\sqrt{T-L_T^*}\right)\\
     &= O\left(n|\mathcal{A}|\sqrt{T-L_T^*}\right)\\
     &= O(n^3\sqrt{T-L_T^*}).
\end{split}
\end{equation}
Above, we use the fact that $\Gamma$ is binary-valued to get $N=\Omega(\log K)$,  $N=n|\mathcal{A|}$ and $|\mathcal{A}|=O(n^2)$.
Combining \eqref{eq:mu1} with the fact that the best allocation in $\mathcal{X}'$ is a $\frac{1}{2}$-approximation to the best allocation in $\mathcal{X}$ yields the stated regret bound.
\end{proof}
\paragraph{Simultaneous second-price auctions} 
We now consider the \emph{utility optimization} problem from the point of view of a \emph{bidder} repeatedly participating in a simultaneous second-price auction (with different bidders each time).
In this problem, $n$ bidders want to bid for $h$ items. Each bidder has a combinatorial valuation function $v$ to describe valuations for different bundles and submits a bid vector $b$ for all $h$ items. If he gets an allocation $q$, his payment profile is given $p$, where $p$ is the vector of the second highest bids.
In particular, his utility is given by $u(b,p)=v(q)-p\cdot q$.
Each round the bidder chooses a bidder vector and the adversary chooses the second largest bidder's vector. The goal is to find bidding vectors which compete with the best bidding vector in hindsight.

Following \cite{dudik2020oracle}, we assume that both bids and the valuation function only take values in the discretized set $\left\{0,\frac{1}{m},\dots,\frac{m}{m}\right\}$. We also make the no-overbiddding assumption that $v(q)\geq p\cdot q$ and denote the set of feasible bidding vectors to be $\mathcal{B}$. Let $\Gamma$ be a $\mathcal{B}\times hm$ matrix. For any $j\in[h]$, $\ell\in[m]$, denote $i=(j-1)m+\ell$. For a bidding vector $b^{(k)}=[b^{(k,1)},\dots,b^{(k,h)}]$ and a vector of the second largest bids $p^{(i)}=\frac{\ell}{m}e_j+\sum_{j'\neq j}e_{j'}$, we set $\Gamma^{(k,i)}=\II\left[b^{(k,j)}\geq\frac{\ell}{m}\right]=\frac{u(b^{(k)},p^{(i)})}{v(e_j)-\frac{\ell}{m}}$.
Note that this directly implies that $\Gamma$ is $1$-implementable.

\begin{theorem}
The aforementioned $\Gamma$ is $N$-approximable. Thus, running Algorithm $\ref{alg:GFTPL:33}$ for the simultaneous second-price auction on the discretized set $\mathcal{B}$ yields
\begin{equation}
    \E\left[\max_{b\in\mathcal{B}}\sum_{t=1}^Tu(b,p_t)-\sum_{t=1}^Tu(b_t,p_t)\right]=O(hm\sqrt{T-L_T^*})
\end{equation}
\end{theorem}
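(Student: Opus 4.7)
The plan is to verify the two conditions needed to apply Corollary~\ref{cor:imp}: that $\Gamma$ is $\gamma$-approximable (with $\gamma = N = hm$) and that $\Gamma$ is implementable with respect to the reward function $u(\cdot,\cdot)$, and then plug into the regret bound.

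\textbf{Step 1: Approximability.} Since each entry $\Gamma^{(k,i)} = \mathbb{I}[b^{(k,j)} \geq \ell/m] \in \{0,1\}$, the matrix is binary. By Lemma~\ref{lem:binarygamma}, it suffices to show that $\Gamma$ is $1$-admissible, i.e., that every two rows are distinct. Take two bidding vectors $b^{(k)} \ne b^{(k')}$, and let $j^\star \in [h]$ be a coordinate where they disagree. Without loss of generality assume $b^{(k,j^\star)} > b^{(k',j^\star)}$, and take $\ell^\star = m \cdot b^{(k,j^\star)}$. Then the column indexed by $i = (j^\star - 1)m + \ell^\star$ gives $\Gamma^{(k,i)} = 1$ while $\Gamma^{(k',i)} = 0$. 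Hence the rows are distinct, and Lemma~\ref{lem:binarygamma} yields that $\Gamma$ is $N$-approximable with $N = hm$.

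\textbf{Step 2: Implementability with respect to the reward.} The identity $\Gamma^{(k,i)} = u(b^{(k)}, p^{(i)})/(v(e_j) - \ell/m)$ provided in the construction directly exhibits implementability via a singleton weighted dataset $S_i = \{(w_i, p^{(i)})\}$ with $w_i = 1/(v(e_j) - \ell/m)$, which is well-defined and finite because of the no-overbidding discretization (so $v(e_j) > \ell/m$ in the non-trivial entries). In particular, the complexity is $M = 1$, and crucially this implementability holds against the reward function $u(\cdot,\cdot)$, which is exactly what Algorithm~\ref{alg:GFTPL:33} requires.

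\textbf{Step 3: Assembling the regret bound.} Since $\Gamma$ is binary with $N$ columns and has $K = |\mathcal{B}|$ distinct rows, we have $\log K = O(N)$, so the three quantities $\gamma$, $\ln K$, and $\sqrt{N \ln K}$ are all $O(N) = O(hm)$. Corollary~\ref{cor:imp} (applied with $G_T^\star$ replaced by the cumulative-utility analog and $L_T^\star$ denoting the loss of the surrogate $f = 1-u$ after appropriate scaling so that $u \in [0,1]$, guaranteed by no-overbidding and the discretization) then yields
\begin{equation*}
\mathbb{E}\!\left[\max_{b\in\mathcal{B}}\sum_{t=1}^T u(b,p_t) - \sum_{t=1}^T u(b_t,p_t)\right] = O\!\left(\max\{N,\ln K, \sqrt{N\ln K}\}\sqrt{T-L_T^\star}\right) = O(hm \sqrt{T-L_T^\star}),
\end{equation*}
which is the claimed bound.

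The only substantive step is the approximability verification in Step~1; once the rows are shown to be distinct, Lemma~\ref{lem:binarygamma} does all the work, and Steps~2 and 3 are essentially bookkeeping. The main subtlety worth flagging is that implementability needs to hold against the reward $u$ rather than the surrogate loss $f = 1-u$, which is precisely why Algorithm~\ref{alg:GFTPL:33} (rather than Algorithm~\ref{alg:GFTPL:main}) is the right vehicle here.
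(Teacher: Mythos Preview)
Your proposal is correct and follows essentially the same route as the paper: observe that $\Gamma$ is binary and that distinct bidding vectors yield distinct rows (since the entries are coordinate-wise threshold indicators), invoke Lemma~\ref{lem:binarygamma} to obtain $N$-approximability, and then apply Corollary~\ref{cor:imp} together with $N=\Omega(\ln K)$ to simplify the max. Your write-up is in fact more explicit than the paper's, which leaves the appeal to Lemma~\ref{lem:binarygamma} implicit and treats implementability in the paragraph preceding the theorem rather than inside the proof.
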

 \begin{proof}
 Notice that $\Gamma$ is binary and the rows of $\Gamma$ come from component-wise threshold functions of the bidding vector. Therefore, for two different bidding vectors the corresponding two rows in $\Gamma$ would also be different. 
 Thus, we can apply Corollary \ref{cor:imp} to get
 \begin{equation}
 \begin{split}
    \E\left[\max_{b\in\mathcal{B}}\sum_{t=1}^Tu(b,p_t)-\sum_{t=1}^Tu(b_t,p_t)\right]=&O\left(\max\{\gamma,\ln K,\sqrt{N\ln K}\}\sqrt{T-L_T^*} \right)\\
     =&O\left(\max\{N,\ln K,\sqrt{N\ln K}\}\sqrt{T-L_T^*} \right)\\
     =&O\left(N\sqrt{T-L_T^*}\right)\\
     =&O(hm\sqrt{T-L_T^*}).
 \end{split}
\end{equation}
This completes the proof.
 \end{proof}
\subsection{Level auction}
\label{app:proof:level}
We consider the online level auction problem with single-item, $n$-bidders, $s$-level and $m$-discretization level. We give only a brief review of this problem setting, and we refer the reader to \citet{dudik2020oracle} for a complete description. In each round $t$ of this problem, firstly an auctioneer picks $s$ non-decreasing thresholds from a discretized set $\{\frac{1}{m},\dots,\frac{m}{m}\}$ for each bidder. Let $a_t=[a_t^{(1,1)},\dots,a_t^{(1,s)},\dots,a_t^{(n,1)},\dots,a_t^{(n,s)}]$ be the collection of the auctioneer's choices, where $a^{(i,j)}$ is $j$-th the threshold for the $i$-th bidder. Let $\A\in\{\frac{1}{m},\dots\frac{m}{m}\}^{ns}$ be all possible auctions. 
We further make the following assumption on $\A$, which corresponds to $\S_{s,m}$ as considered in \cite[Section 3.3]{dudik2020oracle}. 
\begin{ass}
\label{ass:lebel:2}
Assume: (a) $\forall a,{a'}\in\mathcal{A}$, there  exists at least one pair $(i,j)\in [n]\times[s]$, such that $a^{(i,j)}\not={a'}^{(i,j)}$; and (b) $\forall a\in\mathcal{A}$, $\forall i\in[n]$, $a^{(i,1)}<\dots<a^{(i,s)}$.
\end{ass}
After $a_t$ is chosen, the bidders reveal their valuations. We denote the  collection of the bidders' valuations as $b_t=[b_t^{(1)},\dots,b_t^{(n)}]\in\B=[0,1]^n$. As a consequence, the auctioneer obtains a reward $r(a_t,b_t)$, which is calculated based on the following rule. 
\begin{defn}[The rule for level auction]
\label{defn:level}
For each bidder $i\in[n]$, define the level index $\ell_t^{(i)}$ be the maximum $j$ such that $a_t^{(i,j)}\leq b^{(i)}_t$, with $\ell_t^{(i)}=0$ when $a_t^{(i,1)}>b_t^{(i)}$. For each round $t$, all bidders whose level indexes are $0$ would be eliminated.
If no bidder left, $r(a_t,b_t)=0$. Otherwise, the bidder with the highest level index wins the item, and pays the price (i.e., $r(a_t,b_t)$) equal to the
minimum bid that he could have submitted and still won the auction. On the same level, the tie-break rule is in favor of the bidder with the smallest bidding index.
\end{defn}
In this framework the set of ``experts'' is the set $\A$ of threshold configurations over all bidders. The regret of the auctioneer over $T$ iterations is the gap between the revenue generated by the online choice of threshold configurations and the revenue of the best set of thresholds in hindsight, i.e.  
$$R_T := \E\left[\max_{a\in\A}\sum_{t=1}^T r(a,b_t)-\sum_{t=1}^Tr(a_t,b_t)\right].$$
\subsubsection{Algorithm and regret}

We first discuss how to construct an approximable (see Definition \ref{ass:single-out}) and implementable $\Gamma$ with small $\gamma$. Directly constructing $\Gamma$ is rather difficult. Thus, we first consider an \emph{augmented} auction problem with $n+1$ bidders. Let $\mathcal{A'}\in \{\frac{1}{m},\dots,\frac{m}{m}\}^{(n+1)s}$ be the set of possible auctions, and  $\mathcal{B'}\in[0,1]^{n+1}$ be the set of bidder profiles. We construct $\mathcal{A'}$ as follows:
\begin{defn}
\label{ass:lebel}
(a) Distinct auctions (first $n$ bidders): $\forall a,{a'}\in\mathcal{A'}$, there  exists at least one pair $(i,j)\in [n]\times[s]$, such that $a^{(i,j)}\not={a'}^{(i,j)}$; (b) distinct thresholds (first $n$ bidders): $\forall a\in\mathcal{A'}$, $\forall i\in[n]$, $a^{(i,1)}<\dots<a^{(i,s)}$; and (c) fixed thresholds for the $(n+1)$-th bidder: $\forall a\in\mathcal{A'}$, $a^{(n+1,1)}=\frac{1}{m}$, $a^{(n+1,j)}=\frac{j-1}{m}$ for $j\in\{2,\dots,s\}$. 
\end{defn}
Comparing Assumption  \ref{ass:lebel:2} and Definition \ref{ass:lebel}, it can be seen that the elements in $\A'$ and $\A$ have an one-to-one correspondence: $\forall a\in\A$, there only exists one $a'\in\A'$, such that $\forall (i,j)\in[n]\times[s]$, $a^{(i,j)}=a'^{(i,j)}$, and vice versa. 

In the augmented problem, at each round $t$, firstly the auctioneer chooses $a'_t$ from $\A'$. At the same time, we let the bidders reveal $b_t'=[b_t;0]$, where $b_t$ is the bidder vector of the original problem, and $b_t'$ is a $(n+1)$-dimensional vector. Then, the  auctioneer obtain a reward $r'(a'_t,b_t')$, where $r': \A'\times \B'\mapsto [0,1]$ follows the auction rule in Definition \ref{defn:level}. For $a_t'$, denote the related auction in $\A$ as $a_t$, and we have the following lemma.
\begin{lemma}
We have $r(a_t,b_t)=r'(a'_t,b'_t)$, and 
$$\E\left[\max_{a\in\A}\sum_{t=1}^T r(a,b_t)-\sum_{t=1}^Tr(a_t,b_t)\right]=\E\left[\max_{a'\in\A'}\sum_{t=1}^T r'(a',b'_t)-\sum_{t=1}^Tr'(a'_t,b'_t)\right].$$
\end{lemma}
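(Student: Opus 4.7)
The whole claim rests on a single structural observation: the augmented $(n+1)$-th bidder is, by design, always eliminated, and consequently contributes nothing to the outcome of the auction. I would begin by pinning down this observation. Under Definition \ref{ass:lebel}(c), the smallest threshold of the $(n+1)$-th bidder satisfies $a'^{(n+1,1)} = 1/m$. Meanwhile, by construction of the augmented bid vector $b'_t = [b_t; 0]$, the $(n+1)$-th bidder has valuation $b'^{(n+1)}_t = 0 < 1/m = a'^{(n+1,1)}$. Hence there is no $j \in [s]$ with $a'^{(n+1,j)} \leq b'^{(n+1)}_t$, so the level index $\ell_t^{(n+1)}$ equals $0$ for every choice of $a'_t \in \A'$. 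By the elimination rule in Definition \ref{defn:level}, the $(n+1)$-th bidder is removed before the winner and payment are determined.

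Once this bidder is removed, the augmented auction reduces exactly to an instance of the original level auction with the first $n$ bidders, valuations $b_t$, and thresholds $(a'^{(i,j)}_t)_{i \in [n], j \in [s]}$. But the one-to-one correspondence between $\A$ and $\A'$ (Definition \ref{ass:lebel}(a)-(b) versus Assumption \ref{ass:lebel:2}) is precisely designed so that the first $n$ coordinates of $a'_t$ match $a_t$. Thus the level indices of bidders $1, \ldots, n$, the winning bidder (with the tie-break favoring the smallest bidding index), and the minimum price required to win are all identical across the two auctions. This yields $r'(a'_t, b'_t) = r(a_t, b_t)$. Moreover, the same reasoning applied to an arbitrary pair $(a, a') \in \A \times \A'$ in correspondence gives $r'(a', b'_t) = r(a, b_t)$ for all $t$.

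From this pointwise identity, the regret equality is a direct consequence. The bijection between $\A$ and $\A'$ preserves the total reward of any fixed strategy:
\[
\max_{a' \in \A'} \sum_{t=1}^T r'(a', b'_t) = \max_{a \in \A} \sum_{t=1}^T r(a, b_t),
\]
while $\sum_{t=1}^T r'(a'_t, b'_t) = \sum_{t=1}^T r(a_t, b_t)$ holds pathwise along any realization of the algorithm's randomness. Taking expectations finishes the proof. I do not anticipate a serious obstacle: the only subtlety is that Definition \ref{ass:lebel}(c) allows the $(n+1)$-th bidder's thresholds to be non-strictly increasing (starting $1/m, 1/m, 2/m, \ldots$), but this is harmless since the argument only uses $a'^{(n+1,1)} = 1/m > 0$ to guarantee elimination.
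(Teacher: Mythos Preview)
Your proposal is correct and follows exactly the paper's approach: the paper's proof consists of the single observation that the $(n+1)$-th bidder is always at level $0$ (since its valuation is $0$) and hence is eliminated without affecting the reward, from which both claims follow immediately via the one-to-one correspondence between $\A$ and $\A'$. Your write-up simply spells out in more detail the steps the paper leaves implicit.
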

\begin{proof}
Since the last element of $b_t'$ is 0, so the $(n+1)$-th bidder will always be at the 0-th level when computing $r'(a'_t,b'_t)$, it will not change the reward at round $t$. We note that, it does not mean the augmentation is not useful: designing a PTM for $r'(a',b')$ is much easier than for the original problem.
\end{proof}

This lemma reveals a duality between the two problems:
A low-regret and oracle-efficient algorithm for the augmented problem directly induces a low-regret and oracle-efficient  algorithm for the original problem by replacing $a'\in\A'$ with its corresponding auction $a_t$ in $\A$. To help understanding, we illustrate the relationship between the original problem and the augmented problem in Figure \ref{fig:la}.
\begin{figure}
\centering
  \includegraphics[width=0.8\linewidth]{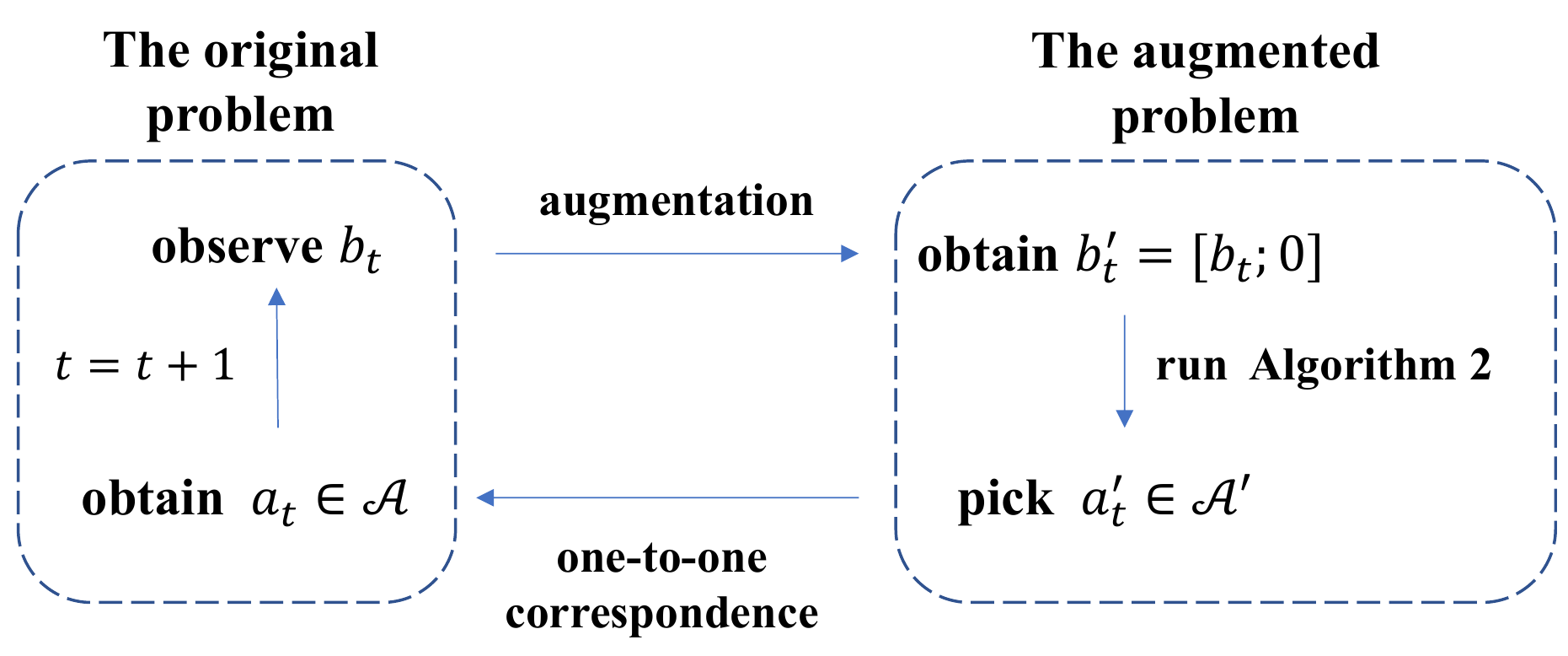}
  \caption{The relationship between the original problem and our proposed augmented problem.}
  \label{fig:la}
\end{figure}

Next, we propose 
to construct an approximable $\Gamma$ matrix for the augmented problem by using  $ns(m-s+1)$ bid profiles in $\B'$. 
For $i\in\{1,\dots,n\}$, $j\in\{1,\dots,s\}$, and  $k\in\{1,\dots,m-s+1\}$, let a bidder vector $b'^{(i,j,k)}\in\B'$ be 
\begin{equation}
\label{eqn:v}
b'^{(i,j,k)}=\frac{k+j-1}{m}e_i+\frac{j-1}{m}e_{n+1},
\end{equation}
where $e_i$ is a $(n+1)$-dimension unit vector whose $i$-th element is 1. Note that the construction here is to ensure that only the $i$-th and the $(n+1)$-th bidders are likely to win, which greatly simplifies our construction of the PTM. 
The following lemma illustrates how to construct the PTM and the corresponding vector $s$ which satisfy the $\gamma$-approximability condition. 
\begin{lemma}
\label{lem:level}
Let $\V=\{b'^{(i,j,k)}\}_{i,j,k}$ be the set containing all $b'$ defined in \eqref{eqn:v}. Let $\Gamma^{\V}\in[0,1]^{K\times ns(m-s+1)}$ be the matrix implemented from $\V$ by assigning each $r'(\cdot,b'^{(i,j,k)})$ to the columns of $\Gamma^{\V}$ one-by-one. Then $\Gamma^{\V}$ is $nsm$-approximable. 
\end{lemma}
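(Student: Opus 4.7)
My plan is to construct, for each row $k$ of $\Gamma^{\V}$, an explicit weight vector $s^{(k)} \in \R^{ns(m-s+1)}$ with $\|s^{(k)}\|_1 \leq nsm$ that satisfies the approximability inequality uniformly in both the competing row $\ell$ and the bidder profile $y \in \B'$. Since the rewards $r'(\cdot, y)$ lie in $[0,1]$ and $f = 1 - r'$, it suffices to ensure $\langle \Gamma^{\V,(k)} - \Gamma^{\V,(\ell)}, s^{(k)} \rangle \geq 1$ for every $\ell \neq k$; the inequality for $\ell = k$ is trivial.

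The first step is to analyze the reward $r'(a, b'^{(i',j',k')})$ as a function of $k'$ for each fixed auction $a$ and each pair $(i', j')$. The bid profile $b'^{(i',j',k')}$ was engineered so that only bidders $i'$ and $n+1$ remain uneliminated, which yields a clean three-case decomposition based on whether bidder $i'$'s level index is greater than, equal to, or less than $j'$. As $k'$ ranges over $\{1, \ldots, m-s+1\}$, the probe value $v = (k'+j'-1)/m$ sweeps every admissible value of $a^{(i',j')}$, so $k' \mapsto r'(a, b'^{(i',j',k')})$ becomes a piecewise-constant, monotone step function whose break points encode the thresholds $\{a^{(i',j'')}\}_{j'' \geq 1}$ exactly.

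Next I will define $s^{(k)}$ block-by-block over $(i', j') \in [n] \times [s]$, taking in each block a signed combination supported on a small (ideally two-element) set of adjacent $k'$-coordinates, chosen so that $\sum_{k'} s^{(k)}_{i',j',k'}\, r'(a, b'^{(i',j',k')})$ acts as a discrete derivative firing at $a^{(k, i', j')}$. Scaling each nonzero entry by a factor $m$ is necessary because the step heights are of order $1/m$; the total $\ell_1$-budget then becomes $O(ns \cdot m)$, matching the claimed $nsm$ bound. With this construction, clause (a) of Definition~\ref{ass:lebel} guarantees that $a^{(k)}$ and $a^{(\ell)}$ disagree in some coordinate $(i^\star, j^\star)$; substituting $a^{(\ell)}$ into the $(i^\star, j^\star)$-block of $s^{(k)}$ either shifts the detected step or removes it, and a direct calculation using the case analysis from step one shows that the $m$-rescaling contributes at least $1$ to the inner product from this single block.

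The main obstacle is controlling the contributions from the remaining blocks $(i', j') \neq (i^\star, j^\star)$. Even when the thresholds $a^{(k, i', j')}$ and $a^{(\ell, i', j')}$ agree, cross-effects stemming from differences in bidder $i^\star$'s level index across those probe profiles could in principle produce negative contributions that erode the gap. I plan to resolve this by exploiting the monotonicity of $r'(a, b'^{(i',j',\cdot)})$ in $k'$ together with a careful choice of signs in $s^{(k)}$ so that every other block contributes nonnegatively; the fallback, if the two-coordinate derivative turns out to be too coarse, is to sum signed impulses over a richer family of probe points, which incurs only a constant blow-up and still fits within the $O(nsm)$ budget on $\|s^{(k)}\|_1$.
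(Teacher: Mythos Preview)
Your overall strategy---build $s$ block-by-block with one block per pair $(i',j')$, put $O(m)$ mass in each block, and invoke the sufficient condition $\langle \Gamma^{\V,(k)} - \Gamma^{\V,(\ell)}, s\rangle \geq 1$---is the right one, and matches the paper. But the execution you sketch is considerably more complicated than what is actually needed, and the complication is the source of the ``cross-effect'' difficulty you worry about in your last paragraph.

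The key observation you are missing is this. Fix the target auction $a'$ and a block $(i',j')$. There is exactly one index $k^\star$ with $(k^\star + j' - 1)/m = a'^{(i',j')}$, and at that \emph{single} column the reward $r'(\cdot, b'^{(i',j',k^\star)})$ is uniquely maximized by $a'$ among all auctions whose $(i',j')$-threshold differs from $a'^{(i',j')}$: if $\hat a'^{(i',j')} \le a'^{(i',j')}$ then bidder $i'$ wins and pays $\hat a'^{(i',j')} \le a'^{(i',j')}$, while if $\hat a'^{(i',j')} > a'^{(i',j')}$ then bidder $n+1$ wins and pays at most $(j'-1)/m < a'^{(i',j')}$. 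In either case the gap is at least $1/m$. So the paper simply sets $s_{i',j',k^\star} = m$ and all other coordinates of the block to zero. Every block then contributes a \emph{nonnegative} amount to $\langle \Gamma^{\V,a'} - \Gamma^{\V,\hat a'}, s\rangle$, and the block where $a'$ and $\hat a'$ disagree contributes at least $1$. No signed ``discrete derivative'' is needed, and because all weights are nonnegative there are no negative cross-terms to control.

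Your two-coordinate signed construction creates exactly the problem you then try to patch: the negative weight can pick up a nonzero reward difference (since $r'(a,b'^{(i',j',k')})$ does depend on lower thresholds $a^{(i',j'')}$ with $j'' < j'$ in the regime where bidder $n+1$ wins), and bounding that would require the extra work you allude to. It is also worth noting that $r'(a, b'^{(i',j',k')})$ does \emph{not} depend on $a^{(i^\star,\cdot)}$ for $i^\star \neq i'$ at all (only bidders $i'$ and $n+1$ have nonzero bids), so the specific cross-effect you describe cannot arise; the only coupling is within bidder $i'$'s own thresholds, and it disappears once you drop the signed weights.
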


\begin{proof}
We first prove $r'(a',b'^{(i,j,k)})=a'^{(i,j)}\II[a'^{(i,j)}\leq \frac{k+j-1}{m}] + \frac{j-1}{m}\II[a'^{(i,j)}> \frac{k+j-1}{m}]$, then argue that it leads to $nsm$-approximability. 


For $j=1$, $b'^{(i,1,k)}=\frac{k}{m}e_i$. According to Definition \ref{defn:level}, when $a'^{(i,1)}>\frac{k}{m}$, all bidders are at the 0-th level, so
no bidder wins the item, and  $r(a',b'^{(i,j,k)})=0$. Otherwise, bidder $i$ wins the item, and pays $a'^{(i,1)}$. Thus, $r'(a',b'^{(i,1,k)})=a'^{(i,1)}\II[a'^{(i,1)}\leq \frac{k}{m}]$. For $j>1$, $b'^{(i,j,k)}=\frac{k+j-1}{m}e_i+\frac{j-1}{m}e_{n+1}$. Based on the third part of Definition \ref{ass:lebel}, bidder $n+1$ is at the $j$-th level. If $a'^{(i,j)}>\frac{k-j+1}{m}$, then bidder $n+1$ wins the item, and pays $r'(a',b'^{(i,j,k)})=\frac{j-1}{m}$. Otherwise, bidder $i$ wins the item, and pays $r'(a',b'^{(i,j,k)})=a'^{(i,j)}$. In sum, we have $r'(a',b'^{(i,j,k)})=a'^{(i,j)}\II[a'^{(i,j)}\leq \frac{k+j-1}{m}] + \frac{j-1}{m}\II[a'^{(i,j)}> \frac{k+j-1}{m}]$ holds for any $j\in [s]$.

Next, we prove the approximability based on Lemma \ref{ass:single-out-2}. WLOG, consider one auction  $a'\in\A'$, and let $\Gamma^{\V,a'}$ be the row related to $a'$, which is a $ns(m-s+1)$-dimensional vector based on Lemma \ref{lem:level}. Our goal is to show that, there exists a vector $s$, such that 
$$ \forall \widehat{a}'\in \A', \widehat{a}'\not=a', \left\langle \Gamma^{\V,a'}-\Gamma^{\V,a},s\right\rangle\geq 1.$$
Denote $s^{(i,j,k)}$ as the  element of $s$ which is related to $b'^{(i,j,k)}$ (see Lemma \ref{lem:level}), and we discuss how to set $s^{(i,j,k)}$ as follows. 

First, based on the second part of Definition \ref{ass:lebel}, we have $\frac{j}{m}\leq a'^{(i,j)}\leq \frac{m-s+j}{m}$. Thus, combining  \eqref{eqn:v}, we know $\forall i\in[n]$,  $j\in[s]$, $\exists k'\in[m-s+1]$, such that $b'^{(i,j,k')}=a'^{(i,j)} e_i+\frac{j-1}{m}e_{n+1}$, which is in turn equivalent to $\frac{k'+j-1}{m}=a'^{(i,j)}$. In the column corresponding to $b'^{(i,j,k')}$, $\forall \widehat{a}'\in\A'$, $\widehat{a}'^{(i,j)}\not=a'^{(i,j)}$, we have $r'({a}',b'^{(i,j,k')})-r'(\widehat{a}',b'^{(i,j,k')})\geq \frac{1}{m}.$ Intuitively, it means that, in this column, only  auctions whose $j$-th threshold for the $i$-th bidder equals to $a'^{(i,j)}$ yield the highest revenue, and these auctions outperform
 other auctions by least $\frac{1}{m}$. Choosing the corresponding  $s^{(i,j,k')}$ as $m$, and setting $s^{(i,j,k)}=0$ for $k\not=k'$, makes $\sum_{k=1}^{m-s+1}r'({a}',b'^{(i,j,k)})s^{(i,j,k)}-r'(\hat{a}',b^{(i,j,k)})s^{(i,j,k)}\geq 1$. Since  we need  to ensure that the loss gap between $a'$ and any other auctions by at least 1, we need to set $s^{(i,j,k')}=m$ for any $i\in[n]$, $j\in [s]$. It is obvious that $\|s\|_1=nsm$ and $\Gamma^{\V}$ is $nsm$-approximable according to Lemma \ref{ass:single-out-2}.


\end{proof}
To illustrate the construction of $\Gamma^{\V}$, we provide an example for the case where $m=5$ and $s=3$ and inspect encodings with respect to a single bidder in Table \ref{tab:exp}. 
\begin{table}[]
\centering
\caption{Illustration of $\Gamma^{\V}$ when $m=5$ and $s=3$. Each $a'^{(i,j)}$ is encoded by $m-s+1=3$ columns. Here $g=\frac{j-1}{m}=\frac{2-1}{5}=\frac{1}{5}$, and $h=\frac{j-1}{m}=\frac{3-1}{5}=\frac{2}{5}$. Consider the auctions whose $a'^{(i,2)}=\frac{3}{5}=\frac{k'+j-1}{m}=\frac{k'+1}{5}$. Then, at the second column  that is related to $a'^{(i,2)}$ (corresponds to $k'=2$), only such auctions can yield the highest revenue $\frac{3}{5}$, and the revenue is at least $\frac{1}{5}$ higher than auctions with $a'^{(i,2)}\neq\frac{3}{5}$.} 
\makeatletter
\label{tab:exp}
\newcommand{\thickhline}{%
    \noalign {\ifnum 0=`}\fi \hrule height 1pt
    \futurelet \reserved@a \@xhline
}

\newcolumntype{"}{@{\hskip\tabcolsep\vrule width 1pt\hskip\tabcolsep}}
\makeatother
\resizebox{\columnwidth}{!}{\begin{tabular}{ccc"cccccccccccc}
\thickhline
  & \textbf{Auction $\mathbf{a'}$} &  &   & & & & \multicolumn{3}{c}{\textbf{Coding}} \\ \cline{5-13} 
\dots  & $(a'^{(i,1)},a'^{(i,2)},a'^{(i,3)})$ & \dots & \multicolumn{1}{l|}{\dots} &  & $a'^{(i,1)}$  &\multicolumn{1}{l|}{}  &  & $a'^{(i,2)}$  &\multicolumn{1}{l|}{} &  & $a'^{(i,3)}$  &\multicolumn{1}{l|}{} & \dots \\ 
\thickhline
\dots  & $(\unitfrac{1}{5},\unitfrac{2}{5},\unitfrac{3}{5})$ & \dots & \multicolumn{1}{l|}{\dots} & \unitfrac{1}{5} & \unitfrac{1}{5} & \multicolumn{1}{l|}{\unitfrac{1}{5}} & \unitfrac{2}{5} & \unitfrac{2}{5} & \multicolumn{1}{l|}{\unitfrac{2}{5}}&  \unitfrac{3}{5}&  \unitfrac{3}{5} & \multicolumn{1}{l|}{ \unitfrac{3}{5}} & \dots\\ \hline 
\dots  & $(\unitfrac{1}{5},\unitfrac{2}{5},\unitfrac{4}{5})$ & \dots & \multicolumn{1}{l|}{\dots} & \unitfrac{1}{5} & \unitfrac{1}{5}& \multicolumn{1}{l|}{\unitfrac{1}{5}} & \unitfrac{2}{5} & \unitfrac{2}{5} & \multicolumn{1}{l|}{\unitfrac{2}{5}}& h&  \unitfrac{4}{5}& \multicolumn{1}{l|}{ \unitfrac{4}{5}}& \dots\\\hline
\dots  &  $(\unitfrac{1}{5},\unitfrac{2}{5},\unitfrac{5}{5})$ & \dots  & \multicolumn{1}{l|}{\dots} & \unitfrac{1}{5} & \unitfrac{1}{5}& \multicolumn{1}{l|}{\unitfrac{1}{5}} & \unitfrac{2}{5} & \unitfrac{2}{5} & \multicolumn{1}{l|}{\unitfrac{2}{5}}& h& h & \multicolumn{1}{l|}{ \unitfrac{5}{5}}& \dots\\\hline
\dots  &  $(\unitfrac{1}{5},\unitfrac{3}{5},\unitfrac{4}{5})$  & \dots  & \multicolumn{1}{l|}{\dots} & \unitfrac{1}{5} &\unitfrac{1}{5}& \multicolumn{1}{l|}{\unitfrac{1}{5}} & g & \unitfrac{3}{5} & \multicolumn{1}{l|}{\unitfrac{3}{5}}& h&  \unitfrac{4}{5}& \multicolumn{1}{l|}{ \unitfrac{4}{5}}& \dots\\\hline
\dots  &  $(\unitfrac{1}{5},\unitfrac{3}{5},\unitfrac{5}{5})$  & \dots  & \multicolumn{1}{l|}{\dots} & \unitfrac{1}{5} &\unitfrac{1}{5}& \multicolumn{1}{l|}{\unitfrac{1}{5}}& g& \unitfrac{3}{5} & \multicolumn{1}{l|}{\unitfrac{3}{5}}& h& h& \multicolumn{1}{l|}{ \unitfrac{5}{5}}& \dots\\\hline
\dots  &  $(\unitfrac{1}{5},\unitfrac{4}{5},\unitfrac{5}{5})$  & \dots  & \multicolumn{1}{l|}{\dots} & \unitfrac{1}{5} &\unitfrac{1}{5}& \multicolumn{1}{l|}{\unitfrac{1}{5}}& g& g& \multicolumn{1}{l|}{\unitfrac{4}{5}}& h& h& \multicolumn{1}{l|}{ \unitfrac{5}{5}}& \dots\\\hline
\dots &  $(\unitfrac{2}{5},\unitfrac{3}{5},\unitfrac{4}{5})$  & \dots  & \multicolumn{1}{l|}{\dots} & 0 &\unitfrac{2}{5}& \multicolumn{1}{l|}{\unitfrac{2}{5}}& g& \unitfrac{3}{5}& \multicolumn{1}{l|}{\unitfrac{3}{5}}& h&  \unitfrac{4}{5}& \multicolumn{1}{l|}{ \unitfrac{4}{5}}& \dots\\\hline
\dots &  $(\unitfrac{2}{5},\unitfrac{3}{5},\unitfrac{5}{5})$  & \dots  & \multicolumn{1}{l|}{\dots} & 0 &\unitfrac{2}{5}& \multicolumn{1}{l|}{\unitfrac{2}{5}}& g& \unitfrac{3}{5}& \multicolumn{1}{l|}{\unitfrac{3}{5}}& h& h& \multicolumn{1}{l|}{ \unitfrac{5}{5}}& \dots\\\hline
\dots &  $(\unitfrac{2}{5},\unitfrac{4}{5},\unitfrac{5}{5})$  & \dots  & \multicolumn{1}{l|}{\dots} & 0 &\unitfrac{2}{5}& \multicolumn{1}{l|}{\unitfrac{2}{5}}& g& g& \multicolumn{1}{l|}{\unitfrac{4}{5}}& h& h& \multicolumn{1}{l|}{ \unitfrac{5}{5}} & \dots\\\hline
\dots &  $(\unitfrac{3}{5},\unitfrac{4}{5},\unitfrac{5}{5})$  & \dots  & \multicolumn{1}{l|}{\dots} & 0 &0& \multicolumn{1}{l|}{\unitfrac{3}{5}}& g & g& \multicolumn{1}{l|}{\unitfrac{4}{5}}& h& h& \multicolumn{1}{l|}{ \unitfrac{5}{5}}& \dots\\\hline
\dots &  \dots  & \dots &\multicolumn{1}{l|}{\dots} & \dots & \dots & \multicolumn{1}{l|}{\dots} & \dots & \dots & \multicolumn{1}{l|}{\dots} & \dots & \dots & \multicolumn{1}{l|}{\dots}& \dots \\
\end{tabular}}
\end{table}

Combining Lemma \ref{lem:level}, Corollary \ref{cor:imp}, we get the following results. 
\begin{cor}
\label{cor:imp:3}
Consider running Algorithm \ref{alg:GFTPL:33} with $\Gamma^{\V}$ on the augmented problem. Let $\{a'_t\}_t\in\A'^T$ be the output of the algorithm, and  $\{a_t\}_t\in\A^T$ be the corresponding auctions in the original problem. Then Algorithm \ref{alg:GFTPL:33} is oracle-efficient, and
\begin{equation}
    \begin{split}
        \E\left[\sum_{t=1}^T r(a^*,b_t)-\sum_{t=1}^Tr(a_t,b_t)\right]= {} &O\left(\max\left\{nsm,ns\ln m, \sqrt{nms\cdot ns\ln m}\right\}\sqrt{T-L_T^*}\right) \\
        = {} & O\left(nsm\sqrt{T-L_T^*}\right).
    \end{split}
\end{equation}
$$$$
\end{cor}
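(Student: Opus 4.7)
}

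The plan is to assemble three ingredients that have all been established earlier: (i) $\Gamma^{\V}$ is $nsm$-approximable (Lemma \ref{lem:level}), (ii) $\Gamma^{\V}$ is implementable with the reward function $r'$ by construction (it is literally built by assigning each $r'(\cdot, b'^{(i,j,k)})$ to a column, with dataset size $1$ per column), and (iii) the augmented problem is reward-equivalent to the original problem (the preceding lemma shows $r(a_t, b_t) = r'(a_t', b_t')$ pointwise and hence the regrets agree). With these in hand, the result is essentially a substitution into Corollary \ref{cor:imp}.

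First, I would verify the parameter values that feed into Corollary \ref{cor:imp}. We have $\gamma = nsm$ and $N = ns(m-s+1) \leq nsm$. For $K = |\A'|$, since each of the $ns$ thresholds lies in a discretized set of size $m$, $K \leq m^{ns}$ and therefore $\ln K \leq ns \ln m$. Plugging these into the small-loss bound of Corollary \ref{cor:imp} yields a factor
\begin{equation*}
\max\left\{\gamma,\ \ln K,\ \sqrt{N \ln K}\right\} \;\leq\; \max\left\{nsm,\ ns\ln m,\ ns\sqrt{(m-s+1)\ln m}\right\}.
\end{equation*}
Since $\ln m \leq m$ for $m \geq 1$ and $\sqrt{(m-s+1)\ln m} \leq \sqrt{m \cdot m} = m$, the first term $nsm$ dominates, producing the claimed rate $O(nsm\sqrt{T - L_T^*})$.

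Next, I would import the equivalence between the original and augmented problems: running Algorithm \ref{alg:GFTPL:33} on $\A'$ with $\Gamma^{\V}$ produces a sequence $\{a_t'\}$ whose corresponding $\{a_t\}$ in $\A$ satisfies $r(a_t,b_t) = r'(a_t', b_t')$ at every round. Combined with $\max_{a \in \A} \sum_t r(a, b_t) = \max_{a' \in \A'} \sum_t r'(a', b_t')$, the regret upper bound transfers verbatim. Oracle-efficiency follows from the implementability of $\Gamma^{\V}$ together with the general theorem in Section \ref{imp-real} stating that an implementable $\Gamma$ yields per-round complexity $O(T + NM)$; here $M = 1$ and $N = ns(m-s+1)$.

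The only mildly delicate step is the term comparison confirming that $nsm$ dominates the max; the rest is bookkeeping. There is no subtle obstacle because all the real work—establishing approximability in the augmented problem via the carefully chosen bid profiles $b'^{(i,j,k)}$, and the reward equivalence—has been done in the lemmas that precede the corollary.
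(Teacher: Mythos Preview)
Your proposal is correct and matches the paper's approach exactly: the paper simply states that the corollary follows by ``Combining Lemma \ref{lem:level}, Corollary \ref{cor:imp},'' and you have carried out precisely that combination, supplying the parameter bookkeeping ($\gamma = nsm$, $N = ns(m-s+1)$, $\ln K \le ns\ln m$) and the dominance check for the $\max$ that the paper leaves implicit.
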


\subsection{Proof of Lemma \ref{lem:small:y}}
\label{app:proof:lem:smally}

In this section, we prove Lemma~\ref{lem:small:y}, which yields oracle-efficient online learning with a very small output space.
Recall that $\mathcal{X}=\{x^{(1)},\dots,x^{(K)}\}$ and we denote the adversary's output space as $\mathcal{Y}=\{y^{(1)},\dots,y^{(d)}\}$. We construct $\Gamma$ as $\forall k\in[K], j\in[d]$, 
$$
\Gamma^{(k,j)}=f(x^{(k)},y^{(j)}).
$$
It is straightforward to see that in this way $\Gamma$ is implementable with complexity 1. 
On the other hand, for each $y_t$, we can find $j_t\in [d]$, such that $y^{(j_t)}=y_t$. Thus, we can meet $1$-approximability by choosing $s=e_{j_t}$ for action $k\in[K]$ in round $t$, where $e_t$ is a unit vector whose $j_t$-th dimension is 1 and all other elements are 0.
This completes the proof.
\qed

\subsection{Proof of Lemma \ref{lem:small}}
\label{app:lem:5}

In this section, we prove Lemma~\ref{lem:small}, which yields oracle-efficient online learning for the transductive online classification problem.
For this problem, we create a PTM $\Gamma$ with $|\W|$ columns, which is configured as  $\Gamma^{(k,j)}=f(x^{(k)},(w^{(j)},1))$, $\forall k\in[K], j\in[|\W|]$.

It is clear that $\Gamma$ is implementable with complexity 1. Next, we prove that $\Gamma$ approximable. Let $w_t$ be the feature vector observed in round $t$. Then there exists $j_t\in[|\W|]$ such that $w^{(j_t)}=w_t$. If $y_t=1$, then the equation of Definition \ref{ass:single-out} holds by setting $s=e_{j_t}$. If $y_t=0$, then the equation can be met by picking $s=-e_{j_t}$. 
This completes the proof of the lemma.
It is worth noting that this choice of $\Gamma$ need not be $\delta$-admissible for any $\delta > 0$.

\subsection{Negative Implementability}
\label{app:negei}

When the oracle only accepts non-negative weights for minimizing the loss (or non-positive weights for maximizing the reward), Algorithms \ref{alg:GFTPL:main} and \ref{alg:GFTPL:33} cannot make use of the oracle directly, since the noise $\alpha$ can be negative for Algorithm \ref{alg:GFTPL:main}, and positive for Algorithm \ref{alg:GFTPL:33}. 
To handle this issue, we consider two solutions: (a) constructing \emph{negative-implementable} PTMs (first defined by~\cite{dudik2020oracle}); (b) replacing the distribution of $\alpha$ from the Laplace distribution with the (negative) exponential distribution. The former solution can be used in  VCG with bidder-specific reserves, envy-free item pricing, problems with small $\Y$ and transductive online classification, while the latter is suitable for the level auction problem and multi-unit online welfare maximization.

\subsubsection{Negative implementable PTM}
To deal with negative weights, \cite{dudik2020oracle} introduce the concept of negative implementability:
\begin{defn}
A matrix $\Gamma$ is negatively implementable with complexity $M$ if for each $j\in[N]$ there exist a (non-negatively) weighted dataset $S_j^{-}$, with $|S_j^{-}|\leq M$, such that $\textstyle \forall i,i'\in[K]$, 
$$-(\Gamma^{(i,j)}-\Gamma^{(i',j)})=\sum_{(w,y) \in \R_{+} \times \Y}w(f(x^{(i)},y)-f(x^{(i')},y)).$$
\end{defn}
Similar to Theorem 5.11 of \cite{dudik2020oracle}, we have the following theorem. 
\begin{theorem}
Suppose the oracle can only accept non-negative weights (for minimizing the loss). If $\Gamma$ is implementable and negative implementable with complexity $M$, then Algorithm \ref{alg:GFTPL:main} can achieve oracle-efficiency with per-round complexity $O(T+NM)$.
\end{theorem}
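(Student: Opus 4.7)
The plan is to show that the $\argmin$ defining $x_t$ in Algorithm \ref{alg:GFTPL:main} can be reformulated, up to additive terms that do not depend on $k$, as an oracle query on a single dataset of size $O(T+NM)$ whose weights are all non-negative. A single oracle invocation per round then produces $x_t$, yielding the claimed complexity.

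First I would fix an arbitrary reference expert $x^{(k')}$ and invoke implementability to write
$$\Gamma^{(k,i)} \;=\; \Gamma^{(k',i)} \;+\; \sum_{(w,y)\in S_i} w\bigl(f(x^{(k)},y)-f(x^{(k')},y)\bigr).$$
Substituting this into $\sum_{i=1}^N (\alpha^{(i)}/\eta_t)\,\Gamma^{(k,i)}$ and discarding every summand that does not depend on $k$, the objective in Algorithm \ref{alg:GFTPL:main} reduces (for the purposes of taking the $\argmin$) to
$$\sum_{j=1}^{t-1} f(x^{(k)},y_j) \;+\; \sum_{i=1}^N \frac{\alpha^{(i)}}{\eta_t}\sum_{(w,y)\in S_i} w\cdot f(x^{(k)},y).$$

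The key step, and the only place where the negative-implementability hypothesis is actually used, is to split the second sum according to the sign of $\alpha^{(i)}$. For coordinates $i$ with $\alpha^{(i)}\geq 0$, I would keep the term as $\sum_{(w,y)\in S_i}(\alpha^{(i)}w/\eta_t)\,f(x^{(k)},y)$, whose weights $\alpha^{(i)}w/\eta_t$ are non-negative because the $w$'s used in the implementing datasets throughout the paper are non-negative. For coordinates $i$ with $\alpha^{(i)}<0$, I would apply the negative-implementability identity $-(\Gamma^{(k,i)}-\Gamma^{(k',i)})=\sum_{(w,y)\in S_i^-} w(f(x^{(k)},y)-f(x^{(k')},y))$, turning the contribution into $\sum_{(w,y)\in S_i^-}(|\alpha^{(i)}|w/\eta_t)\,f(x^{(k)},y)$ plus a $k$-independent constant; by definition of negative implementability the weights in $S_i^-$ are non-negative, so the effective weights $|\alpha^{(i)}|w/\eta_t$ are also non-negative.

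Concatenating the history $\{(1,y_j)\}_{j=1}^{t-1}$ with the at most $NM$ perturbation pairs assembled above yields a single non-negatively weighted dataset $\widetilde{S}_t$ with $|\widetilde{S}_t|\leq T+NM$. One call to the restricted oracle on $\widetilde{S}_t$ then returns $x_t$, because every term we dropped is independent of $k$ and hence does not affect the $\argmin$. The per-round cost is dominated by assembling $\widetilde{S}_t$ and making the oracle call, giving the advertised $O(T+NM)$ bound. The main obstacle is simply clean bookkeeping of the $k$-independent constants absorbed when re-expressing $\Gamma^{(k,i)}$ in terms of loss differences; no new analytic ingredient beyond Definition \ref{defn:imp} and the definition of negative implementability is required.
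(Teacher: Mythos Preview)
Your argument is correct and is precisely the standard construction the paper has in mind: split the perturbation coordinates by the sign of $\alpha^{(i)}$, use $S_i$ when $\alpha^{(i)}\geq 0$ and $S_i^{-}$ when $\alpha^{(i)}<0$, and assemble a single non-negatively weighted dataset of size $O(T+NM)$. The paper itself does not spell out a proof but simply defers to Theorem~5.11 of \cite{dudik2020oracle}, whose proof follows exactly this sign-splitting route; two small points worth making explicit are that the weights in the (positive) implementing datasets $S_i$ are taken to be non-negative (this is the convention in \cite{dudik2020oracle} even though Definition~\ref{defn:imp} here does not say so explicitly), and that computing the adaptive step size $\eta_t$ requires one additional oracle call on the unperturbed history, which is also $O(T)$ and hence does not affect the stated per-round complexity.
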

For VCG with bidder-specific reserves and envy-free item pricing, \citep{dudik2020oracle} show that there exist 
\emph{binary} and admissible PTMs that are implementable and negative implementable. Then, based on Lemma \ref{lem:binarygamma}, this kind of PTMs directly leads to approximable, implementable and negative implementable PTMs, so the oracle-efficiency and the small-loss bound can be achieved for these settings according to the theorem above. 

Moreover, we can also find approximable, implementable and negative-implementable PTMs in the other mentioned applications of a) problems with a small output space $\Y$ and b) transductive online classification. 
For application a) recall that we constructed $\Gamma$ as $\Gamma^{(i,j)}=f(x^{(i)},y^{(j)})$. Then, it is straightforward to verify that this matrix can be negatively implemented by setting  $\Gamma^{(i,j)}=1-f(x^{(i)},y^{(j)})$. 
For application b), recall that we set $\Gamma^{(k,j)}=f(x^{(k)},(w^{(j)},1))$, $\forall k\in[K], j\in[|\W|]$.
This PTM can be negatively implemented by simply setting $\Gamma^{(k,j)}=f(x^{(k)},(w^{(j)},0))$, $\forall k\in[K], j\in[|\W|]$, which flips all elements of the binary matrix (for 0 to 1 and 1 to 0).

\subsubsection{Negative exponential distribution}
\begin{algorithm}[t]
\caption{Oracle-based GFTPL with negative exponential distribution}
\label{alg:GFTPL:44}
\begin{algorithmic}[1]
\STATE \textbf{Input:} Data set $S_j$, $j\in[N]$, that implement a matrix  $\Gamma\in[0,1]^{K\times N}$, $\eta_1=\min\{\frac{1}{\gamma},1\}$.
\STATE Draw IID vector $\widehat{\alpha} \sim \text{Exp}(1)^N$, and let $\alpha=[\alpha^{(1)},\dots,\alpha^{(N)}]=-\widehat{\alpha}$
\FOR{$t=1,\dots,T$}
\STATE Choose 
 $\displaystyle x_t \gets \argmin\limits_{k\in [K]}\sum_{j=1}^{t-1} f(x^{(k)},y_j) + \sum_{i=1}^N\frac{\alpha^{(i)}}{\eta_t}\left[\sum_{(w,y)\in \S_i}w\cdot r(x^{(k)},y)\right]$
\STATE Observe $y_t$
\STATE Compute ${\widehat{L}}_{t}^*=\min\limits_{k\in[K]}\sum_{j=1}^{t}f(x^{(k)},y_j)$ by using the oracle, set $\eta_{t+1} \gets \min\left\{\frac{1}{\gamma},\frac{1}{\sqrt{\widehat{L}_t^*+1}}\right\}$ 
\ENDFOR
\end{algorithmic}
\end{algorithm}

For the other auctions problems that we consider in this paper (i.e. multi-unit mechanisms and level auctions), the PTMs that we constructed are not negative implementable. However, we show that for these cases, Algorithm \ref{alg:GFTPL:33} with a negative exponential distribution is good enough to achieve the small-loss bound.
This adjusted algorithm is summarized in Algorithm \ref{alg:GFTPL:44}. 
Note that Algorithm \ref{alg:GFTPL:44} can directly use the oracle because $\alpha$ is non-positive. 

For Algorithm \ref{alg:GFTPL:44}, we introduce the following theorem. Our key observation is that, in these settings, the approximable vector (i.e., the vector $s$ in Definition \ref{ass:single-out}) is \emph{always element-wise non-negative}, which makes the proof go through when using the negative exponential distribution.

\begin{theorem}
\label{thm:negative}
Let $f(x,y)=1-r(x,y)$. Assume $\Gamma$ is $\gamma$-approximable w.r.t. $f(x,y)$ and implementable with function $r(x,y)$. Moreover, suppose $\forall y\in[\Y]$,  $k\in[K]$, the approximable vector $s$ is element-wise non-negative. Then Algorithm \ref{alg:GFTPL:44}  is oracle-efficient and achieves the following regret bound: 
$$R_T= \E\left[L_T^*-\sum_{t=1}^Tr(x_t,y_t)\right]=O\left(\max\left\{{\gamma},\ln K,N\right\}\sqrt{T-L_T^*}\right).$$
\end{theorem}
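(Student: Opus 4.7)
The plan is to follow the overall structure of the proof of Theorem \ref{thm:main:11}, modifying two steps to accommodate the switch from the Laplace to the negative exponential perturbation. First I would verify oracle-efficiency: since $r(x,y) = 1 - f(x,y)$ and every coordinate of $\alpha$ is non-positive by construction, the per-round objective of Algorithm \ref{alg:GFTPL:44} equals, up to a constant independent of $k$,
\begin{equation*}
\sum_{j=1}^{t-1} f(x^{(k)},y_j) + \sum_i \frac{|\alpha^{(i)}|}{\eta_t} \sum_{(w,y) \in S_i} w \cdot f(x^{(k)},y).
\end{equation*}
Provided the datasets $S_i$ use non-negative weights $w$, all coefficients on $f$ are non-negative, so this reduces to a valid instance of the non-negative-weight oracle problem in Eq.~\eqref{eqn:oracle}, yielding $O(T + NM)$ per-round complexity exactly as in the Laplace setting.

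The most delicate step is to re-establish Lemma \ref{lem:main:1111}. That proof uses the Laplace distribution only when bounding the density ratio $p(\alpha_t)/p(\alpha_t - s^{(i)})$ and when translating the integration variable. For the negative exponential the density factorizes as $p(\alpha_t) = \prod_j \eta_t \exp(\eta_t \alpha_t^{(j)}) \II[\alpha_t^{(j)} \leq 0]$, so
\begin{equation*}
\frac{p(\alpha_t)}{p(\alpha_t - s^{(i)})} = \exp\!\left(\eta_t \|s^{(i)}\|_1\right) \prod_j \frac{\II[\alpha_t^{(j)} \leq 0]}{\II[\alpha_t^{(j)} \leq s^{(i,j)}]}.
\end{equation*}
The non-negativity hypothesis on $s^{(i)}$ is precisely what rescues the argument: whenever $\alpha_t^{(j)} \leq 0$ we automatically have $\alpha_t^{(j)} \leq s^{(i,j)}$, so the indicator ratio equals $1$ on the support of $p$ and the density ratio reduces to $\exp(\eta_t \|s^{(i)}\|_1) \leq \exp(\gamma \eta_t)$. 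The translation $\alpha_t \to \alpha_t + s^{(i)}$ is a valid change of variables on $\R^N$, and Lemma \ref{lem:main:222} applies verbatim, yielding $\P[x_t = x^{(i)}] \leq \exp(\gamma \eta_t)\, \P[x_t' = x^{(i)}]$.

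Third, the decomposition $R_T = \textsc{Term 1} + \textsc{Term 2}$ and the bounds in Lemmas \ref{lem:main:term1} and \ref{lem:main:3333} go through unchanged, except that I must replace the tail bound of Lemma \ref{lem:inapp:emax}. Since $\alpha \leq 0$ and $\Gamma \geq 0$ we have $\max_k \Gamma^{(k)} \alpha \leq 0$, and its magnitude is dominated by $\max_k \Gamma^{(k)} \tilde{\alpha}$ with $\tilde{\alpha} = -\alpha \sim \text{Exp}(1)^N$. Mimicking the Chernoff-style argument in Lemma \ref{lem:inapp:emax} but with the exponential MGF $\E[\exp(\lambda \tilde{\alpha}^{(j)})] = 1/(1-\lambda)$ for $\lambda < 1$ gives
\begin{equation*}
\E\!\left[\max_k \Gamma^{(k)} \tilde{\alpha}\right] \leq \inf_{\lambda \in (0,1/2]} \left(\frac{\ln K}{\lambda} + \frac{N}{\lambda} \ln \frac{1}{1-\lambda}\right) = O(\max\{\ln K, N\}),
\end{equation*}
using $\ln(1/(1-x)) \leq 2x$ on $(0,1/2]$. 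The extra factor $N$ (rather than $\sqrt{N \ln K}$) is the sole price paid for the one-sided perturbation and accounts for the shape of the final rate. Feeding this through the proofs of Lemmas \ref{lem:main:term1} and \ref{lem:main:3333} produces the claimed $O(\max\{\gamma, \ln K, N\}\sqrt{T - L_T^*})$ regret bound.

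The main obstacle I expect is the density-ratio step: unlike the Laplace, the exponential has one-sided support and the ratio $p(\alpha_t)/p(\alpha_t - s^{(i)})$ would blow up if $s^{(i)}$ had any negative entry. The non-negativity hypothesis on $s^{(i)}$ is exactly what guarantees that the support of the shifted density contains the support of the original, keeping every step of the Lemma \ref{lem:main:1111} proof valid; verifying this hypothesis in each target application (for instance by inspecting the explicit $s^{(i)}$ built in Lemma \ref{lem:level}) is the only case work that remains.
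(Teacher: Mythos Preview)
Your overall approach matches the paper's: the density-ratio argument under the non-negativity of $s^{(i)}$, the oracle-efficiency verification, and the Chernoff bound $\E[\max_k \Gamma^{(k)}\tilde\alpha]=O(\max\{\ln K,N\})$ via the exponential MGF are all exactly what the paper does.

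There is, however, a real gap in your third step. You assert that Lemmas~\ref{lem:main:term1} and~\ref{lem:main:3333} ``go through unchanged'' apart from replacing the tail bound of Lemma~\ref{lem:inapp:emax}. This is not true. The proof of Lemma~\ref{lem:main:term1} relies on the Laplace noise being \emph{zero-mean} (inequality~(5) drops $\E[\langle\Gamma^{(x^*)},\alpha_t\rangle]$), and both lemmas use \emph{symmetry} of the Laplace density to identify $\E[\min_k\Gamma^{(k)}\alpha]$ with $-\E[\max_k\Gamma^{(k)}\alpha]$. The negative exponential is neither zero-mean nor symmetric, so both steps fail as written. The paper explicitly redoes the \textsc{Term~1} chain of inequalities, tracking the extra terms that arise from the nonzero mean (they appear as additional multiples of $\E[\max_i\Gamma^{(i)}\hat\alpha]$) and replacing the symmetry step by the simpler observation that $\Gamma^{(x_{t+1})}\alpha \geq -\max_i \Gamma^{(i)}\hat\alpha$ since $\hat\alpha\ge 0$. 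These extra terms are then absorbed into the same $O(\max\{\ln K,N\})$ quantity you already computed, so the final rate is unaffected---but you must recognize that the two lemmas require genuine modification, not just a swap of one tail bound for another.
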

\begin{proof}
The proof is similar to that of Theorem \ref{thm:main:11}, and here we only provide the sketch of the proof. For the relation between $\P[x_t=x^{(i)}]$ and $\P[x'_t=x^{(i)}]$, similar to \eqref{eqn:main:1:final}, we still have $\P[x_t=x^{(i)}]\leq \exp(\eta_t \gamma)\P[x'_t=x^{(i)}]$. The difference lies in \eqref{eqn:thm:1:main:distributionmove}:  $\beta$ (corresponds to $\alpha$ in Algorithm \ref{alg:GFTPL:44}) therein has support on the non-positive orthant, since $s$ is non-negative, $\frac{p(\beta)}{p(\beta-s)}$ is always well-defined. Thus, the negative exponential distribution is enough to make the proof go through. Next, for the upper bound of \textsc{term 1}, let $\widehat{\alpha}=-\alpha$ be the exponential distribution.
Similar to the proof of Lemma \ref{lem:main:term1}, we have 
\begin{equation*}
    \begin{split}
        {} & 2\gamma \sum_{t=1}^T \eta_t \E[f(x_t',y_t)]  \\
        \stackrel{\rm (1)}{\leq} {} &  2\gamma\sum_{t=1}^T\eta_t \E\left[ \left(\sum_{j=1}^{t}f(x_{t+1},y_j)+\left<\Gamma^{(x_{t+1})},\alpha_{t+1}\right>\right) -\left(\sum_{j=1}^{t-1}f(x_t,y_j)+\left<\Gamma^{(x_t)},\alpha_{t}\right>\right)\right]\\
     {} & +{2\gamma\sum_{t=1}^T \eta_t \left(\frac{1}{\eta_{t}}-\frac{1}{\eta_{t+1}}\right)\E\left[ \Gamma^{(x_{t+1})}\alpha\right]}\\ 
     \stackrel{\rm (2)}{\leq} {} & 2\gamma\sum_{t=1}^T\eta_t \E\left[ \left(\sum_{j=1}^{t}f(x_{t+1},y_j)+\left<\Gamma^{(x_{t+1})},\alpha_{t+1}\right>\right) -\left(\sum_{j=1}^{t-1}f(x_t,y_j)+\left<\Gamma^{(x_t)},\alpha_{t}\right>\right)\right]\\
     {} & +{2\gamma\sum_{t=1}^T \eta_t \left({\frac{1}{\eta_{t+1}}-\frac{1}{\eta_{t}}}\right)\E\left[\max_{i\in[K]} \Gamma^{i}\widehat{\alpha}\right]}\\ 
     \stackrel{\rm (3)}{\leq} {} & {2\gamma\eta_T} \cdot\E\left[\sum_{j=1}^T f(x^*,y_j) + \left<\Gamma^{(x^*)},\alpha_{T+1}\right> \right] +{2\gamma\sum_{t=1}^T \eta_t \left({\frac{1}{\eta_{t+1}}-\frac{1}{\eta_{t}}}\right)\E\left[\max_{i\in[K]} \Gamma^{i}\widehat{\alpha}\right]}\\
     {} & + 2\gamma\sum_{t=1}^{T-1}(\eta_{t-1}-\eta_t)\cdot\E\left[\sum_{j=1}^{t-1}f(x^*,y_j)+\left<\Gamma^{(x^*)},\alpha_t\right>\right]
    +{2\gamma \eta_1}\cdot\E\left[\max\limits_{i\in[K]} \Gamma^{(i)}\widehat{\alpha}\right]\\
    \stackrel{\rm (4)}{\leq} {} & 2\gamma \eta_T L_T^* + 2\gamma \sum_{t=1}^{T-1} (\eta_{t-1}-\eta_{t})L_{t-1}^{*} + 2\gamma \sum_{t=1}^{T-1}(\eta_{t-1}-\eta_{t})\E\left[\max_{i\in[K]} \Gamma^{i}\widehat{\alpha}\right]\\
    {} & +{2\gamma\sum_{t=1}^T \eta_t \left({\frac{1}{\eta_{t+1}}-\frac{1}{\eta_{t}}}\right)\E\left[\max_{i\in[K]} \Gamma^{i}\widehat{\alpha}\right]}+4\gamma \eta_1 \E\left[\max_{i\in[K]} \Gamma^{i}\widehat{\alpha}\right],
    \end{split}
\end{equation*}
where inequality $\rm(1)$ follows from \eqref{eqn:proof:lemma2:temp2}, inequality $\rm(2)$ is because $\widehat{\alpha}=-\alpha$ and $\widehat{\alpha}$ is non-negative, inequaliry $\rm(3)$ is due the the optimality of $x_t$, and the final inequality $\rm(4)$ is based on the non-negativity of $\widehat{\alpha}$. Note that there are some extra terms since the distribution is no longer zero-mean. To proceed, the second term can be upper bounded by \eqref{eqn:theorem:1:sec:13}. For the third term, similar to \eqref{eqn:theorem:1:exp:max}, we have $\forall \lambda\leq 1/2$, 
\begin{equation}
    \begin{split}
    \label{eqn:negative:imp:expt}
   \E\left[\max_{i\in[K]} \Gamma^{i}\widehat{\alpha}\right]\leq  \frac{1}{\lambda} 
        \ln\left(K\left(\frac{1}{1-\lambda}\right)^{N}\right)
        = \frac{\ln K}{\lambda} + \frac{N}{\lambda}\ln\left(\frac{1}{1-\lambda}\right)        
       \leq 4\max\{\ln K,N\},
    \end{split}
\end{equation}
where in the first inequality we make use of the moment generating function of the exponential distribution. Thus, we have 
$$2\gamma \sum_{t=1}^{T-1}(\eta_{t-1}-\eta_t)\E\left[\max_{i\in[K]} \Gamma^{i}\widehat{\alpha}\right]\leq 8\max\{\ln K,N\},$$
and 
$${2\gamma\sum_{t=1}^T \eta_t \left({\frac{1}{\eta_{t+1}}-\frac{1}{\eta_{t}}}\right)\E\left[\max_{i\in[K]} \Gamma^{i}\widehat{\alpha}\right]}\leq \frac{8\max\{\ln K,N\}}{\eta_{T+1}}.$$
The proof can be finished by applying \eqref{eqn:negative:imp:expt} and similar techniques as in the proof of Lemma \ref{lem:inapp:emax} to bound \textsc{term 2}. 
\end{proof}
Finally, we note that, as shown in the proof of  Lemma \ref{lem:level}, the approximable vector for the level auction problem is element-wise non-negative (0-1 vector), so Theorem \ref{thm:negative} can be directly  applied. In the following, we show that 
this conclusion can also be applied to the online welfare maximization for multi-unit items. 
\begin{lemma}
For multi-unit online welfare maximization, there exists an approximable vector $s$ with non-negative entries $\forall a^{(k)}\in\mathcal{X}', k\in[K]$.
\end{lemma}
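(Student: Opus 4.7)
The approach will be to leverage the thermometer-encoding structure of $\Gamma$ for multi-unit welfare maximization and exhibit a non-negative approximability vector explicitly. Recalling that each row encodes an allocation $a^{(k)} = [g_{\tau_1^{(k)}}, \ldots, g_{\tau_n^{(k)}}]$ via $\Gamma^{(k, (j-1)|\mathcal{A}|+\ell)} = \II[\tau_j^{(k)} > \ell]$, my plan is to simply set $s := \Gamma^{(k)}$, which lies in $\{0,1\}^{n|\mathcal{A}|}$ and is therefore automatically non-negative. I will verify the stronger sufficient condition from Lemma~\ref{ass:single-out-2}, namely $\langle \Gamma^{(k)} - \Gamma^{(j)}, s \rangle \geq 1$ for every $j \neq k$, which then gives $\gamma$-approximability with $\gamma = \|s\|_1 \leq n|\mathcal{A}| = O(n^3)$.

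The key computation is to expand the inner product block-by-block across the $n$ bidders, using $\II[\tau > \ell]\,\II[\tau > \ell] = \II[\tau > \ell]$; this will yield
\[
\langle \Gamma^{(k)} - \Gamma^{(j)}, \Gamma^{(k)} \rangle \;=\; \sum_{i=1}^{n} \max\left(0,\, \tau_i^{(k)} - \tau_i^{(j)}\right),
\]
since each thermometer block contributes one count per index $\ell$ with $\tau_i^{(j)} \leq \ell < \tau_i^{(k)}$.

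The crux of the proof is then showing this integer sum is at least $1$ whenever $a^{(k)} \neq a^{(j)}$. Here I will exploit the item-conservation constraint that defines $\mathcal{X}'$: every feasible allocation satisfies $\sum_i g_{\tau_i} = h$. If the sum were $0$, then $\tau_i^{(k)} \leq \tau_i^{(j)}$ for every $i$, with strict inequality at some coordinate (since $\tau^{(k)} \neq \tau^{(j)}$); the strict monotonicity of $g_1 < g_2 < \cdots < g_{|\mathcal{A}|}$ would then force $\sum_i g_{\tau_i^{(k)}} < \sum_i g_{\tau_i^{(j)}}$, contradicting the common total $h$. Hence some coordinate satisfies $\tau_i^{(k)} > \tau_i^{(j)}$ and the integer sum is at least $1$, which in turn dominates $f(x^{(k)}, y) - f(x^{(j)}, y) \in [-1,1]$ for any bidder profile $y$ after the standard reward scaling.

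The main conceptual hurdle is that the canonical $s = 2\Gamma^{(k)} - 1$ construction used in Lemma~\ref{lem:binarygamma} for general binary PTMs has sign-indefinite entries and therefore cannot be paired with the negative-exponential noise in Algorithm~\ref{alg:GFTPL:44}. Using $s = \Gamma^{(k)}$ instead loses the symmetric ``difference-counting'' property---only coordinates where $\Gamma^{(k,\cdot)}=1$ and $\Gamma^{(j,\cdot)}=0$ contribute to the inner product---so the distinctness argument cannot be purely combinatorial and must instead draw on the allocation structure of $\mathcal{X}'$. Once this is done, $s = \Gamma^{(k)}$ is the desired non-negative witness with $\ell_1$-norm $O(n^3)$, which via Theorem~\ref{thm:negative} yields the oracle-efficient small-loss bound even when the offline oracle accepts only non-negative weights.
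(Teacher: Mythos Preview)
Your proposal is correct and follows essentially the same approach as the paper: both set $s=\Gamma^{(k)}$ and invoke the item-conservation constraint $\sum_i a^{(i)}=h$ to rule out $\Gamma^{(k)}\preceq\Gamma^{(j)}$, which guarantees a coordinate with $\Gamma^{(k,\ell)}=1$, $\Gamma^{(j,\ell)}=0$ and hence $\langle\Gamma^{(k)}-\Gamma^{(j)},\Gamma^{(k)}\rangle\geq 1$. Your version is slightly more explicit, computing the inner product exactly as $\sum_{i=1}^n\max(0,\tau_i^{(k)}-\tau_i^{(j)})$ via the thermometer structure, whereas the paper stops at the existence of a single contributing coordinate; the underlying idea is identical.
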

\begin{proof}
By Lemma \ref{ass:single-out-2}, it suffices to prove that for any $k\in[K]$ there exists non-negative $s$ such that
$
 \left<\Gamma^{(k)} - \Gamma^{(j)}, s \right> \geq 1.
$
For multi-unit online welfare maximization (as illustrated in Appendix \ref{app:proof:auction}), all $h$ items need to be allocated. There do not exist two rows $\Gamma^{(k)}$ and $\Gamma^{(k')}$ such that $\Gamma^{(k)}\preceq\Gamma^{(k')}$ because the corresponding allocations $a^{(k)}$ and $a^{(k')}$ also preserve this partial order relation, which means for allocation $a^{(k)}$ there are unassigned items. We can simply take $s=\Gamma^{(k)}$. Based on the aforementioned observation, there exists at least one index $\ell\in [N]$ such that $\Gamma^{(k,\ell)}=1$ and $\Gamma^{(j,\ell)}=0$, and thus
$
\textstyle \left<\Gamma^{(k)} - \Gamma^{(j)}, s \right> \geq 1.
$

\end{proof}

\section{Proof of Theorem \ref{thm:off}}
\label{E}

In this section we prove Theorem~\ref{thm:off}, which is our oracle-efficient ``best-of-both-worlds" bound, assuming that the adversary is oblivious. 
Then, based on the definitions, we know  $\widehat{U}_T^{\FTL}$ and  $\widehat{U}_T^{\FPL}$
only depend on the adversary (i.e., the past losses), and is independent of the randomness of the algorithm. 
This also applies to  $\I_{T}^{\FTL}$ and $\I_{T}^{\FPL}$.

The regret can be decomposed into two parts:
\begin{equation}
\begin{split}
    R_T^{\OFF} = {} & \E\left[\sum_{t=1}^T f(x_t,y_t) - \sum_{t=1}^T f(x^*,y_t)\right] \\
    \leq  {} & \E\left[\sum_{t\in\I_T^{\FTL}}f(x_t,y_t) - f(x_T^{\FTL,*})\right]+ \E\left[\sum_{t\in\I_T^{\FPL}}f(x_t,y_t) - f(x_T^{\FPL,*})\right]\\
    \leq {} & \widehat{U}_T^{\FTL} +  \widehat{U}_T^{\FPL},
\end{split}
\end{equation}
where $$x_T^{\FTL,*}=\argmin\limits_{i\in[K]} \sum_{t\in\I_T^{\FTL}}f(x^{(i)},y_t),$$
and 
 $$x_T^{\FPL,*}=\argmin\limits_{i\in[K]} \sum_{t\in\I_T^{\FPL}}f(x^{(i)},y_t).$$
  
In round $T$, there are four possible cases:
\begin{itemize}
    \item Case 1: $\Alg_T=\FTL$, and $\Alg_{T+1}=\FTL$.\\
Since after round $T$, the algorithm does not switch, we have
$\widehat{U}_T^{\FTL}\leq \alpha \widehat{U}_T^{\FPL}$ based on lines 4-8. On the other hand, let $t'$ be the last round where the algorithm performs $\FPL$, that is, $\Alg_{t'+1}=\FTL$. Then, in round $t'-1$, if we do the switch $\FTL\rightarrow\FPL$,
then  
$$\alpha\widehat{U}_{t'-1}^{\FPL}\leq \widehat{U}_{t'-1}^{\FTL}.$$
Moreover, note that $\widehat{U}_{t}^{\FTL}$ and $\widehat{U}_{t}^{\FPL}$ are non-decreasing, and also $\widehat{U}_{t'-1}^{\FPL}\geq \widehat{U}_{t'}^{\FPL}-\tau$. Combining with the fact that $\widehat{U}_{t'}^{\FPL}=\widehat{U}_{T}^{\FPL}$ (since we do not feed losses to \FPL\ from round $t'$ to $T$), we have 
$$\alpha(\widehat{U}_T^{\FPL}-\tau)= \alpha(\widehat{U}_{t'}^{\FPL}-\tau)\leq \alpha \widehat{U}_{t'-1}^{\FPL}\leq \widehat{U}_{t'-1}^{\FTL}\leq \widehat{U}_T^{\FTL},$$
so 
$$\widehat{U}_T^{\FPL}\leq \frac{1}{\alpha}\widehat{U}_T^{\FTL}+\tau. $$
If in round $t'-1$ we use \FPL\ and do not switch, then we have 
$$\frac{1}{\beta}\widehat{U}_{t'-1}^{\FPL}\leq \widehat{U}_{t'-1}^{\FTL},$$
thus
$$\frac{1}{\beta}(\widehat{U}_T^{\FPL}-\tau)= \frac{1}{\beta}(\widehat{U}_{t'}^{\FPL}-\tau)\leq \frac{1}{\beta} \widehat{U}_{t'-1}^{\FPL}\leq \widehat{U}_{t'-1}^{\FTL}\leq \widehat{U}_T^{\FTL},$$
which implies that 
$$ \widehat{U}_T^{\FPL}\leq \beta\widehat{U}_T^{\FTL}+\tau. $$
\\
\item Case 2: $\Alg_T=\FTL$, and $\Alg_{T+1}=\FPL$. \\
Since after round $T$, we have $\FTL\rightarrow\FPL$,
we get $\widehat{U}_T^{\FTL}> \alpha \widehat{U}_T^{\FPL}$ based on lines 4-8. 
On the other hand, We know that $\Alg_{T}=\FTL$, so 
after round $T-1$, there are 2 possibilities: 1) the algorithm remains to be $\FTL$. For this case, we have 
$$ \widehat{U}^{\FTL}_{T}-1\leq \widehat{U}^{\FTL}_{T-1}\leq \alpha \widehat{U}_{T-1}^{\FPL}\leq \alpha \widehat{U}_{T}^{\FPL},$$
where we use the fact that the mixability gap $\delta_t\leq 1$.
It yields
$$\widehat{U}_{T}^{\FTL}\leq \alpha \widehat{U}_T^{\FPL}+1. $$
2) The algorithm switches from $\FPL\rightarrow\FTL$. For this case, we have 
$$ \widehat{U}^{\FTL}_{T}-1\leq \widehat{U}^{\FTL}_{T-1}\leq \frac{1}{\beta} \widehat{U}_{T-1}^{\FPL}\leq \frac{1}{\beta} \widehat{U}_{T}^{\FPL},$$
so
$$\widehat{U}_{T}^{\FTL}\leq \frac{1}{\beta} \widehat{U}_T^{\FPL}+1. $$
$$ $$
\item Case 3: $\Alg_T=\FPL$, and $\Alg_{T+1}=\FTL$.\\
Since after round $T$, we switch from $\FPL\rightarrow\FTL$, we have $\widehat{U}_T^{\FTL}\leq \frac{1}{\beta}\widehat{U}_T^{\FPL}$. On the other hand, in round $T-1$, there are 2 cases: 1) After round $T-1$, we switch the algorithm: $\FTL\rightarrow\FPL$. Thus, 
$$\widehat{U}_{T}^{\FTL}\geq\widehat{U}_{T-1}^{\FTL}\geq \alpha \widehat{U}_{T-1}^{\FPL}\geq \alpha( \widehat{U}^{\FPL}_{T}-\tau),$$ 
implying that 
$$ \widehat{U}_T^{\FPL}\leq \frac{1}{\alpha}\widehat{U}_T^{\FTL}+\tau.$$
2) After round $T-1$, the algorithm does not switch: $\FPL\rightarrow\FPL$. Thus, 
$$\frac{1}{\beta}(\widehat{U}_{T}^{\FPL}-\tau)\leq \frac{1}{\beta}\widehat{U}_{T-1}^{\FPL}\leq \widehat{U}_{T-1}^{\FTL}\leq \widehat{U}_{T}^{\FTL},$$
so $\widehat{U}_T^{\FPL}\leq \beta \widehat{U}_T^{\FTL}+\tau$.
\item Case 4: $\Alg_T=\FPL$, $\Alg_{T+1}=\FPL$.\\
For this case, after round $T$, we have 
$$\widehat{U}_T^{\FPL}\leq \beta \widehat{U}_T^{\FTL}.$$
On the other hand, let $t'$ be the last round of the algorithm that plays $\FTL$. So in round $t'-1$, there are 2 possible cases: 1) After $t'-1$, we switch from $\FPL\rightarrow\FTL$. In this case, we have:
$$ \beta(\widehat{U}^{\FTL}_{T}-1)=\beta(\widehat{U}^{\FTL}_{t'}-1)\leq\beta\widehat{U}_{t'-1}^{\FTL}\leq \widehat{U}_{t'-1}^{\FPL}\leq\widehat{U}_{T}^{\FPL},$$
so 
$$\widehat{U}_T^{\FTL}\leq \frac{1}{\beta}\widehat{U}_T^{\FPL}+1.$$
2) After round $t'-1$, we still play $\FTL$. Then, we have 
$$ \widehat{U}_T^{\FTL}-1 =\widehat{U}_{t'}^{\FTL}-1\leq\widehat{U}_{t'-1}^{\FTL}\leq \alpha \widehat{U}_{t'-1}^{\FPL}\leq \alpha\widehat{U}_{T}^{\FPL},$$
so 
$$\widehat{U}_T^{\FTL}\leq \alpha\widehat{U}_{T}^{\FPL}+1.$$
\end{itemize}
Combining all of the pieces, we always have 
$$ \widehat{U}_T^{\FTL}\leq (\alpha+\frac{1}{\beta})\widehat{U}_T^{\FPL}+1,$$
and 
$$ \widehat{U}_T^{\FPL}\leq (\frac{1}{\alpha}+{\beta})\widehat{U}_T^{\FTL}+\tau.$$
Finally, note that based on the definition, it is straightforward to get $\widehat{U}_T^{\FPL}\leq{U}_T^{\FPL}$ and $\widehat{U}_T^{\FTL}\leq{U}_T^{\FTL}$.
Therefore, 
we have 
$$ R_T^{\OFF}\leq \min\left\{\left(1+\alpha+\frac{1}{\beta}\right)U_T^{\FPL}+1, \left(1+\frac{1}{\alpha}+\beta\right){U}_T^{\FTL}+\tau\right\}.$$
Setting $\alpha=\beta=1$ yields the required theorem.
\qed

\end{appendix}

\end{document}